\titleformat*{\paragraph}{\bfseries}
\pgfplotsset{compat=1.17}
\definecolor[named]{ACMBlue}{cmyk}{1,0.1,0,0.1}
\definecolor[named]{ACMYellow}{cmyk}{0,0.16,1,0}
\definecolor[named]{ACMOrange}{cmyk}{0,0.42,1,0.01}
\definecolor[named]{ACMRed}{cmyk}{0,0.90,0.86,0}
\definecolor[named]{ACMLightBlue}{cmyk}{0.49,0.01,0,0}
\definecolor[named]{ACMGreen}{cmyk}{0.20,0,1,0.19}
\definecolor[named]{ACMPurple}{cmyk}{0.55,1,0,0.15}
\definecolor[named]{ACMDarkBlue}{cmyk}{1,0.58,0,0.21}
\crefname{ineq}{Inequality}{Inequality}
\crefname{sub}{Subsection}{Subsection}
\crefname{sdp}{SDP}{SDP}
\crefname{lp}{LP}{LP}
\crefname{ineq}{Inequality}{Inequality}
\crefname{sub}{Subsection}{Subsection}
\crefname{sdp}{SDP}{SDP}
\crefname{lp}{LP}{LP}
\newcommand{\supp}{\mathsf{supp}}
\newcommand{\slr}{\mathsf{SLR}}
\newcommand{\noise}{\bm \xi}
\newcommand{\hyp}{{\widehat{\vec{w}}}}
\newcommand{\labels}{\vec{y}}
\newtheorem{theorem}{Theorem}[section]
\newtheorem{lemma}[theorem]{Lemma}
\newtheorem{informal theorem}[theorem]{Theorem (informal statement)}
\newtheorem{claim}[theorem]{Claim}
\newtheorem{remark}[theorem]{Remark}
\newtheorem{definition}[theorem]{Definition}
\newcommand\twonorm[1]{\|#1\|_2}
\newcommand\norm[1]{\left\| #1 \right\|}
\renewcommand\vec[1]{\mathbf{#1}}
\newcommand{\pr}{\mathbb{P}}
\DeclareMathOperator*{\E}{\mathbb{E}}
\def\multichoose#1#2{\ensuremath{\left(\kern-.3em\left(\genfrac{}{}{0pt}{}{#1}{#2}\right)\kern-.3em\right)}}
\newcommand{\R}{\mathbb{R}}
\newcommand{\eps}{\epsilon}
\newcommand{\poly}{\mathrm{poly}}
\newcommand{\x}{\vec x}
\newcommand{\vv}{\vec v}
\newcommand{\vu}{\vec u}
\newcommand{\vw}{\vec w}
\newcommand{\z}{\vec z}
\newcommand{\uopt}{\vec{u}^*}
\newcommand{\colspan}[1]{\mathsf{colspan}(#1)}
\newcommand{\vsigma}{\vec{\Sigma}}
\newcommand{\deltain}{\delta^{\mathrm{in}}}
\newcommand{\kappain}{\kappa^{\mathrm{in}}}
\newcommand{\normset}[1]{\mathcal{B}(#1)}
\newcommand{\iin}{I^{\mathrm{in}}}
\newcommand{\bout}{\vec{B}^{\mathrm{out}}}
\newcommand{\bin}{{\vec{B}^{\mathrm{in}}}}
\newcommand{\btemp}{\vec{B}^{\mathrm{temp}}}
\newcommand{\itemp}{I^{\mathrm{temp}}}
\newcommand{\ztemp}{\vec{z}^{\mathrm{temp}}}
\newcommand{\iout}{I^{\mathrm{out}}}
\newcommand{\improvenorm}{\mathsf{ImproveNorm}}
\newcommand{\Gauss}{\mathcal{N}}
\newcommand{\datadistribution}{D}
\newcommand{\wopt}{{\vw}^*}
\newcommand{\dataset}{\mathbf{X}}
\newcommand{\inv}{\mathsf{INV}}
\title{Sparse Linear Regression is Easy on Random Supports}
\date{}
\author{Gautam Chandrasekaran\thanks{\texttt{gautamc@cs.utexas.edu}. Supported by the NSF AI Institute for Foundations of Machine Learning ( IFML).} \\
	 UT Austin
     \and
     Raghu Meka\thanks{\texttt{raghum@cs.ucla.edu}. Supported by the NSF Award CCF-2217033 (EnCORE: Institute for Emerging CORE Methods in Data Science).}\\
     UCLA
     \and 
     Konstantinos Stavropoulos\thanks{\texttt{kstavrop@utexas.edu}. Supported by the NSF AI Institute for Foundations of Machine Learning (IFML), and by the Apple Scholars in AI/ML PhD fellowship.}\\
     UT Austin
    }
\begin{document}
\pagenumbering{gobble} 
\maketitle{}
\begin{abstract}
Sparse linear regression is one of the most basic questions in machine learning and statistics. Here, we are given as input a \emph{design matrix} $\vec{X} \in \R^{N \times d}$ and \emph{measurements} or \emph{labels} $\vec{y} \in \R^N$ where $\vec{y} = \vec{X} \vec{w}^* + \bm \xi$, and $\bm \xi$ is the noise in the measurements. Importantly, we have the additional constraint that the unknown \emph{signal vector} $\vec{w}^*$ is sparse: it has $k$ non-zero entries where $k$ is much smaller than the ambient dimension. Our goal is to output a prediction vector $\widehat{\vec{w}}$ that has small prediction error: $\frac{1}{N}\cdot \|\dataset \vec{w}^* - \dataset \widehat{\vec{w}}\|^2_2$. 

Information-theoretically, we know what is best possible in terms of measurements: under most natural noise distributions, we can get  prediction error at most $\epsilon$ with roughly $N = O(k \log d/\eps)$ samples. Computationally, this currently needs $d^{\Omega(k)}$ run-time. Alternately, with $N = O(d)$, we can get polynomial-time. Thus, there is an exponential gap (in the dependence on $d$) between the two and we do not know if it is possible to get $d^{o(k)}$ run-time and $o(d)$ samples. 

We give the first generic positive result for worst-case design matrices $\vec{X}$: For any $\vec{X}$, we show that if the support of $\vec{w}^*$ is chosen at random, we can get prediction error $\epsilon$ with  $N = \poly(k, \log d, 1/\eps)$ samples and run-time $\poly(d,N)$. This run-time holds for any design matrix $\vec{X}$ with condition number up to $2^{\poly(d)}$. 

Previously, such results were known for worst-case $\vec{w}^*$, but only for random design matrices from well-behaved families, matrices that have a very low condition number ($\poly(\log d)$; e.g., as studied in compressed sensing), or those with special structural properties.

\end{abstract}

\newpage
\pagenumbering{arabic}   
\setcounter{page}{1}      
\section{Introduction}
The problem of \emph{sparse-linear regression} (SLR) is one of the most basic questions in machine learning and statistics. 
Formally, we are given a \textit{design matrix} $\dataset\in \R^{N\times d}$ and labels $\labels\in \R^{N}$ where 
\[
\labels=\dataset\wopt+\noise.
\] Here, $\wopt\in \R^{d}$ is a sparse vector: it only has $k \ll d$ non-zero entries, and $\noise$ denotes noise in the observations. Throughout this paper, we assume that $\noise$ is a mean zero $\sigma$-sub-gaussian random vector that is independent of $\dataset$. The goal is to output a vector $\widehat{\vw}\in \R^{d}$ such that the \textit{prediction error} 
\[
\frac{1}{N}\twonorm{\dataset\wopt-\dataset\widehat{\vw}}^2
\]
is small. This problem has seen extensive theoretical study \cite{tibshirani1996regression,candes2005decoding,candes2007dantzig,bickel2009simultaneous,van2009conditions,kelner2022power} as well as numerous practical applications \cite{levy1981reconstruction,santosa1986linear,wu2009genome,fan2011sparse,rish2014practical}.
\paragraph{Computational-Statistical gap} The main objects of study for us are the \textit{statistical} and \textit{computational} complexities of the problem. The statistical complexity (number of \emph{measurements} or rows in $\dataset$) of the problem is mostly settled: it is well known \cite{raskutti2011minimax} that one can achieve $\epsilon$ prediction error with sample complexity $N=O(\sigma k\log(d)/\epsilon)$. The computational complexity, on the other hand is surprisingly still wide open. This is one of the few problems where there is currently an exponential gap between the \emph{statistical threshold} (the number of samples needed to get small error information-theoretically) and the \emph{computational threshold} (the number of samples needed to get small error with polynomial-time algorithms). 

The natural algorithm that enumerates all subsets of size at most~$k$ hits the optimal sample complexity but takes time about~$d^{k}$, which is super-polynomial for any $k=\omega(1)$. With $N=O(d)$ samples, plain least-squares works. What’s unknown is whether we can do better: can we achieve prediction error~$0.1$ (for constant $\sigma$) with $N=o(d)$ samples and runtime~$d^{o(k)}$? Even more loosely, is there any polynomial-time algorithm that achieves error~$0.1$ with $N=o(d)$ for some $k=\omega(1)$, or is this computationally hard under natural assumptions?

\paragraph{Current lower bounds} There is partial evidence to suggest that polynomial time algorithms for sparse linear regression need $\Omega(d)$ samples. In particular, in the fixed design setting, it is known that the task of finding a sparse vector achieving low prediction error is computationally hard \cite{natarajan1995sparse,zhang2014lower,foster2015variable,gupte2021fine,gupte2024sparse}. This captures a subset of possible algorithms known as \textit{proper} learners. For \textit{improper} learners and for settings with (possibly correlated) random designs, existing lower bounds remain limited: they either make (1) very restrictive assumptions on the algorithm \cite{zhang2017optimal,kkmr_lb}, or (2) only give weak sample complexity lower bounds \cite{buhai2024computational,kelner2024lasso} (they only rule out sample complexity sub-quadratic in $k$). The latter two lower bounds are conditional, they assume variants of the low degree conjecture \cite{hopkins2018statistical}. This scarcity of lower bounds against more general classes of algorithms is in stark contrast to other well studied problems in statistical inference like planted clique, where there are lower bounds known against strong classes of algorithms such as Statistical Query (SQ) \cite{feldman2017statistical} or Sum of Squares (SoS) hierarchies \cite{barak2019nearly}. In fact, for sparse linear regression in the random-design setting, there is not even a widely accepted conjectured hard instance.

\paragraph{Known algorithms} On the algorithmic side, all polynomial time algorithms that achieve sample complexity $o(d)$ make additional assumptions on the design matrix $\dataset$. Most of the classic approaches such as the celebrated \emph{Lasso} estimator \cite{tibshirani1996regression} and the Dantzig selector \cite{candes2007dantzig} assume that $\dataset$ satisfies some form of well-conditioning. The highly-influential works on \emph{compressed sensing} show how to solve sparse-linear regression efficiently when $\vec{X}$ is well-conditioned in some quantitative ways such as  \textit{incoherence} \cite{donoho1989uncertainty}, the \textit{restricted isometry property} \cite{candes2005decoding}, the \textit{restricted eigen value condition} \cite{bickel2009simultaneous}, the \textit{compatibility condition} \cite{van2009conditions} etc. More recent work has considered ways to relax these well-conditioning assumptions by studying alternate structural properties of the covariance matrices of the data, such as having only few outlier eigenvalues or \emph{precision matrices} (inverse of the covariance matrix) with low tree-width \cite{kelner2022power,kelner2023feature,kelner2024lasso}.

A common theme in all the above results is that they make (often quite strong) assumptions on the design matrix $\dataset$, and give efficient algorithms that achieve low prediction error for all sparse regression vectors. We argue that making such strong assumptions on the design matrix is undesirable, as the design matrix is not something that the algorithm designer has control over. We flip the script on this theme: we make no assumption on the design matrix\footnote{we only have a doubly logarithmic dependence on the condition number, which remains logarithmic in $d$ when the bit complexity of $\dataset$ is $\poly(d)$.}, but assume that the true sparse regression vector is not worst-case. In particular, we study the case where the sparse support of the regression vector is chosen uniformly at random.

\subsection*{Our results}

Our main result shows that the widely-believed exponential computational-statistical gap for sparse linear regression vanishes under a natural average case model, where the support of the regression vector is chosen uniformly at random. 
Before stating our result formally, we define the sparse condition number of a matrix.
\begin{definition}[$k$-sparse condition number of a dataset]
\label{defn:k_sparse_cd}
  Given a dataset $\dataset\in \R^{N\times d}$, the $k$-sparse condition number  $\kappa(\dataset)$ is defined as 
\begin{equation}
    \kappa(\dataset)\coloneq \max_{\substack{\norm{\vw}_0\leq k\\\norm{\dataset\cdot \vw}_2=\sqrt{N}}}\twonorm{\vw}
\end{equation}  
\end{definition}

Now, we are ready to state our main result saying that we can efficiently get prediction error $\eps$ with sample complexity $\poly(k, \log \log \kappa(X), \log d, 1/\eps)$ for random supports and sub-Gaussian noise distributions.

\begin{theorem}
\label{thm:low_training_error}
There exists a randomized algorithm $\mathcal{A}$ such that for any $\dataset$ satisfying $\max_{i\in [d]}\twonorm{\dataset^{(i)}}\leq \sqrt{N}$, the following holds: if $S$ is drawn uniformly from $[d]$ with $|S|=k$, $\wopt$ is any vector supported on $S$ with $\twonorm{\dataset\wopt}\leq\sqrt{N}$, and  $\labels=\dataset \vw^*+\noise$  with mean zero $\sigma$-sub-gaussian $\noise$ independent of $\dataset$, then with probability at least $1-\delta$ over the support of $S$, and with probability $0.9$ over the noise and randomness of the algorithm, the vector $\widehat{\vw}=\mathcal{A}(\dataset,\labels)$ satisfies
  \[
   \frac{1}{N}\cdot \norm{\dataset\cdot (\wopt-\hyp)}_2^2\leq \frac{C\sigma k(\log \log \kappa(\dataset))}{\sqrt{N}} \cdot \big(\frac{k}{\sqrt{\delta}}+\sqrt{\log d}\big)
   \] for universal constant $C$. Furthermore, $\mathcal{A}(\dataset,\labels)$ runs in time $\poly(N,d,\log \kappa(\dataset),1/\delta)$.
\end{theorem}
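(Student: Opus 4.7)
The plan is to develop a convex-program-based estimator that bypasses the usual restricted isometry or restricted eigenvalue conditions by exploiting the randomness of $S$. At a high level the algorithm is a Lasso variant combined with a scale search, and the analysis pivots on a structural lemma about random $k$-subsets of the columns of an arbitrary $\dataset$.

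First, I would recall the standard slow-rate Lasso guarantee: with regularization $\lambda = \Theta(\sigma \sqrt{N \log d})$, the Lasso estimator $\hyp = \arg\min_\vw (\twonorm{\dataset \vw - \labels}^2 + \lambda \|\vw\|_1)$ satisfies
\[
\tfrac{1}{N}\twonorm{\dataset(\hyp - \wopt)}^2 = O(\|\wopt\|_1 \cdot \sigma \sqrt{\log d / N}),
\]
where the $\log d$ comes from the sub-gaussian tail of $\snorm{\infty}{\dataset^T \noise}$ (using $\twonorm{\dataset^{(i)}}\le\sqrt{N}$). The naive bound $\|\wopt\|_1 \le \sqrt{k}\kappa(\dataset)$ loses an entire factor of $\kappa$; the goal is to effectively replace $\|\wopt\|_1$ in this bound by a quantity that scales only polynomially with $k,1/\delta$, together with a $\log\log\kappa$ factor introduced by the search procedure.

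Second, I would prove a structural lemma: for any $\dataset$ with $\twonorm{\dataset^{(i)}}\le\sqrt{N}$ and a uniformly random $k$-subset $S$ of $[d]$, with probability at least $1-\delta$ over $S$, every $\wopt$ supported on $S$ with $\twonorm{\dataset\wopt}\le\sqrt{N}$ admits an alternative (possibly dense) representation $\tilde\wopt$ with $\dataset\tilde\wopt = \dataset\wopt$ and $\|\tilde\wopt\|_1 \le \poly(k/\delta)$. Intuitively, a typical random $k$-subset, taken together with the remaining $d-k$ columns, contains enough witnesses of linear dependence to rewrite $\wopt$ in low-$\ell_1$-norm form. I expect the proof to proceed by a dyadic stratification of the columns of $\dataset$ by redundancy depth, combined with a combinatorial averaging argument over $S$ that bounds the number of "bad" $k$-subsets at each stratum.

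Third, since $\|\tilde\wopt\|_1$ is unknown to the algorithm, I would run the constrained Lasso $\min_{\|\vw\|_1 \le B}\twonorm{\dataset\vw-\labels}^2$ on a doubly-geometric grid of candidates $B \in \{2^{2^0},\,2^{2^1},\,\ldots,\,2^{2^{O(\log\log\kappa)}}\}$, selecting the best using a held-out prediction-error check. Only $O(\log\log\kappa)$ scales are needed to bracket both the $\poly(k/\delta)$ target size from Step 2 and the worst-case $\sqrt{k}\kappa$ upper bound from Step 1, which is the origin of the final $\log\log\kappa$ factor. Combined with the slow-rate bound and the structural lemma this produces the stated prediction-error guarantee, and the runtime is $\poly(N,d,\log\kappa,1/\delta)$ since each constrained Lasso is a $\poly(\log\kappa)$-accuracy convex program.

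The principal obstacle I anticipate is the structural lemma in Step 2. Standard RIP or restricted eigenvalue arguments do not apply, since $\dataset$ can be exponentially ill-conditioned; instead one must reason directly with random principal submatrices of an arbitrary matrix. The delicate part is that the alternative representation $\tilde\wopt$ must exist uniformly over all $\wopt$ supported on $S$, which I expect will require a careful stratification of the columns coupled with a volumetric or Sauer--Shelah-type counting bound on the "bad" supports at each scale, together with a net argument that makes the uniformity quantitative with the right dependence on $k$ and $\delta$.
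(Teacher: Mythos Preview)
There is a genuine gap: the structural lemma in Step~2 is false as stated. Take $N=d$ and $\dataset=\sqrt{N}\,\vsigma^{1/2}$ where $\vsigma$ is block-diagonal with $t=\Theta(k^2/\delta)$ identical blocks of the form $(1-\epsilon)\vec{J}+\epsilon\vec{I}$ (this is the construction in the paper's Section~7). This $\dataset$ is full rank, so the only $\tilde\wopt$ with $\dataset\tilde\wopt=\dataset\wopt$ is $\wopt$ itself. Whenever $S$ contains two indices from the same block there is a $\wopt$ supported on $S$ with $\twonorm{\dataset\wopt}=\sqrt{N}$ and $\|\wopt\|_1=\Theta(1/\sqrt{\epsilon})=\Theta(\kappa(\dataset))$; a birthday-paradox computation shows the collision probability can be made strictly larger than $\delta$ for an appropriate constant in $t$. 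Hence no bound $\|\tilde\wopt\|_1\le\poly(k,1/\delta)$ can hold with probability $1-\delta$, and letting $\epsilon\to 0$ defeats any fixed polynomial. Your scale search in Step~3 does not rescue this: if the minimal $\ell_1$ norm is $\Theta(\kappa)$, the only grid value making the constrained Lasso feasible for $\wopt$ is $B=\Theta(\kappa)$, and the slow-rate bound then gives error $\Theta(\sigma\kappa\sqrt{\log d/N})$, not the target. (Your stated rationale for the $\log\log\kappa$ factor is also internally inconsistent: if Step~2 truly delivered a $\poly(k/\delta)$ bound, there would be nothing to search over.)

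The paper's route differs exactly where yours breaks. Rather than seek a globally low-$\ell_1$ representation, it constructs a \emph{preconditioner}: an invertible change of basis $\vec{B}$ together with an explicitly identified coordinate set $I$ of size $O(k^2(\log\log\kappa)^2/\delta)$ such that, for a $(1-\delta)$-fraction of supports, the transformed target $\uopt=\vec{B}^{-\top}\wopt$ satisfies $\|\uopt_{\bar I}\|_1=O(k)$, with \emph{no} control on $\uopt_I$. The regression step is then a hybrid---unconstrained least squares on $I$, $\ell_1$-constrained outside---whose analysis blends the ``known support'' and ``bounded $\ell_1$'' easy cases. The set $I$ is precisely the escape valve your lemma lacks; in the block example, roughly one representative per block ends up in $I$. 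Finally, the $\log\log\kappa$ factor does not come from any scale search: it arises because the preconditioner is built iteratively, each round \emph{square-rooting} the current $\kappa$ parameter (via a win-win argument on random $(k{-}1)$-subsets) at the cost of adding $O(k^2/\delta)$ indices to $I$, so $O(\log\log\kappa)$ rounds suffice to drive it to $O(1)$.
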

Note that without the requirement of efficiency, the best possible error-rate, that holds even for arbitrary sparse $w^*$, is $O(\sigma k (\log d)/N)$. Our error-rate matches this essentially up to polynomial factors for random supports.

In the above theorem, we achieve prediction error $\epsilon$ for \[N=O(\sigma^2k^4(\log \log \kappa(\dataset))^2\log (d)/(\epsilon^2\delta)).\] This doubly logarithmic dependence on the condition number $\kappa(\dataset)$ is a substantial improvement over all prior polynomial time algorithms and is mostly benign - in particular, we should expect it to be only $\poly(\log d)$ for most interesting cases (e.g., where the bit-complexity of $\dataset$ is bounded). The sample complexities of previous algorithms scale polynomially with condition number $\kappa(\dataset)$ (without imposing additional assumptions on $\dataset$). 

It is perhaps surprising that such a mild dependence on condition number is even possible. From an optimistic perspective, it provides evidence on the computational tractability of the problem well beyond the well-conditioned regime (at least for non-adversarial supports). The result also informs the search for lower bound instances; it  suggests that any lower-bound for the problem must involve a sparse regression vector that is adversarially coupled with the design matrix.
\begin{remark}\label{remark:delta-dependence}
Our polynomial dependence on $1/\delta$ in the runtime is probably tight. This is because any improved dependence scaling with $(1/\delta)^{o(1)}$ would imply a worst case SLR algorithm running in polynomial time for some $k=\omega(1)$, by setting $\delta=(1/d^k)$. This is believed to not be possible.    
\end{remark}
As a straightforward consequence of our techniques, we also get learnability in the random design setting, where the marginal distribution is $\Gauss(\vec{0},\vsigma)$ for highly ill-conditioned $\vsigma$. This has been a well studied problem in the literature (see \cite{kelner2022power} and references therein) and no positive results were known for general ill conditioned $\vsigma$ prior to our work. Before stating our result, we need a definition. 

\begin{definition}
   For positive definite $\vsigma$, $k$-sparse $\wopt$ and positive $\sigma$, we define the distribution $\slr(\vsigma,\wopt,\sigma)$ on $\R^{d}\times \R$ as the following: sample $\vec{x}\sim \Gauss(\vec{0},\vsigma)$ and $y= \wopt\cdot\vec{x}+\xi$ and output $(\vec{x},y)$. The noise $\xi$ is mean zero, $\sigma$-sub-gaussian and independent of $\vec{x}$.
\end{definition} 
For a positive semi-definite matrix $\vsigma\in \R^{d\times d}$, define $\kappa(\vsigma)\coloneq \max_{\norm{\vw}_{0}\leq k}\frac{\twonorm{\vw}}{\twonorm{\vsigma^{\frac{1}{2}}\vw}}$.
We are now ready to state our theorem on sparse linear regression over random designs with entries drawn from a correlated Gaussian. We give a polynomial time algorithm that gets prediction error $\epsilon$ with sample complexity $\poly(k,\log \log \kappa(\vsigma),\log d,1/\epsilon)$ for random supports and sub-Gaussian noise.
\begin{theorem}
\label{thm:low_population_error}
There exists a randomized algorithm such that for any positive semi-definite matrix $\vsigma$ with $\max_{i\in [d]}\vsigma_{ii}\leq 1$, the follow holds: if $S$ is a uniformly random subset of $[d]$ with size $|S|=k$, $\wopt$ is any vector supported on $S$ with $\E_{\vec{x}\sim \Gauss(\vec{0},\vsigma)}[(\wopt\cdot \vec{x})^2]\leq 1$, then the algorithm, after drawing $N=O(k^4(\log \log \kappa(\vsigma))^2\log (d)/(\epsilon^2\delta))$ i.i.d. samples from $\slr(\vsigma,\wopt,\sigma)$, outputs $\widehat{\vw}\in \R^d$ such that 
   \[ 
    \E_{\vec{x}\sim \Gauss(\vec{0},\vsigma)}[(\hyp\cdot \vec{x}-\wopt\cdot \vec{x})^2]\leq \epsilon
   \] with probability at least $1-\delta$ over the support $S$ of $\wopt$, and with probability at least $0.9$ over the samples and randomness of the algorithm. Furthermore, the algorithm runs in time $\poly(N,d,\log \kappa(\vsigma),1/\delta)$.
 \end{theorem}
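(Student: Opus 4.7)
The plan is to reduce Theorem~\ref{thm:low_population_error} to Theorem~\ref{thm:low_training_error} by drawing $N$ samples from $\slr(\vsigma,\wopt,\sigma)$ to form an empirical design matrix, applying the fixed-design algorithm, and converting training error to population error via restricted-eigenvalue-type concentration on sparse Gaussian samples.

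First, I draw $N = \Theta(k^4(\log\log\kappa(\vsigma))^2\log(d)/(\epsilon^2\delta))$ i.i.d.\ pairs $(\vec{x}_i,y_i)\sim \slr(\vsigma,\wopt,\sigma)$ and stack the $\vec{x}_i$'s as rows of a matrix $\vec{X}\in\R^{N\times d}$. Since $\vsigma_{ii}\le 1$, subgaussian column-norm concentration gives $\max_i \twonorm{\vec{X}^{(i)}}\le O(\sqrt N)$ with probability $1-1/\poly(d)$, so after rescaling by a universal constant (absorbable into $\sigma$ and the other bounds), the column-norm hypothesis of Theorem~\ref{thm:low_training_error} is satisfied. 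Next I compare $\kappa(\vec{X})$ with $\kappa(\vsigma)$: for any fixed $k$-sparse $\vec{w}$, the quantity $\frac{1}{N}\twonorm{\vec{X}\vec{w}}^2$ is a scaled chi-squared with mean $\vec{w}^\top\vsigma\vec{w}$, and a standard net over $k$-sparse unit-$\vsigma$-norm vectors (metric entropy $O(k\log d)$) combined with Bernstein-type concentration shows $\kappa(\vec{X})=O(\kappa(\vsigma))$ once $N = \poly(k,\log d)$. Because the dependence on $\kappa$ in Theorem~\ref{thm:low_training_error} is doubly logarithmic, this blow-up only affects constants.

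Applying Theorem~\ref{thm:low_training_error} to $(\vec{X},\vec{y})$ then yields $\hyp$ with $\frac{1}{N}\twonorm{\vec{X}(\wopt-\hyp)}^2 \le \epsilon/C$ for a suitable constant $C$, with the prescribed probability guarantees over $S$, the noise, and the algorithm's randomness. To transfer this to the population bound $(\wopt-\hyp)^\top\vsigma(\wopt-\hyp) \le \epsilon$, I use the following restricted-eigenvalue fact for Gaussian designs: for any fixed $k'$-sparse $\vec{v}$, $\frac{1}{N}\twonorm{\vec{X}\vec{v}}^2$ concentrates multiplicatively around $\vec{v}^\top\vsigma\vec{v}$, and an $\epsilon$-net union bound over the $\binom{d}{k'}$ possible supports gives $\frac{1}{N}\twonorm{\vec{X}\vec{v}}^2 \ge \frac{1}{2}\vec{v}^\top\vsigma\vec{v}$ uniformly over all $k'$-sparse $\vec{v}$, with $N=\poly(k',\log d)$ samples sufficing.

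The main obstacle is ensuring $\hyp-\wopt$ is $\poly(k,\log d)$-sparse so that the uniform restricted-eigenvalue bound applies. Because $\wopt$ is $k$-sparse, this reduces to showing $\hyp$ is effectively $\poly(k,\log d)$-sparse. If the algorithm of Theorem~\ref{thm:low_training_error} does not natively produce such an output, I would post-process by hard-thresholding $\hyp$ to its largest-magnitude coordinates and re-solving ordinary least squares on the resulting candidate support; the extra sparsification error can be controlled by combining the training-error bound with the column-norm hypothesis on $\vec{X}$. Once sparsity is in hand, the restricted-eigenvalue step completes the proof and yields the claimed population error bound with the stated sample complexity.
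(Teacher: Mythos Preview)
Your reduction to Theorem~\ref{thm:low_training_error} and the accompanying condition-number transfer are fine; the genuine gap is the generalization step, and your proposed fix does not close it.

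The output $\hyp$ of Theorem~\ref{thm:low_training_error} is \emph{not} sparse. Unwinding the construction, $\hyp = \vec{B}^\top \widehat{\vu}$ where $\widehat{\vu}$ is the solution of a convex program with an $\ell_1$ constraint on $\widehat{\vu}_{\bar I}$ (see Theorem~\ref{thm:partial_lasso}); neither $\widehat{\vu}$ nor $\vec{B}^\top\widehat{\vu}$ has any sparsity guarantee. Your thresholding remedy does not work: turning a dense predictor with small in-sample error into a $\poly(k,\log d)$-sparse one with comparable in-sample error is precisely the worst-case proper SLR problem you are trying to avoid, and the claim that ``the extra sparsification error can be controlled by combining the training-error bound with the column-norm hypothesis'' is unsupported. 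Concretely, if $\hyp'$ keeps only the top coordinates of $\hyp$, you have no handle on $\frac{1}{N}\|\dataset(\hyp-\hyp')\|_2^2$, because large-magnitude coordinates of $\hyp$ need not align with directions that matter under $\dataset$ (or $\vsigma$).

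The paper instead exploits the only structure $\hyp$ actually has: after the change of basis, $\widehat{\vu}=\vec{B}^{-\top}\hyp$ satisfies $\|\widehat{\vu}_{\bar I}\|_1\le O(k)$ and $\frac{1}{\sqrt N}\|\vec{Z}\widehat{\vu}\|_2\le O(1)$, the same constraints obeyed by $\uopt$. It then proves a generalization bound tailored to this class (Theorem~\ref{theorem:generalization}), which controls $\|\widetilde{\vsigma}^{1/2}(\widehat{\vu}-\uopt)\|_2^2$ uniformly over all $\vu$ with bounded $\ell_1$ norm outside $I$ and bounded empirical norm. The argument splits $(\vu-\uopt)\cdot\x$ into the $I$-part (handled via a $|I|$-dimensional chi-squared bound) and the $\bar I$-part (handled via $\|\x\|_\infty\cdot\|\cdot\|_1$), and then invokes an optimistic-rates uniform convergence theorem. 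Substituting back through $\vec{B}$ gives the population bound. Replacing your restricted-eigenvalue step with this $\ell_1$-structured generalization argument is what is needed; the sparsity route cannot be salvaged here.
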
  
One important comment about the strength of the above theorem is that the algorithm does not require knowledge of the covariance $\vsigma$.

\paragraph{Probability of Success} In the statements of our main results (\Cref{thm:low_training_error,thm:low_population_error}) we distinguish between (1) the probability of failure $\delta$ due to the random choice of the support of $\vw^*$ and (2) the probability of failure $\gamma$ due to the randomness of the algorithm, together with the randomness of the noise $\noise$. This distinction is important, as the dependence of our runtime on $\delta$ cannot be significantly improved without solving a worst-case problem (see \Cref{remark:delta-dependence}). On the other hand, we can boost the success probability (over randomness of the algorithm and label noise) to $1-\gamma$ by paying a multiplicative $\log(1/\gamma)$ factor in the runtime and sample complexity. This is because the success probabilities of all the technical ingredients of our algorithm can be amplified (see \Cref{thm:improve_norm_single_step,thm:good_basis_implies_low_error,theorem:generalization}). 

\section{Notation}
 We use capitalized boldened letters such as $\bf X,\bf Y$ to denote matrices. We use uncapitalized bold letters to denote vectors. For a matrix $\vec{X}$, we use $\vec{X}^i$ to denote the $i$-{th} power, $\vec{X}^{(i)}$ to denote the $i$-th column and $\vec{X}_{i}$ to denote the $i$-th row. We use square-bracket superscripts (e.g., $\vec{X}^{[i]}$) to denote the $i$-th element of a sequence. For example, $\vec{X}^{[1]},\ldots ,\vec{X}^{[t]}$ and $\alpha^{[1]},\ldots ,\alpha^{[t]}$ denote sequences of matrices and scalars. Given a set $S$, we denote by $\vec{X}_S$ the sub-matrix of $\vec{X}$ formed by the columns with indices in $S$. Similarly, for a vector $\vw$, we denote by $\vw_{S}$ the vector formed by the indices in $S$. $\vec{I}_{(d)}$ and $\vec{J}_{(d)}$ represent the identity matrix and all ones matrix of dimension $d$ respectively.

 For a vector $\vw$, we use $\supp(\vw)$ to denote the set of indices of $\vw$ that are non-zero. We denote the column span of a matrix $\vec{X}$ by $\colspan{\vec{X}}$. We denote with $\vec X^\top$ the transpose of matrix $\vec X$. For an invertible matrix $\vec B$, we denote with $\vec B^{-\top}$ the transpose of its inverse, i.e., $\vec B^{-\top} = ({\vec B}^{-1})^\top$.
 
 We say that a random variable $X\in \R$ is $\sigma$-sub-gaussian if $\Pr[|X|>t]\leq e^{-t^2/(2\sigma^2)}$. We say that a random vector $\vec{x}\in \R^{d}$ is $\sigma$-sub-gaussian if the following holds for all $\vec{w}\in \R^{d}$ with $\twonorm{\vw}\leq 1$: ($\vw\cdot \vec{x}$) is $\sigma$-sub-gaussian. 

 We use $\binom{[d]}{k}$ to denote the set of subsets of $[d]$ of size $k$. Given any set $S$, we use $X\sim S$ to denote random variable corresponding to the uniform distribution over the set.
\section{Proof Overview}
In this section, we present the overview of our proof. We recall some notation. Throughout this section $\dataset\in \R^{N\times d}$ is the input design matrix and $\wopt$ is the ground truth regression vector. We assume that $\max_{i\in [d]}\twonorm{\dataset^{(i)}}\leq \sqrt{N}$ and $\frac{1}{N}\twonorm{\dataset\wopt}^2\leq 1$. The input to the algorithm is $(\dataset,\labels)$ where $\labels=\dataset\wopt+\noise$. $\noise$ is a mean zero $\sigma$-sub-gaussian random vector independent of $\dataset$. For a set $S\subseteq [d]$, we define $\normset{S}$ as the set of vectors $\vw\in \R^{d}$ such that $\supp(\vw)\subseteq S$ and $\frac{1}{N}\cdot \twonorm{\dataset\vw}^2\leq 1$.
\subsection{Easy Cases for Sparse Linear Regression}
\label{sec:easy_cases}
First, we will overview two settings where we know polynomial time algorithms for sparse regression that achieve low prediction error with $N=o(d)$ samples. 
\paragraph{Known Support}
Perhaps the simplest instance of the sparse regression problem is when one knows a small set $S$ that contains the support of $\wopt$. In this case, it follows from standard arguments that ordinary least squares with support constrained to be in $S$ achieves prediction error $\epsilon$ for $N\ge N_{\mathrm{known}} = O(\sigma|S|/\epsilon)$ (e.g., see Theorem~2.2 from \cite{rigollet2023highdimensionalstatistics}).

\paragraph{Ground Truth has Low Norm}
Another case where we can efficiently solve sparse regression with low sample complexity is when $\norm{\wopt}_1$ is small. The classic "slow rate" analysis of the Lasso (see, for example, Theorem~2.15 from \cite{rigollet2023highdimensionalstatistics}, or Theorem~7.20 from \cite{wainwright2019high}) implies that the Lasso algorithm (constrained or penalty form) achieves prediction error $\epsilon$ whenever $N\ge N_1 = O(\sigma^2\norm{\wopt}_1^2\log (d)/\epsilon^2)$ and $\max_{i\in [d]}\twonorm{\dataset^{(i)}}\leq \sqrt{N}$.

The above result can be used to guarantee low prediction error for sparse regression when $\kappa(\dataset)$ is bounded. Consider $\wopt$ for which $\frac{1}{N}\twonorm{\dataset\wopt}^2\leq 1$. From \Cref{defn:k_sparse_cd}, we have that $\twonorm{\wopt}\leq \kappa(\dataset)$. This implies that $\norm{\wopt}_1\leq \sqrt{k}\cdot \kappa(\dataset)$. Thus, for $\wopt$ satisfying $\frac{1}{N}\twonorm{\dataset\wopt}^2\leq 1$, we have that Lasso achieves prediction error $\epsilon$ when $N\ge N_{\kappa}=O(\sigma^2\kappa(\dataset)^2k\log (d)/\epsilon^2)$. This is the reason why achieving low prediction error is easy for well-conditioned design matrices. 

\subsection{Good Preconditioners and Sparse Linear Regression}
As discussed in the introduction, the design matrices we consider may have $\kappa(\dataset)$ scaling as $\poly(d)$ (or even $e^{{\poly(d)}})$. In such cases, the sample complexity bounds that scale polynomially in $\kappa(\dataset)$ (see \Cref{sec:easy_cases}) become vacuous, as a direct analysis of ordinary least squares already yields a sample complexity of $O(d/\epsilon)$. 

Recall that our aim is to achieve low sample complexity for regression when the support sets are uniform random sparse sets. Thus, a natural first attempt would be to argue that $\kappa(\dataset_S)$ is small with probability at least $1-\delta$ over $S\sim \binom{[d]}{k}$. This would imply that ground truth vectors supported on such $S$ would have small norm with high probability which would allow us to apply the "slow rate" bound from \Cref{sec:easy_cases}. Unfortunately, this is too good to be true for arbitrary datasets $\dataset$: there are simple examples of matrices $\dataset$ such that $\dataset_{S}$ is ill conditioned with high probability over random sets $S$ (an example would be the matrix $\vec{J}_{(d)}+\epsilon\cdot  \vec{I}_{(d)}$ for small $\epsilon$). 

Although this natural first attempt fails, we show that we do not need to abandon it completely: all we need is a basis transformation. 
We show that, after applying an appropriate basis transformation to the dataset $\dataset$, there exists a fixed small set $I \subseteq [d]$, depending only on the data, such that, with high probability over the random support, the ground truth linear function admits a representation in the new basis whose $\ell_1$-norm outside $I$ is $O(k)$. Moreover, the set $I$ of (potentially) large coordinates can be identified efficiently. 
The transformed representation of the ground truth can thus be viewed as satisfying a hybrid of the two easy cases discussed in \Cref{sec:easy_cases} --- a known set where coefficients may be large, and a bounded $\ell_1$-norm outside this set. We will use this structure to design our final regression algorithm. 
Formally, we define the following notion of a good preconditioner. 
\begin{definition}[Good Preconditioner]
\label{defn:good_change_of_basis}
    Let $\datadistribution$ be a distribution on $\R^{d}$ with $\max_{i\in [d]}(\E_{\vec{x}\sim \datadistribution}[\vec{x}_i^2])\leq 1$. A pair $(\vec{B},I)$ where $\vec{B}\in \R^{d\times d}$ is invertible and $I\subseteq [d]$ is called an $(\ell,k,\delta,\kappa)$-good preconditioner  for $\datadistribution$ if the following hold: 
    \begin{enumerate}
        \item $\max_{i\in [d]}\E_{\vec{x}\sim \datadistribution}[(\vec{B}\vec{x})_i^2]\leq 1$,
        \item $|I|\leq \ell$ and $\supp((\vec{B}^{-1})_i)\subseteq I\cup \{i\}$ for all $i\in [d]$.
        \item With probability at least $1-\delta$ over $S\sim \binom{[d]}{k}$, for all $\vec{w}\in \R^{d}$ with $\supp(\vw)\subseteq S$ and $\E_{\vec{x}\sim \datadistribution}[(\vw\cdot \vec{x})^2]\leq 1$, it holds that \[\norm{(\vec{B}^{-\top}\vw)_{\overline{I}}}_{\infty}\leq \kappa.\]
    \end{enumerate}
We drop the last parameter when $\kappa=O(1)$. That is, we say that $(\vec{B},I)$ is an $(\ell,k,\delta)$-good preconditioner if it is an $(\ell,k,\delta,O(1))$-good preconditioner. 
\end{definition}

To interpret the above notion of a good preconditioner, it is helpful to consider the case where $D$ is the uniform distribution over the rows of the dataset $\dataset$. Let $\vec{Z}=\dataset\vec{B}^{\top}$ be the dataset after the change of basis. Let $\uopt$ be the vector $\uopt\coloneq\vec{B}^{-\top}\wopt$. This is the representation of the ground truth vector over the new basis. Clearly, we have that $\vec{Z}\uopt=\dataset\wopt$.  We now explain the significance of the three properties in \Cref{defn:good_change_of_basis}:
\begin{itemize}
    \item \textbf{Property 1}. This property controls the variance of the coordinates of the output basis. 
    In our case, it implies that $\max_{i\in [d]}\twonorm{\vec{Z}^{(i)}}\leq O(\sqrt{N})$. This will be important for our final regression algorithm (analogous to the column-norm condition for the "slow rate" bound from \Cref{sec:easy_cases}). 
    \item \textbf{Property 2}. This property controls the number of indices of $\uopt$ that are large. This will be crucial for our final sample complexity bound. The property that $(\vec{B}^{-1})_{i}\subseteq I\cup \{i\}$ for all $i\in [d]$ implies that $\supp(\uopt)\subseteq I\cup \supp(\wopt)$. 
    \item \textbf{Property 3}. This property implies that $\|{\uopt_{\overline{I}}}\|_{\infty}\leq \kappa$ with probability at least $1-\delta$ over the support of $\wopt$. Recall that property (2) implied that $\supp(\uopt)\subseteq I\cup \supp(\wopt)$. Thus, we have that $\|{\uopt_{\overline{I}}}\|_{1}\leq k\cdot \kappa$ as $|\supp(\wopt)|\leq k$. In particular, when $\kappa=O(1)$, this implies that $\|{\uopt_{\overline{I}}}\|_{1}\leq O(k)$ which is the bound we need for our final regression algorithm. Note that, without property (1), satisfying property (3) would trivial, as one could rescale each column of $\dataset$ arbitrary to inflate their magnitude, resulting in decreased coefficients for $\uopt$.
\end{itemize}

Property (3) in the above definition holds with probability $1-\delta$ over the random support of the vector $\wopt$. The size of the set $I$ that we construct will depend inversely polynomially on the failure probability $\delta$. 

The high-level framework of using a \emph{change-of-basis} matrix to improve the sample complexity of SLR is not new to our work. There are many examples of hard instances for the Lasso becoming tractable after a change of basis \cite{foygelfast, dalalyan2017prediction, zhang2017optimal, kelner2019learning}. A recent line of work on Preconditioned Lasso \cite{kelner2022power, kelner2023feature, kelner2024lasso} has identified key structural properties of the design matrix that enable such helpful change-of-basis transformations.

Our contribution is conceptually very different. All prior work on preconditioned methods assumes structural properties of the design matrix that allow them to construct good preconditioners. We make no such assumptions: our algorithm constructs a preconditioner for \emph{any} design matrix, and this preconditioner improves performance with good probability over random supports. Our main technical tool is an efficient algorithm that finds a good preconditioner given the dataset~$\dataset$, as stated in the following theorem.

\begin{theorem}[Finding a good preconditioner]
\label{thm:find_good_basis}
    Let $\dataset\in \R^{N\times d}$ be a dataset with $\max_{i\in [d]}\twonorm{\dataset^{(i)}}\leq \sqrt{N}$. There is an algorithm that runs in time $\poly(N,d,1/\delta,\log \kappa(\dataset))$ and with probability at least $0.99$ outputs an invertible matrix $\vec{B}\in \R^{d\times d}$ and set $I\subseteq [d]$ such that $(\vec{B},I)$ is a $(\ell,k,\delta)$-good preconditioner for the uniform distribution over the rows of $\vec{X}$ with $\ell = O({k^2(\log \log\kappa(\dataset))^2}/{
    \delta})$. 
\end{theorem}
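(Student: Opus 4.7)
The plan is to construct $(\vec B, I)$ iteratively, maintaining the block structure
\[
\vec B^{-1} \;=\; \begin{pmatrix} P & Q \\ \vec 0 & D \end{pmatrix}
\]
after reordering rows and columns to place $I$ first, with $D$ diagonal on $\bar I$ and $P$ invertible. This form automatically enforces the support condition in property~(2) of \Cref{defn:good_change_of_basis}, and I will maintain it as an invariant through every update. Property~(1) is enforced by choosing each diagonal entry $D_{ii}=\lambda_i$ to saturate $\lambda_i^2 = \|\vec X^{(i)}\|_2^2/N$, and by choosing the $Q$-entries conservatively so that the $I$-indexed columns of $\vec Z = \vec X\vec B^\top$ also have squared $\ell_2$-norm at most $N$.

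Unpacking property~(3) with this block form, for $\wopt$ supported on $S$,
\[
(\vec B^{-\top}\wopt)_i \;=\; \lambda_i \wopt_i \;+\; \sum_{j\in I\cap S} Q_{j,i}\wopt_j, \qquad i\in \bar I.
\]
When $S\cap I=\emptyset$ the cross term vanishes and the requirement $\|(\vec B^{-\top}\wopt)_{\bar I}\|_\infty=O(1)$ becomes a uniform bound on $\lambda_i|\wopt_i|$ over $\wopt$ with $\supp(\wopt)\subseteq S$ and $\|\vec X\wopt\|_2 \le \sqrt N$. By standard geometry this is equivalent to the normalised distance $\rho_i(S) \eqdef \mathrm{dist}(\vec X^{(i)},\mathrm{span}(\vec X_{S\setminus\{i\}}))/\|\vec X^{(i)}\|_2$ being $\Omega(1)$ for every $i\in S$. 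So the job reduces to finding a small $I$ (together with a suitable $Q$) such that, with probability at least $1-\delta$ over the random $S$, every $i\in S\setminus I$ has the analogous normalised distance relative to $\vec X_{(S\setminus\{i\})\cup I}$ bounded below by a constant.

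The algorithm proceeds in $T = O(\log\log\kappa(\vec X))$ rounds controlled by a potential of the form
\[
\Phi \;=\; \sum_{i\notin I}\max\bigl(0,\ \log\log(1/\rho_i)\bigr),
\]
where $\rho_i$ is a sparse-conditioning estimate aggregated over random $S$. Each step either (a) applies a norm-improvement subroutine: a sparse $(I\cup\{i\})$-supported rank-one update to $\vec B$ that at least doubles a target $\rho_i$, thereby decreasing $\Phi$ by $1$ and preserving the block form; or (b) certifies that no such update exists, in which case a set of witness columns (of size $O(k/\delta)$) that jointly obstruct $i$ is added to $I$. Because $\rho_i \ge 1/\kappa(\vec X)$ at initialisation, $\Phi_0 \le d\log\log\kappa$. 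A careful amortised analysis combining the potential drop in (a) with a Markov-type control over ``stuck'' events in (b) (using the probability $\delta$ in the denominator of the bad-event bound and taking into account the $O(k)$ witness coordinates per stuck event) caps the total number of coordinates added to $I$ over the $O(\log\log\kappa)$ rounds by $O(k^2(\log\log\kappa)^2/\delta)$, matching the claimed $\ell$.

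At termination every $i\notin I$ satisfies $\rho_i = \Omega(1)$, which through the reduction in the second paragraph gives property~(3) for every $S$ disjoint from $I$; a union bound $\Pr_S[S\cap I\neq\emptyset]\le k|I|/d$ is absorbed into the $\delta$ failure budget by an appropriate choice of constants. The internal randomness of the algorithm is boosted to success probability $0.99$ via standard amplification. The runtime $\poly(N,d,1/\delta,\log\kappa(\vec X))$ is immediate since each round performs polynomial work per column and there are $O(\log\log\kappa)$ rounds. The main obstacle is the subroutine that realises step~(a): producing a sparse update that doubles the target $\rho_i$ without damaging the $\rho_j$'s of other columns or inflating any column norm beyond $\sqrt N$. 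This is the doubling step where the $\log\log\kappa$ factor is earned, and it is the technical heart of the construction.
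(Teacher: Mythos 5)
Your high-level strategy — maintain a block-structured $\vec B^{-1}$, run $O(\log\log\kappa)$ phases, budget $O(k^2/\delta)$ additions to $I$ per phase — is in the same spirit as the paper's, which iterates an \texttt{ImproveNorm} primitive $\log\log\kappa$ times, each time taking a $(\ell,k,\delta',\kappa)$-good preconditioner to a $(\ell+O(k^2/\delta'),k,\delta'+\delta,\sqrt\kappa)$-good one. But the sketch has three concrete gaps that leave the theorem unproven.

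First, the potential accounting is wrong as stated. You claim that a step which ``at least doubles a target $\rho_i$'' decreases $\Phi=\sum_{i\notin I}\max(0,\log\log(1/\rho_i))$ by $1$. Doubling $\rho_i$ subtracts $1$ from $\log(1/\rho_i)$, which decreases $\log\log(1/\rho_i)$ by only $\Theta(1/\log(1/\rho_i))$ — far less than $1$ when $\rho_i$ is tiny. To drop $\Phi$ by a unit you must \emph{halve} $\log(1/\rho_i)$, i.e.\ send $\rho_i\to\sqrt{\rho_i}$; this corresponds exactly to the paper's $\kappa\to\sqrt\kappa$ improvement. With the weaker ``doubling'' guarantee, the number of type-(a) steps needed is $\Theta(\log\kappa)$ per column, not $\Theta(\log\log\kappa)$, and the claimed bound does not follow.

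Second, and more fundamentally, the amortized argument bounding $|I|$ is not carried out, and the key combinatorial ingredient the paper uses is missing. The paper's crucial lemma (\Cref{lem:find_basis_approx_low_rank}, in full generality \Cref{claim:inf_norm_probalistic}) shows: if the current $(\vec B,I)$ fails property (3) on a $\delta$-fraction of random supports, then a \emph{single} random $S'$ of size $k-1$ suffices, with probability $\Omega(\delta/k)$, to certify \emph{and repair} an $\Omega(\delta/k)$-fraction of all columns simultaneously. This is what caps the number of $S'$-additions at $O(k/\delta)$ per phase and hence $|I|$ at $O(k^2/\delta)$ per phase. Your sketch adds ``witness columns (of size $O(k/\delta)$)'' per stuck event via a ``Markov-type control,'' but no mechanism is given for why stuck events are rare, nor why the witness budget composes across stuck events and across phases to give the claimed $O(k^2(\log\log\kappa)^2/\delta)$. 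Without the batched-repair lemma, a one-column-at-a-time scheme has no obvious way to keep $|I|$ under control.

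Third, you explicitly defer the core technical step — the rank-one update that improves $\rho_i$ without damaging other columns or blowing up norms — and call it ``the technical heart of the construction.'' That is precisely what the paper must prove: the convex program in \Cref{lineno:convex_program} of \Cref{alg:improve_norm_full}, together with \Cref{claim:update_good} (the update is monotone and irreversible, so each row is updated at most once) and \Cref{lem:prop1,lem:prop2} (the update preserves properties (1) and (2) and grows $I$ by at most $k-1$). Acknowledging the gap does not close it. As a minor point, your reduction to the distance criterion $\rho_i(S)=\Omega(1)$ is clean only when $S\cap I=\emptyset$, and you pay for the remaining case with a union bound $\Pr[S\cap I\neq\emptyset]\le k|I|/d$; the paper instead uses the structural fact $(\vec B^{-\top}\vw)_j=\vw_j/\vec B_{jj}$ for $j\notin I$ (\Cref{claim:coeff_j}), which handles arbitrary $S$ without any lower bound on $d$.
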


The bound of \Cref{thm:find_good_basis} is quite sharp: the size of the output set $I$ has a doubly logarithmic dependence on the condition number. This will play a crucial role in our final sample complexity bound. See \Cref{sec:good_precond_overview} for an overview of the proof of the above theorem. 
\begin{remark}[Lower bounds for good preconditioners]
    We show in \Cref{thm:good_precond_lower_bound} that there are simple distributions (even of the form $\Gauss(0,\vsigma)$) for which any $(\ell,k,\delta)$-good preconditioner must have $\ell\geq \Omega(k^2/\delta)$. This shows that except for the mild dependence on the condition number, \Cref{thm:find_good_basis} is tight.
\end{remark}

Once we have the good preconditioner $(\vec{B},I)$ guaranteed by the above theorem, we will then solve the regression task over the new basis $\vec{Z}=\dataset\vec{B}^{\top}$. We will show the following theorem.

\begin{theorem}
    \label{thm:good_basis_implies_low_error}
   Let $\dataset\in \R^{N\times d}$ be a dataset with $\max_{i\in [d]}\twonorm{\dataset^{(i)}}\leq \sqrt{N}$. Let $(\vec{B},I)$ be an $(\ell,k,\delta)$-good preconditioner matrix for the uniform distribution over the rows of $\vec{X}$. Then, there exists an algorithm that runs in polynomial time such that with probability at least $1-\delta$ over $S\sim \binom{[d]}{k}$ and probability $0.9$ over the randomness of the algorithm, for any $\wopt\in \R^{d}$ with $\supp(\wopt)=S$ and $\twonorm{\dataset\wopt}\leq \sqrt{N}$, outputs a vector $\widehat{\vw}$ such that 
   \[
   \frac{1}{N}\cdot \norm{\dataset (\wopt-\hyp)}_2^2\leq 
   C\sigma k \cdot \left(\frac{\sqrt{\ell}+\sqrt{\log d}}{\sqrt{N}}\right)
   \] for universal constant $C$. 
\end{theorem}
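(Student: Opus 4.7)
The plan is to reduce the problem to a constrained Lasso via the change of basis furnished by the preconditioner. Set $\vec{Z}=\dataset \vec{B}^\top$ and $\uopt=\vec{B}^{-\top}\wopt$, so that $\dataset\wopt=\vec{Z}\uopt$ and $\labels=\vec{Z}\uopt+\noise$. The three defining properties of a good preconditioner translate to: (a) $\max_i\twonorm{\vec{Z}^{(i)}}\leq \sqrt{N}$; (b) $\supp(\uopt)\subseteq I\cup \supp(\wopt)$; and (c) with probability at least $1-\delta$ over the random support of $\wopt$, $\norm{\uopt_{\overline I}}_1\leq Ck$ for a universal constant $C$. The algorithm I would then run is the convex program
\[
\hat{\vu}=\argmin_{\vu\in \R^d:\; \norm{\vu_{\overline I}}_1\leq Ck}\; \tfrac{1}{N}\twonorm{\vec{Z}\vu-\labels}^2,
\]
and output $\hyp=\vec{B}^\top\hat{\vu}$, so that $\dataset\hyp=\vec Z\hat{\vu}$ and the quantity I need to control is $\tfrac{1}{N}\twonorm{\vec Z(\hat{\vu}-\uopt)}^2$.

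The key structural step is that this optimization decouples. For any fixed $\vu_{\overline I}$, minimizing over $\vu_I$ is an unconstrained OLS subproblem. Letting $H_I$ be the orthogonal projector in $\R^N$ onto $\colspan{\vec Z_I}$ and using $(\vec I-H_I)\vec Z_I=\vec 0$, the outer minimization is exactly the constrained Lasso
\[
\hat{\vu}_{\overline I}=\argmin_{\norm{\vu_{\overline I}}_1\leq Ck}\; \twonorm{\wt{\vec Z}\vu_{\overline I}-(\vec I-H_I)\labels}^2,\qquad \wt{\vec Z}\eqdef (\vec I-H_I)\vec Z_{\overline I},
\]
whose columns have norm at most $\sqrt{N}$ (projections do not increase norms) and whose noise $(\vec I-H_I)\noise$ is still $\sigma$-sub-Gaussian. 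At the overall optimum, the unconstrained OLS step on $I$ forces $H_I\vec Z\hat{\vu}=H_I\labels$, giving $H_I\vec Z(\hat{\vu}-\uopt)=H_I\noise$. Pythagoras on $\R^N$ then yields
\[
\twonorm{\vec Z(\hat{\vu}-\uopt)}^2=\twonorm{H_I\noise}^2+\twonorm{\wt{\vec Z}(\hat{\vu}_{\overline I}-\uopt_{\overline I})}^2.
\]

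The first term is $O(\sigma^2\ell)$ with probability at least $0.95$ by sub-Gaussian concentration on the subspace $\colspan{\vec Z_I}$ of dimension at most $\ell$. For the second, the classical slow-rate Lasso argument (basic inequality followed by dualization) bounds it by
\[
2\norm{\hat{\vu}_{\overline I}-\uopt_{\overline I}}_1\cdot \snorm{\infty}{\wt{\vec Z}^\top(\vec I-H_I)\noise}\leq O\!\big(k\cdot \sigma\sqrt{N\log d}\big),
\]
where the $\ell_\infty$ maximal inequality holds with probability $0.95$ by a union bound over the $d$ columns, since each entry $\wt{\vec Z}^{(j)\top}(\vec I-H_I)\noise=\wt{\vec Z}^{(j)}\cdot\noise$ is a $(\sigma\sqrt{N})$-sub-Gaussian random variable. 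Dividing by $N$ gives a prediction error of $O(\sigma^2\ell/N+\sigma k\sqrt{\log d/N})$. In the only regime where the stated bound is non-trivial, $\sigma\sqrt{\ell/N}\leq k$ (outside which outputting $\hyp=\vec 0$ already meets the bound, because $\tfrac{1}{N}\twonorm{\dataset\wopt}^2\leq 1$), the OLS term $\sigma^2\ell/N$ is dominated by $\sigma k\sqrt{\ell/N}$, matching the claimed rate $C\sigma k(\sqrt\ell+\sqrt{\log d})/\sqrt N$.

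The main technical obstacle is carrying out the slow-rate Lasso step with the projected noise cleanly: namely, verifying the uniform maximal inequality for $\snorm{\infty}{\wt{\vec Z}^\top(\vec I-H_I)\noise}$ using only that $\noise$ is a $\sigma$-sub-Gaussian random vector and that the projected columns still satisfy $\twonorm{\wt{\vec Z}^{(j)}}\leq \sqrt N$. Beyond this, the proof is a clean combination of first-order optimality, the Pythagorean decomposition, and direct use of the three good-preconditioner properties; Property~1 of \Cref{defn:good_change_of_basis} is what keeps the Lasso price at $\sqrt{\log d}$, and Property~3 is precisely what makes $\uopt$ a feasible point of the constrained program with probability $1-\delta$.
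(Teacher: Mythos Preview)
Your argument is correct, but it takes a genuinely different route from the paper's own proof of \Cref{thm:good_basis_implies_low_error} (see \Cref{thm:partial_lasso} and its proof in \Cref{sec:low_error_after_precond}). The paper does \emph{not} profile out $\vu_I$; instead it adds a second constraint $\frac{1}{N}\|\vec{Z}\vu\|_2^2\le 1$ to the program, uses the basic inequality $\frac{1}{N}\|\vec{Z}(\hat\vu-\uopt)\|_2^2\le \frac{2}{N}\noise^\top\vec{Z}(\hat\vu-\uopt)$, and then splits the right-hand side into an $I$-part and an $\bar I$-part. The $\bar I$-part is handled exactly as in your slow-rate argument. For the $I$-part, the extra norm constraint together with the triangle inequality gives $\|\vec{Z}_I\vu_I\|_2\le (B+1)\sqrt{N}$ for every feasible $\vu$, so $|\noise^\top\vec{Z}_I(\hat\vu_I-\uopt_I)|\le 2(B+1)\sqrt{N}\cdot\|\vec{P}\noise\|_2$ with $\vec P$ the projector onto $\colspan{\vec{Z}_I}$, yielding the $\sigma k\sqrt{\ell/N}$ term directly.

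The tradeoffs: your decoupling-by-OLS argument is cleaner and actually produces the sharper $I$-contribution $\sigma^2\ell/N$ (fast rate on the known-support piece) rather than the paper's $\sigma k\sqrt{\ell/N}$; you then need the small regime argument (output $\vec 0$ when the stated bound exceeds $1$) to recover the theorem's form. The paper's version avoids any case split and, more importantly, the auxiliary constraint $\frac{1}{N}\|\vec{Z}\vu\|_2^2\le 1$ is reused later: in the Gaussian-design proof (\Cref{theorem:generalization}) the generalization bound is stated for $\vw$ satisfying $\frac{1}{\sqrt N}\|\dataset\vw\|_2\le r$, and having this already enforced by the program is what makes the population-level argument go through without further work.
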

\begin{proof}[Proof Sketch]
    Construct the new dataset $\vec{Z}=\dataset\vec{B}^{\top}$. Observe that $\uopt=\vec{B}^{-\top}\wopt$ satisfies $\|{\uopt_{\overline{I}}}\|_1\leq Ck$ for some constant $C$ with probability at least $1-\delta$ over the support of $\wopt$. Run least squares regression over $(\vec{Z},\labels)$ with a constraint that $\ell_1$-norm of the predictor outside the set $I$ is at most $Ck$.\footnote{We have an extra constraint $\twonorm{\dataset\widehat{\vw}}\leq \sqrt{N}$ for technical reasons.} This is a convex program that can be efficiently solved (recall that we know the set $I$). Let the solution be $\widehat{\vu}$. We show that $\frac{1}{N}\twonorm{\vec{Z}\uopt-\vec{Z}\widehat{\vu}}^2$ is at most $O\big(\sigma k\cdot \frac{\sqrt{\ell}+\sqrt{\log d}}{\sqrt{N}}\big)$ using an analysis that is a hybrid of the proofs of the two cases in \Cref{sec:easy_cases}. The output of our algorithm is $\widehat{\vw}\coloneq \vec{B}^{\top}\widehat{\vu}$.
\end{proof}
The full proof of the above theorem can be found in \Cref{proof:good_basis_implies_low_error}. 
We are now ready to prove our main theorem.

\begin{proof}[Proof of \Cref{thm:low_training_error}]
    First, run the algorithm from \Cref{thm:find_good_basis} with input $\dataset$. Let $(\vec{B},I)$ be the output of this step. Now, run the algorithm from \Cref{thm:good_basis_implies_low_error} with $\ell=O(k^2(\log \log \kappa(\dataset))^2/\delta)$. The final result follows from the prediction error guarantee of \Cref{thm:good_basis_implies_low_error}. 
\end{proof}

\subsection{Constructing a Good Preconditioner}
\label{sec:good_precond_overview}
We will now overview our algorithm to find a good preconditioner. We will sketch the proof of \Cref{thm:find_good_basis}.  This step constitutes a bulk of our technical work. We refer the reader to \Cref{sec:good_precond_details} for the detailed proof. In the current section, our aim is to convey the main ideas and hence we omit some technical details. In particular, we will only focus on property (3) of \Cref{defn:good_change_of_basis} and take the first two for granted (they will hold true by construction, see \Cref{proof:improve_norm_single_step} for more details). Recall that property (3) relates to the magnitude of the coefficients of the ground truth vector when written in the new basis (after application of the preconditioner).

\paragraph{A win-win analysis} We first present a weaker result: we will efficiently find an $(O(k^2/\delta), k, \delta, \sqrt{\kappa(\dataset)})$-good preconditioner for the uniform distribution over the rows of $\dataset$. Note that this weaker result combined with \Cref{thm:good_basis_implies_low_error} already gives us a sample complexity bound that has a square root improvement (over the "slow rate" bound from \Cref{sec:easy_cases}) in the dependence on $\kappa(\dataset)$. Thus, if $\kappa(\dataset)$ was $o(d^2)$, we already achieve $o(d)$ sample complexity for regression over random supports.

We implement a win-win argument. Given a candidate preconditioner $(\vec{B},I)$, we show that one of following two cases must hold: (a) $(\vec{B},I)$ is the good preconditioner we desire, or (b) $\dataset$ must have some additional structure that can be exploited to improve the quality of the preconditioner.

To build intuition, we start with the trivial preconditioner $(\vec{I}_{(d)},\emptyset)$. Case (a) corresponds to $(\vec{I}_{(d)},\emptyset)$ being a $(0,k,\delta,\sqrt{\kappa(\dataset)})$-good preconditioner for the uniform distribution over the rows of $\dataset$. This implies that for random subset $S\sim\binom{[d]}{k}$, it holds that $\norm{\vw}_{\infty}\leq \sqrt{\kappa(\dataset)}$ for all $\vw\in \normset{S}$ with probability at least $1-\delta$ over $S$. Recall that $\normset{S}$ is the set $\{\vw\mid \twonorm{\dataset\vw}^2\leq N \text{ and }\supp(\vw)\subseteq S\}$. Conversely, in case (b) it must hold that with probability at least $\delta$ over $S\sim \binom{[d]}{k}$, $\max_{\vw\in \normset{S}}\norm{\vw}_{\infty}>\sqrt{\kappa(\dataset)}$.

We will show that in the latter case, $\dataset$ satisfies a useful structural property that we can exploit: there exists a set of $k-1$ columns of $\dataset$ whose span well approximates an $\Omega(\delta/k)$ fraction of the other columns of $\dataset$. We will soon formalize what the word "approximates" means in the previous sentence. 

First, we give some intuition as to why a structure of this form must hold for $\dataset$. When $\max_{\vw\in \normset{S}}\norm{\vw}_{\infty}$ is large, it implies that there are high norm vectors $\vw$ with $\supp(\vw)\subseteq S$ such that $\twonorm{\dataset\vw}\leq \sqrt{N}$.

\paragraph{Warm-up: Random sub-matrices are singular} We start with the limiting case when $\max_{\vw\in \normset{S}}\norm{\vw}_{\infty}$ tends to infinity: this implies that $\dataset_S$ is singular. Thus, in this extreme case, we have the property that $\dataset_S$ is singular with probability at least $\delta$ over $S\sim \binom{[d]}{k}$. For such $\dataset$, we show that there exists a set of $k-1$ columns of $\dataset$ such that an $\Omega(\delta/k)$-fraction of the remain columns lie in their span. Formally, we state the following lemma (we could not find this lemma stated explicitly in the literature; however, its proof is elementary and it is likely folklore).
\begin{lemma}
\label{lem:find_basis_random_low_rank}
    Let $\dataset$ be a matrix with $d$ columns. Suppose $\dataset_{S}$ is singular with probability at least $\delta$ over $S\sim \binom{[d]}{k}$. Then there exists  a set $S'$ of size $k-1$ such that $\colspan{\dataset_{S'}}$ contains an $\Omega(\delta/k)$ fraction of the columns of $\dataset$.
\end{lemma}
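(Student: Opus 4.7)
The plan is a simple double counting argument. I will pair up each singular subset of size $k$ with a subset of size $k-1$ whose span ``absorbs'' one of its columns, and then pigeonhole over the $(k-1)$-subsets.

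First I would record the following elementary observation: if $\dataset_S$ is singular for some $S=\{j_1,\dots,j_k\}$, then there exist coefficients $\alpha_1,\dots,\alpha_k$, not all zero, with $\sum_{\ell}\alpha_\ell\dataset^{(j_\ell)}=0$. Picking any index $m$ with $\alpha_m\neq 0$ gives $\dataset^{(j_m)}\in\colspan{\dataset_{S\setminus\{j_m\}}}$. Thus every singular $S$ admits at least one distinguished index $i(S)\in S$ whose column is captured by the span of the remaining $k-1$ columns.

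Next I would define the set of good pairs
\[
\mathcal{P}\ \coloneq\ \bigl\{(S',i)\ :\ |S'|=k-1,\ i\in[d]\setminus S',\ \dataset^{(i)}\in\colspan{\dataset_{S'}}\bigr\},
\]
and the injection $S\mapsto (S\setminus\{i(S)\},\,i(S))$ from the family of singular $k$-subsets into $\mathcal{P}$ (injectivity is immediate since $S'\cup\{i\}$ recovers $S$). Combined with the hypothesis $\Pr_{S\sim\binom{[d]}{k}}[\dataset_S\text{ singular}]\geq\delta$, this gives $|\mathcal{P}|\geq\delta\binom{d}{k}$.

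Finally, I would pigeonhole: some $(k-1)$-subset $S'$ appears in at least
\[
\frac{|\mathcal{P}|}{\binom{d}{k-1}}\ \geq\ \delta\cdot\frac{\binom{d}{k}}{\binom{d}{k-1}}\ =\ \delta\cdot\frac{d-k+1}{k}
\]
pairs of $\mathcal{P}$. For this $S'$, the columns of $\dataset_{S'}$ lie trivially in $\colspan{\dataset_{S'}}$, and additionally at least $\delta(d-k+1)/k$ columns outside $S'$ do as well. In total, $\colspan{\dataset_{S'}}$ contains at least $\delta(d-k+1)/k+(k-1)=\Omega(\delta\, d/k)$ of the columns of $\dataset$, i.e.\ an $\Omega(\delta/k)$ fraction, as desired.

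There is no real obstacle here; the only care point is keeping the map $S\mapsto(S',i)$ injective so the counting is lossless, which the argument above handles by reading the pair off a nonzero coefficient of a linear dependence.
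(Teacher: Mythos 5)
Your proof is correct and takes essentially the same approach as the paper: both reduce singularity to a pair $(S',i)$ with $|S'|=k-1$ and $\dataset^{(i)}\in\colspan{\dataset_{S'}}$, identify $(S\sim\binom{[d]}{k},\,i\in S)$ with $(S'\sim\binom{[d]}{k-1},\,i\in[d]\setminus S')$, and pigeonhole over $S'$. The only cosmetic difference is that the paper picks a uniformly random $i\sim S$ (incurring a $1/k$ probability loss that is reabsorbed into the $\Omega(\cdot)$), while you fix a canonical $i(S)$ and count via an explicit injection, which gives a marginally cleaner bookkeeping but the same bound.
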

\begin{proof}
    The proof is by an averaging argument. For each set $S$ for which $\dataset_S$ is singular, let $\vw(S)$ be an explicit non-zero vector with support in $S$ such that $\dataset\vw(S)=\vec{0}$. Consider the random variable $(S,i)$ where $S\sim \binom{[d]}{k}$ and $i\sim S$. From the premise, we have that $\dataset_S$ is singular with probability at least $\delta$. Now, since $\vw(S)$ is non-zero and $|S|=k$, we have that with probability at least $\delta/k$ over $(S,i)$ defined earlier, it holds that $\dataset_S$ is singular and $\vw(S)_{i}\neq 0$.
    
    We now define an alterate way to sample $(S,i)$: sample $S'\sim \binom{[d]}{k-1}$, $i\sim [d]\setminus S'$ and form $(S,i)=(S'\cup \{i\},i)$. Clearly, these two sampling techniques result in the same distribution. We will use the latter sampling technique. From an averaging argument, we have that with probability at least $\delta/(2k)$ over $S'\sim \binom{[d]}{k-1}$ and probability at least $\delta/(2k)$ over $i\sim [d]\setminus S'$, it holds that $\dataset_{S'\cup \{i\}}\vw(S'\cup\{i\})=\vec{0}$ and $\vw(S'\cup \{i\})_{i}\neq 0$. This implies that $\dataset_{i}$ is in the span of $\dataset_{S'}$ with probability at least $\delta/(2k)$ over $i\sim [d]\setminus S'$. Thus, $S'$ is the set of $k-1$ indices in the lemma statement. 
\end{proof}

\paragraph{Random sub-matrices are ill-conditioned} In the previous lemma, we showed a strong structural property of the dataset $\dataset$ for the extreme case when $\max_{\vw\in \normset{S}}\norm{\vw}_{\infty}$ tends to infinity. We are ready now ready state an approximate version for the case where random sub-matrices are ill-conditioned.
\begin{lemma}
\label{lem:find_basis_approx_low_rank}
Suppose dataset $\dataset\in \R^{N\times d}$ satisfies the property that $\max_{\vw\in \normset{S}}\norm{\vw}_{\infty}>\sqrt{\kappa(\dataset)}$ with probability at least $\delta$ over $S\sim \binom{[d]}{k}$. Then, with probability at least $\Omega(\delta/k)$ over $S'\sim \binom{[d]}{k-1}$, it holds that there is a set $J$ with $|J|\geq \Omega(d\delta/k)$ such that for all $j\in J$
\begin{equation}
\label{eqn:approx_low_rank}
    \dataset^{(j)}= \vv(j)+c(j)\cdot \vec{z}(j)
\end{equation}
for $\vv(j)\in \colspan{\dataset_{S'}}$, $\frac{1}{N}\twonorm{\vec{z}(j)}^2\leq 1$ and $c(j)<\frac{1}{\sqrt{\kappa(\dataset)}}$. 
\end{lemma}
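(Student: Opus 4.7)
The plan is to adapt the proof of \Cref{lem:find_basis_random_low_rank} by replacing the exact linear dependence among the columns with an \emph{approximate} dependence extracted from the large-$\ell_\infty$ witness, and then reusing the same double-counting/averaging argument.

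First I would set up the column decomposition. For each ``bad'' set $S$ (one for which $\max_{\vw \in \normset{S}} \|\vw\|_\infty > \sqrt{\kappa(\dataset)}$), fix any witness $\vw(S) \in \normset{S}$ achieving this and let $i(S) \in \argmax_{j \in S} |\vw(S)_j|$. Isolating $\dataset^{(i(S))}$ in the identity $\dataset \vw(S) = \vw(S)_{i(S)} \dataset^{(i(S))} + \sum_{j \in S \setminus \{i(S)\}} \vw(S)_j \dataset^{(j)}$ yields
\[
\dataset^{(i(S))} \;=\; \underbrace{-\sum_{j \in S \setminus \{i(S)\}} \frac{\vw(S)_j}{\vw(S)_{i(S)}} \dataset^{(j)}}_{\in\, \colspan{\dataset_{S \setminus \{i(S)\}}}} \;+\; \frac{1}{\vw(S)_{i(S)}} \cdot \dataset \vw(S).
\]
Setting $c = 1/\vw(S)_{i(S)}$ and $\vec z = \dataset \vw(S)$, we have $|c| < 1/\sqrt{\kappa(\dataset)}$ by the choice of $i(S)$, and $\twonorm{\vec z}^2/N \leq 1$ since $\vw(S) \in \normset{S}$. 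This is precisely the decomposition \eqref{eqn:approx_low_rank} for the column indexed by $i(S)$, using a linear combination from $S \setminus \{i(S)\}$.

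Next I would run the same reparametrization of the uniform distribution over pairs $(S,i)$ used in \Cref{lem:find_basis_random_low_rank}: either draw $S \sim \binom{[d]}{k}$ and $i \sim S$, or equivalently draw $S' \sim \binom{[d]}{k-1}$ and $i \sim [d] \setminus S'$ and set $S = S' \cup \{i\}$. The joint event ``$S$ is bad and $i = i(S)$'' has probability at least $\delta \cdot (1/k) = \delta/k$ under the first sampling and therefore also under the second. A standard averaging argument (if $X\in[0,1]$ has $\E X \geq \mu$, then $\Pr[X \geq \mu/2] \geq \mu/2$) applied to the conditional probability of this event given $S'$ gives that with probability $\geq \delta/(2k) = \Omega(\delta/k)$ over $S'$, a $\geq \delta/(2k)$-fraction of the $i \in [d]\setminus S'$ satisfy ``$S'\cup\{i\}$ is bad and $i(S'\cup\{i\}) = i$''. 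Collecting these indices into $J$ gives $|J| \geq (\delta/(2k))(d-k+1) = \Omega(d\delta/k)$, and by the first step every $j \in J$ admits the decomposition \eqref{eqn:approx_low_rank} with $\vv(j) \in \colspan{\dataset_{S'}}$, as required.

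The only step requiring care is the column decomposition: singling out the \emph{maximum}-magnitude coordinate of the witness $\vw(S)$ is what simultaneously guarantees the two quantitative constraints of \eqref{eqn:approx_low_rank} --- namely $|c| < 1/\sqrt{\kappa(\dataset)}$ (which needs $|\vw(S)_{i(S)}|$ to be \emph{large}, hence the $\argmax$) and $\twonorm{\vec z}^2/N \leq 1$ (which is just the defining property of $\normset{S}$). Once this decomposition is in hand, the rest is the same bookkeeping as in \Cref{lem:find_basis_random_low_rank} and presents no additional obstacle.
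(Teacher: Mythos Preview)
Your proposal is correct and follows essentially the same approach as the paper: the same column decomposition obtained by isolating the large-coefficient coordinate of a witness $\vw(S)$, and the same reparametrization/averaging over $(S',i)$ used in \Cref{lem:find_basis_random_low_rank}. The only cosmetic difference is that you fix the argmax index $i(S)$ of a specific witness, whereas the paper phrases the event existentially (``there is $\vw\in\normset{S'\cup\{i\}}$ with $|\vw_i|>\sqrt{\kappa(\dataset)}$''); both yield the $\delta/k$ bound and the rest is identical.
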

\begin{proof}
    Define the same random variable $(S',i)$ as in the proof of \Cref{lem:find_basis_random_low_rank}. From a similar averaging argument, we have that with probability at least $\delta/(2k)$ over $S'\sim \binom{[d]}{k-1}$, and probability at least $\delta/(2k)$ over $i\in [d]\setminus S'$, there is a vector $\vw(i)\in \normset{S'\cup \{i\}}$ with $|(\vw(i))_{i}|> \sqrt{\kappa(\dataset)}$. Now, define $J(S')$ as the set of indices in $[d]\setminus S'$ that satisfy the previous property. For all $i\in J$, define $\vec{z}(i)$ as the vector $\dataset\vw(i)$. Since $\vw(i)$ is in $\normset{S'\cup\{i\}}$, we have that $\frac{1}{N}\vec{z}(i)\leq 1$. Thus, we have that \[
    \dataset^{(j)}=\frac{1}{(\vw(i))_{i}}\Bigr(-\sum_{j\in S'} (\vw(i))_{j}\vec{X}^{(j)} + \vec{z}(i)\Bigr).
    \]
    The proof follows from the fact that $|(\vw(i))_{i}|>\sqrt{\kappa(\dataset)}$.
\end{proof}

The above lemma says that all the columns in $J$ can be approximated using the span of $S'$. Recall that we had satisfied the premise of \Cref{lem:find_basis_approx_low_rank} because $(\vec{I}_{(d)},\emptyset)$ was not a good preconditioner. The lemma suggests a natural update to the preconditioner: let $I$ be the set $S'$ and $\vec{B}$ be the matrix formed by starting with the identity matrix and replacing the rows $j$ in $J$ with $\vw(j)^{\top}$. Equivalently, the matrix $\vec{Z}$ obtained from $\dataset$ after the change of basis has $\vec{Z}^{(j)}=\vec{z}(j)$ for all $j\in J$ and has the same columns as $\dataset$ outside $J$. We show that this update to the preconditioner has moved us closer to our goal of finding a $(k^2/\delta,k,\delta,\sqrt{\kappa(\dataset)})$-good preconditioner. Concretely, we show that 
\begin{claim}
\label{claim:fixed_indices}
     For any set $S\in \binom{[d]}{k}$ and $\vw\in \normset{S}$, we have that 
    \[
        \max_{j\in J} |(\vec{B}^{-\top}\vw)_{j}|<\sqrt{\kappa(\dataset)}.
    \]
\end{claim}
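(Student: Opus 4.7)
\textbf{Proof plan for \Cref{claim:fixed_indices}.} The plan is to unfold the definition of $\vec{B}$ in coordinates, exploit the fact that each replacement row $\vw(j)^\top$ is supported on $S' \cup \{j\}$ with $S' \cap J = \emptyset$, and then use the sparse condition number to control $|\vw_j|$.

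Set $\vu = \vec{B}^{-\top}\vw$, so that $\vw = \vec{B}^{\top}\vu$. Recall that the rows of $\vec{B}$ are $\vec{B}_j = \vw(j)^\top$ for $j \in J$ and $\vec{B}_i = \e_i^\top$ for $i \notin J$; consequently the columns of $\vec{B}^\top$ are $\vw(j)$ for $j \in J$ and $\e_i$ for $i \notin J$, which gives the expansion
\[
\vec{B}^{\top}\vu \;=\; \sum_{l \in J} \vu_l\,\vw(l) \;+\; \sum_{i \notin J} \vu_i\,\e_i.
\]
The first step I would carry out is to evaluate this at coordinate $j \in J$. Since $J \cap S' = \emptyset$ (because $J(S') \subseteq [d] \setminus S'$ by construction in \Cref{lem:find_basis_approx_low_rank}), the index $j$ is not in $S'$, and because $\supp(\vw(l)) \subseteq S' \cup \{l\}$ one has $(\vw(l))_j = 0$ unless $l = j$. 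The canonical-vector sum contributes nothing at $j$ either, since $i \notin J$ means $i \neq j$. Hence
\[
\vw_j \;=\; (\vec{B}^{\top}\vu)_j \;=\; \vu_j\,(\vw(j))_j \qquad \text{for every } j \in J.
\]

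Next, I would bound the two factors separately. From \Cref{lem:find_basis_approx_low_rank} we have $|(\vw(j))_j| > \sqrt{\kappa(\dataset)}$ (in fact this is exactly the property that put $j$ into $J$), so $(\vw(j))_j$ is nonzero and the display above may be solved as $\vu_j = \vw_j / (\vw(j))_j$. On the numerator side, since $\vw \in \normset{S}$ the vector $\vw$ is $k$-sparse with $\twonorm{\dataset\vw} \leq \sqrt{N}$; therefore, by \Cref{defn:k_sparse_cd}, $\twonorm{\vw} \leq \kappa(\dataset)$, and in particular $|\vw_j| \leq \kappa(\dataset)$.

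Combining the two bounds yields
\[
|\vu_j| \;=\; \frac{|\vw_j|}{|(\vw(j))_j|} \;<\; \frac{\kappa(\dataset)}{\sqrt{\kappa(\dataset)}} \;=\; \sqrt{\kappa(\dataset)},
\]
for every $j \in J$, which is exactly the claim. No step here is technically delicate; the only subtle point, and the one I expect to double-check most carefully, is the coordinate computation that isolates the single surviving term $\vu_j (\vw(j))_j$ at indices $j \in J$. Everything else is a direct appeal to the support structure of $\vw(j)$ and to the definition of $\kappa(\dataset)$.
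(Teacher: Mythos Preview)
Your proof is correct and follows essentially the same approach as the paper's: both arguments reduce to the identity $(\vec{B}^{-\top}\vw)_j = \vw_j / (\vw(j))_j$ for $j\in J$, then bound the numerator by $\kappa(\dataset)$ via \Cref{defn:k_sparse_cd} and the denominator from below by $\sqrt{\kappa(\dataset)}$. The only cosmetic difference is that the paper packages the coordinate identity as a consequence of a structural lemma (\Cref{lem:preconditioner_properties}), whereas you derive it directly from the support structure of the rows $\vw(l)$.
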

\begin{proof}[Proof sketch]
After the above update, note that $|\vec{B}_{jj}|>\sqrt{\kappa(\dataset)}$ for all $j\in J$ (by construction). Now, we use a structural property of $\vec{B}$ following from Property (2) of \Cref{defn:good_change_of_basis}: $(\vec{B}^{-1})_{jj}=\frac{1}{\vec{B}_{jj}}$ and $(\vec{B}^{-1})_{ij}=0$ for $j\notin I$ and $i\neq j$ (see \Cref{lem:preconditioner_properties}). By definition of $\normset{S}$ and \Cref{defn:k_sparse_cd}, we have that $\norm{\vw}_{\infty}\leq \kappa(\dataset)$. Thus, we have that $(\vec{B}^{-\top}\vw)_{j}=\frac{\vw_{j}}{\vec{B}_{jj}}< \sqrt{\kappa(\dataset)}$ for all $j\in J$.
\end{proof}

The above claim tells us that the preconditioner $(\vec{B},I)$ improved the norm of the ground truth predictor (over the new basis) such that for all indices in $J$, the corresponding coefficients are less than $\sqrt{\kappa(\dataset)}$. Thus, we have essentially fixed the issue of large coefficients for an $\Omega(\delta/k)$-fraction of the indices. Thus, starting from the assumption that $(\vec{I}_{(d)},\emptyset)$ was not a good preconditioner, we can construct a new preconditioner $(\vec{B},I)$ with $|I|\leq k-1$ that effectively resolved the issue of large coefficients for a non-trivial fraction ($\Omega(\delta/k)$) of the indices outside $I$. 

\paragraph{Improving all coordinates} To continue the win-win argument, we will repeat this idea. We will show that starting with any preconditioner that is not good, we can construct a new preconditioner that adds $k-1$ indices to the set $I$ such that the new preconditioner fixes the issue of large coefficients in a new $\Omega(\delta/k)$ fraction of the indices. A key property we use is that every improvement step fixes the issue of large coefficients in a disjoint set of coordinates (the set $J$ found in each step is disjoint from the previous ones). This is shown using \Cref{claim:fixed_indices}. Thus, after repeating this idea $O(k/\delta)$ times, we would have either fixed all $d$ indices, or ended up with a good preconditioner that did not satisfy the premise of the improvement step. Either way, our final preconditioner after these $O(k/\delta)$ iterations is $(O(k^2/\delta),k,\delta,\sqrt{\kappa(\dataset)})$-good. We refer the reader to \Cref{thm:improve_norm_single_step} for more details.

\paragraph{Implementation} We note that, so far, we have omitted some aspects of computational efficiency. In particular, we did not specify how to find the set $S'$, $J$ and $\vec{z}(j)$ for $j\in J$ that are guaranteed by \Cref{lem:find_basis_approx_low_rank}. Finding these are crucial for actually implementing the preconditioner improvement steps we had described. Thankfully, the proof of \Cref{lem:find_basis_approx_low_rank} gives a natural algorithm: sample a random size $k-1$ subset $S'$ and for each index $j$ outside $S'$, attempt to find an appropriate vector $\vec{z}(j)$ satisfying \Cref{eqn:approx_low_rank}. We claim that this can be done in polynomial time using a convex program. Specifically, for each $j$ outside $S'$, we solve the following program: $\max_{\vw\in \normset{S'\cup \{j\}}} |\vw_{j}|$. Note that the set $\normset{S' \cup \{j\}}$ is a convex set. The maximization problem can be equivalently solved by minimizing $\vw_{j}$ and then taking absolute value, this is because $\normset{S}$ is symmetric about the origin. The minimization problem is a convex optimization problem with quadratic constraints and can be solved in polynomial time. We note that the success probability of this procedure (to find $S'$, J) would be $\Omega(\delta/k)$ as guaranteed by \Cref{lem:find_basis_approx_low_rank}. We boost this probability by repeating it $O(k/\delta)$ times.

\paragraph{Single Improvement Step: General Version} So far, we have sketched  our approach on how to find a $(O(k^2/\delta),k,\delta,\sqrt{\kappa(\dataset)})$-good preconditioner for the uniform distribution over the rows of $\dataset$. For the sake of going beyond $\sqrt{\kappa(\dataset)}$, we prove the following stronger theorem. We refer to \Cref{proof:improve_norm_single_step} for the proof.

\begin{theorem}[Theorem~\ref{thm:improve_norm_single_step}, restated]
    \label{thm:improve_norm_single_step_overview}
     Let $\dataset\in \R^{N\times d}$ be a dataset with $\max_{i\in [d]}\twonorm{\dataset^{(i)}}\leq \sqrt{N}$. Let $(\bin,\iin)$ be an $(\ell,k,\deltain,\kappain)$-good preconditioner for the uniform distribution over the rows of $\vec{X}$. Then, there exists an algorithm that runs in time $\poly(N,d,1/\delta,\log \kappain)$ and, with probability at least $0.99$ outputs a pair $(\bout,\iout)$ such that  $(\bout,\iout)$ is an $(\ell+O(k^2/\delta),k,\deltain+\delta,\sqrt{\kappain})$-good preconditioner for the uniform distribution over the rows of $\vec{X}$.
\end{theorem}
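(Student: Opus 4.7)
The plan is to iterate the warm-up analysis of \Cref{sec:good_precond_overview}: starting from $(\vec{B}^{[0]},I^{[0]})=(\bin,\iin)$, I will build a sequence of preconditioners $(\vec{B}^{[t]},I^{[t]})$, each satisfying properties (1)--(2) of \Cref{defn:good_change_of_basis}, with a growing set $F^{[t]}=J^{[1]}\cup\cdots\cup J^{[t]}$ of pairwise disjoint ``fixed'' coordinates that already meet the $\sqrt{\kappain}$-bound. At each step I either (a) halt because the current preconditioner passes the $\sqrt{\kappain}$-good test with $\delta$-tolerance, or (b) add $k-1$ new indices to $I$ (via a set $S'^{[t+1]}$) and $\Omega(d\delta/k)$ new fixed indices $J^{[t+1]}$. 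Since $\sum_t|J^{[t]}|\le d$, the loop will terminate after $T=O(k/\delta)$ rounds with $|I^{[T]}|\le\ell+O(k^2/\delta)$ and failure probability $\deltain+\delta$ (the additive $\delta$ coming from the single halt test). Disjointness $J^{[t+1]}\cap F^{[t]}=\emptyset$ will be automatic, since a fixed coordinate has transformed coefficient $\le\sqrt{\kappain}$ and hence cannot be selected by the structural lemma in a later round.

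\textbf{Single iteration.} At iteration $t$, I form $\vec{Z}^{[t]}=\dataset(\vec{B}^{[t]})^\top$; property (2) guarantees that any $\vw\in\normset{S}$ yields $\vu^{[t]}:=(\vec{B}^{[t]})^{-\top}\vw$ supported on $S\cup I^{[t]}$ with $\twonorm{\vec{Z}^{[t]}\vu^{[t]}}\le\sqrt{N}$. The halt test checks whether the event $\max_{\vw\in\normset{S}}\|\vu^{[t]}_{\overline{I^{[t]}}}\|_\infty>\sqrt{\kappain}$ has probability $>\delta$ over $S$; this is estimable by sampling many random $S$ and, for each active $j$, solving the convex program $\max\{|\vu_j|:\supp(\vu)\subseteq S\cup I^{[t]},\ \twonorm{\vec{Z}^{[t]}\vu}\le\sqrt{N}\}$. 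If the test passes, I output $(\vec{B}^{[t]},I^{[t]})$. Otherwise I apply a straightforward generalization of \Cref{lem:find_basis_approx_low_rank}, with threshold $\sqrt{\kappain}$ in place of $\sqrt{\kappa(\dataset)}$ and $S'\sim\binom{[d]\setminus I^{[t]}}{k-1}$: the same averaging argument will yield, with probability $\Omega(\delta/k)$, a set $J\subseteq[d]\setminus(I^{[t]}\cup S')$ of size $\Omega(d\delta/k)$ together with coefficients $\{\alpha_{ij}\}$ and residuals $\vec{z}(j)$ satisfying $\vec{Z}^{[t]\,(j)}=\sum_{i\in S'\cup I^{[t]}}\alpha_{ij}\vec{Z}^{[t]\,(i)}+c(j)\vec{z}(j)$ with $\twonorm{\vec{z}(j)}\le\sqrt{N}$ and $|c(j)|<1/\sqrt{\kappain}$, recoverable from the same convex programs. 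I will amplify success to high probability by $O(k/\delta)$ random restarts of $S'$.

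\textbf{Update, invariants, and main obstacle.} Define $\vec{M}^{[t+1]}$ to be the identity outside rows in $J^{[t+1]}:=J$, with row $j\in J$ having diagonal $1/c(j)$ and entries $-\alpha_{ij}/c(j)$ on columns $i\in S'\cup I^{[t]}$, and set $\vec{B}^{[t+1]}=\vec{M}^{[t+1]}\vec{B}^{[t]}$, $I^{[t+1]}=I^{[t]}\cup S'$. Property (1) is preserved because the new columns of $\vec{Z}^{[t+1]}$ at indices in $J$ equal $\vec{z}(j)$ with norm $\le\sqrt{N}$. Tracking $\vu^{[t+1]}=(\vec{M}^{[t+1]})^{-\top}\vu^{[t]}$ via a block inversion of $\vec{M}^{[t+1]}$ (using $J\cap I^{[t+1]}=\emptyset$) gives $\vu^{[t+1]}_j=\vu^{[t]}_j$ for active $j$ (so previously fixed indices retain their $\sqrt{\kappain}$-bound) and $\vu^{[t+1]}_j=c(j)\vu^{[t]}_j$ for $j\in J$; combined with the preserved input bound $|\vu^{[0]}_j|\le\kappain$ (which holds with probability $\ge 1-\deltain$ since $j\notin\iin$ and its coefficient is untouched until it joins $J^{[t+1]}$), this yields $|\vu^{[t+1]}_j|\le\sqrt{\kappain}$, generalizing \Cref{claim:fixed_indices}. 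The hard part will be preserving property (2) --- showing $\supp(((\vec{B}^{[t+1]})^{-1})_i)\subseteq I^{[t+1]}\cup\{i\}$ --- despite the rows of $\vec{M}^{[t+1]}$ in $J$ now having nontrivial support on all of $I^{[t+1]}$ rather than just $S'$ as in the warm-up. I expect this to close via a block-triangular inversion of $\vec{M}^{[t+1]}$ combined with the input-preconditioner sparsity $\supp(((\vec{B}^{[t]})^{-1})_i)\subseteq I^{[t]}\cup\{i\}$, where the disjointness $J^{[t+1]}\cap I^{[t+1]}=\emptyset$ is exactly what makes the support pattern close.
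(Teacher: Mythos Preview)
Your proposal follows the same iterative-improvement scheme as the paper: start from $(\bin,\iin)$, repeatedly sample a size-$(k-1)$ set $S'$, identify a large set $J$ of coordinates whose transformed coefficient exceeds $\sqrt{\kappain}$, update those rows and enlarge $I$ by $S'$, and terminate once $\sum_t|J^{[t]}|$ exhausts $[d]$ or no large $J$ can be found. The high-level structure and the counting $\sum_t|J^{[t]}|\le d\Rightarrow T=O(k/\delta)$ match the paper.

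There is, however, a real gap in your disjointness argument. You assert that $J^{[t+1]}\cap F^{[t]}=\emptyset$ is ``automatic'' because a fixed coordinate $j$ already has $|\vu^{[t]}_j|\le\sqrt{\kappain}$. But that bound requires $|\vu^{[0]}_j|\le\kappain$ for the \emph{witness} $\vw$ returned by your structural lemma/convex program --- not merely for the eventual target. The input preconditioner only guarantees the $\kappain$ bound for $\vw\in\normset{S}$ when $S$ lies in a $(1-\deltain)$-probability good event, whereas your convex program maximizes $|\vu^{[t]}_j|$ over all feasible $\vv$; nothing prevents the maximizer from violating the $\kappain$ bound, in which case a previously fixed $j$ can be re-selected and the termination count breaks. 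The paper closes this by adding the explicit constraint $|((\bin)^{-\top}\vv)_j|\le\kappain$ to the feasible set $\mathcal{W}_j$ of the convex program, which then gives a direct proof (\Cref{claim:update_good}) that once row $j$ is updated, no feasible witness can trigger the selection condition again. Correspondingly, the structural lemma (\Cref{claim:inf_norm_probalistic}) is stated under the hypothesis that the bad-event probability exceeds $\deltain+\delta$ (not $\delta$), so that after intersecting with the input good event one still has probability $\ge\delta$ to average over witnesses obeying the $\kappain$ constraint. Your ``halt at $\delta$-tolerance'' runs into the same issue: if the true bad-event probability sits in $(\delta,\deltain+\delta]$, you neither halt nor are guaranteed to find a large $J$ disjoint from $F^{[t]}$.

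Two minor differences worth noting. First, the paper uses an \emph{implicit} halt (try many random $S'$; if none yields $|J|\ge\delta d/(3k)$, output the current pair), which sidesteps the threshold-estimation ambiguity of an explicit probability test. Second, the paper keeps the witness $\vw$ supported on $S'\cup\{j\}$ in the original coordinates and sets $\vec{B}_j\gets\vw^\top$, rather than composing with a matrix $\vec{M}^{[t+1]}$ whose rows touch all of $I^{[t]}$. Both versions work for this theorem (your block-triangular inversion does close property~(2)), but the paper's choice keeps every row of $\vec{B}$ exactly $k$-sparse --- a property used later for the Gaussian-design result.
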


\paragraph{Iterative Improvement} The sketch we had presented so far was for the special case of the above theorem where $(\bin,\iin)=(\vec{I}_{(d)},\emptyset)$. The above theorem generalizes this idea: it provides an algorithm that given any $(\ell,k,\delta,\kappa)$-good preconditioner, efficiently computes an improved one. Specifically, it reduces the $\kappa$-parameter by a square-root (from $\kappain$ to $\sqrt{\kappain}$) at the cost of adding $O(k^2/\delta)$ to $\iin$.
\Cref{thm:improve_norm_single_step_overview} suggests an inductive approach to obtain a preconditioner with $\kappa=O(1)$: repeat the algorithm from \Cref{thm:improve_norm_single_step_overview} $\log \log \kappa(\dataset)$ times.
\begin{proof}[Proof sketch of \Cref{thm:find_good_basis}]
    We repeatedly apply the algorithm from \Cref{thm:improve_norm_single_step_overview}. Start with $B^{[0]}=\vec{I}_{(d)}$ and $I^{[0]}=\emptyset$. Inductively define $(\vec{B}^{[i]},I^{[i]})$ as the output of the algorithm from \Cref{thm:improve_norm_single_step_overview} on input $(\vec{B}^{[i-1]},I^{[i-1]})$. By choosing the $\delta$ parameter in \cref{thm:improve_norm_single_step_overview} as $\delta'$, it follows by induction that $(\vec{B}^{[t]},I^{[t]})$ is an $(O(k^2t/\delta'),k,t\delta',\kappa^{1/2^{t}})$-good preconditioner for the uniform distribution over the rows of $\dataset$. Choosing $t=\log\log \kappa(\dataset)$ and $\delta'=\delta/t$ completes the proof. 
\end{proof}

\section{Phase 1: Finding a Good Preconditioner}
\label{sec:good_precond_details}
We prove \Cref{thm:find_good_basis} in this section. We give an algorithm that progressively improves the quality of the preconditioner. We first analyze one step of the algorithm.

\begin{theorem}
    \label{thm:improve_norm_single_step}
    There exists an algorithm $\improvenorm$ with the following specifications. Let $\dataset\in \R^{N\times d}$ be a dataset with $\max_{i\in [d]}\twonorm{\dataset^{(i)}}\leq \sqrt{N}$ and let $\delta,\gamma$ be from $(0,1)$. Let $(\bin,\iin)$ be an $(\ell,k,\deltain,\kappain)$-good preconditioner for the uniform distribution over the rows of $\vec{X}$. $\improvenorm$, upon receiving $\dataset,\bin,\iin,\kappain,\delta$,$\gamma$ runs in time $\poly(N,d,1/\delta,\log \kappain,\log (1/\gamma))$ and, with probability at least $1-\gamma$, outputs a pair $(\bout,\iout)$ such that  $(\bout,\iout)$ is an $(\ell+O(k^2/\delta),k,\deltain+\delta,\sqrt{\kappain})$-good preconditioner for the uniform distribution over the rows of $\vec{X}$.
\end{theorem}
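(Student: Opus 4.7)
The algorithm is an iterative improvement loop. Maintain a running preconditioner $(\bout^{[t]}, \iout^{[t]})$, initialized at $(\bin, \iin)$. At each step $t$, work in the currently preconditioned basis $\vec{Z}^{[t-1]} = \dataset (\bout^{[t-1]})^\top$. Test whether the current preconditioner is already $(\,\cdot\,, k, \deltain + \delta, \sqrt{\kappain})$-good; if so, output it, otherwise refine by exploiting structure in $\dataset$ exposed in this basis.

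The single improvement step is obtained by generalizing \Cref{lem:find_basis_approx_low_rank} to the already-preconditioned basis. If the current preconditioner is not $\sqrt{\kappain}$-good, then with probability $> \delta$ over $S \sim \binom{[d]}{k}$, there exists $\vu$ supported on $\iout^{[t-1]} \cup S$ with $\twonorm{\vec{Z}^{[t-1]} \vu} \leq \sqrt{N}$ and $|\vu_i| > \sqrt{\kappain}$ for some $i \notin \iout^{[t-1]}$. Averaging over $(S', i)$ with $S' \sim \binom{[d] \setminus \iout^{[t-1]}}{k-1}$ and $i \in [d] \setminus (\iout^{[t-1]} \cup S')$ yields, with probability $\Omega(\delta/k)$ over $S'$, a set $J_t = J_t(S')$ of size $\Omega(d\delta/k)$ such that for each $i \in J_t$ the convex quadratic program
\[
\max_{\vu} |\vu_i| \quad \text{s.t.} \quad \supp(\vu) \subseteq \iout^{[t-1]} \cup S' \cup \{i\}, \quad \twonorm{\vec{Z}^{[t-1]} \vu} \leq \sqrt{N}
\]
achieves value $> \sqrt{\kappain}$. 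This program is solvable in $\poly(N, d, \log \kappain)$ time by standard convex optimization. We amplify by sampling $S'$ about $O((k/\delta) \log(T/\gamma))$ times, solving the $d$ programs per candidate, and keeping the first $S'$ with $|J_t| \geq \Omega(d\delta/k)$; if no such $S'$ is found, we output the current preconditioner.

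Given $S'$ and the optimizers $\{\vu(i)\}_{i \in J_t}$, define $\btemp^{[t]}$ whose $i$-th row is $\vu(i)^\top$ for $i \in J_t$ and $\vec{e}_i^\top$ otherwise, and set $\bout^{[t]} = \btemp^{[t]} \bout^{[t-1]}$, $\iout^{[t]} = \iout^{[t-1]} \cup S'$. Property 1 of \Cref{defn:good_change_of_basis} is maintained because each modified column equals $\vec{Z}^{[t-1]} \vu(i)$, whose norm is $\leq \sqrt{N}$ by the program's constraint. Property 2 follows by computing $(\btemp^{[t]})^{-1}$ explicitly (its $i$-th row is supported on $S' \cup \{i\}$ for $i \in J_t$ and equals $\vec{e}_i^\top$ otherwise), and composing with $(\bout^{[t-1]})^{-1}$ whose $j$-th row is supported on $\iout^{[t-1]} \cup \{j\}$, giving $\supp((\bout^{[t]})^{-1}_j) \subseteq \iout^{[t]} \cup \{j\}$. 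Property 3 for the newly fixed indices is the analog of \Cref{claim:fixed_indices}: for $i \in J_{t'}$, the $i$-th coordinate of $(\bout^{[t]})^{-\top} \vw$ collapses to $\vu^{[t'-1]}_i / (\btemp^{[t']})_{ii} \leq \kappain/\sqrt{\kappain} = \sqrt{\kappain}$ for every $t \geq t'$, where $\vu^{[t'-1]} = (\bout^{[t'-1]})^{-\top} \vw$ is controlled by the input preconditioner's property 3.

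The termination argument is that the $J_t$'s are pairwise disjoint: once $i$ enters $J_{t'}$, the bound just derived keeps $|((\bout^{[t]})^{-\top} \vw)_i| \leq \sqrt{\kappain}$ for all $t > t'$, so no later convex program certifies $i$ into another $J_t$. Since each $|J_t| \geq \Omega(d\delta/k)$ and $\bigcup_t J_t \subseteq [d]$, the loop must halt within $T = O(k/\delta)$ iterations, contributing $|\iout| \leq |\iin| + T(k-1) = |\iin| + O(k^2/\delta)$ as required. The main obstacles I anticipate are: (i) carefully verifying the nested-support property for the composed inverse $(\bout^{[t]})^{-1}$ and tracking how $(\btemp^{[t]})^{-\top}$ acts on already-fixed coordinates through the induction (the non-trivial off-diagonal entries in rows of $(\btemp^{[t]})^{-1}$ must stay confined to $S' \cup \iin$); (ii) arguing that failure to find any large $J_t$ across the sampling repetitions really does certify property 3 at threshold $\sqrt{\kappain}$ with total badness budget $\deltain + \delta$, rather than some weaker guarantee; and (iii) controlling bit-precision so that the convex program runs in $\poly(\log \kappain)$ time and the union bound across $T$ iterations and $\poly(k/\delta)$ samples per iteration preserves the overall success probability $1 - \gamma$.
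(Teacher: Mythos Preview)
Your overall strategy matches the paper's: iterate a win-win step that samples $S'\sim\binom{[d]}{k-1}$, solves a per-index convex program to find a set $J$ of ``improvable'' coordinates, updates those rows, and terminates after $O(k/\delta)$ rounds by disjointness of the $J_t$'s. The paper's \Cref{alg:improve_norm_full} and its analysis via \Cref{lem:prop1,lem:prop2,lem:prop3} follow exactly this template, just phrased in the original basis rather than the preconditioned one.

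There is, however, a genuine gap in your disjointness/termination step. The paper's feasible set $\mathcal{W}_j$ (line~\ref{lineno:constraints}) carries the extra constraint $|((\bin)^{-\top}\vv)_j|\le\kappain$, tying the candidate vector to the \emph{input} preconditioner's guarantee; \Cref{claim:update_good} then shows that after row $j$ is updated once, \emph{every} $\vv\in\mathcal{W}_j$ satisfies $|(\vec{B}^{-\top}\vv)_j|\le\sqrt{\kappain}$, so $j$ never re-enters a later $J$. Your program omits this constraint, and your justification (``the bound just derived keeps $|((\bout^{[t]})^{-\top}\vw)_i|\le\sqrt{\kappain}$'') only covers $\vw$ that are $k$-sparse with a \emph{random} support on which the input preconditioner's property~3 already holds. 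The feasible set of your program at step $t$ allows arbitrary $\vu$ supported on $\iout^{[t-1]}\cup S'\cup\{i\}$, i.e.\ $\vw=(\bout^{[t-1]})^\top\vu$ that need not be sparse and need not satisfy $|((\bin)^{-\top}\vw)_i|\le\kappain$. Concretely: suppose column $i$ of $\dataset$ is well approximated by $S'_{t'}$, so $i\in J_{t'}$ and the new column $i$ of $\vec{Z}^{[t']}$ is a small residual; if that residual is in turn well approximated by some fresh $S'_t$ at a later step, your program will again certify $|\vu_i|>\sqrt{\kappain}$ and put $i\in J_t$. Then the $J_t$'s are not disjoint and the bound $T=O(k/\delta)$ on the number of rounds (hence $|\iout|\le|\iin|+O(k^2/\delta)$) does not follow. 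The fix is exactly the paper's: add the constraint $|((\bin)^{-\top}\vw)_i|\le\kappain$ (equivalently, a bound on $|\vu_i|$ scaled by the ratio $(\bout^{[t-1]})_{ii}/(\bin)_{ii}$) to the program.
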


Once we have the above theorem, we are ready to prove \Cref{thm:find_good_basis}.

\begin{proof}[Proof of \Cref{thm:find_good_basis}]
    We construct the final preconditioner $(\vec{B},I)$ by repeatedly applying $\improvenorm$. Formally, we define a sequence of preconditioners \[(\vec{B}^{[i]},I^{[i]})=\improvenorm(\dataset,\vec{B}^{[i-1]},I^{[i-1]},\kappa^{[i-1]},\delta',0.99/(\log\log \kappa(\dataset)))\] with $\kappa^{[i]}=\sqrt{\kappa^{[i-1]}}$, $\vec{B}^{[0]}=\vec{I}_{(d)}, I^{[0]}=\emptyset$ and $\kappa^{[0]}=\kappa(\dataset)$. We will now show that $(\vec{B}^{[t]},I^{[t]})$ is a $\big(O({k^2 t}/{\delta'}),k,t\delta',\kappa^{[t]}\big)$-good preconditioner for the uniform distribution over the rows of $\dataset$ for any $t$ with probability at least $1-(0.99t/(\log\log \kappa(\dataset)))$. We will prove this by induction. The base case ($t=0$) is immediate.  Assume that the statement is true for $t-1$. We will now prove it for $t$. From \Cref{thm:improve_norm_single_step}, the inductive hypothesis, and the definition of $(\vec{B}^{[t]},I^{[t]})$, we immediately have that $(\vec{B}^{[t]},I^{[t]})$  is a $\big(O({k^2t}/{\delta'}),k,t\delta',\sqrt{\kappa^{[t-1]}}=\kappa^{[t]}\big)$-good preconditioner with probability $1-(0.99t/(\log \log (\kappa \dataset)))$. 

    To complete the proof, we set $t=\log\log \kappa(\dataset)$ and $\delta'=\delta/t=\delta/(\log \log \kappa(\dataset))$. Clearly, by definition of $\kappa^{[t]}$, we have that $\kappa^{[t]}=(\kappa(\dataset))^{\frac{1}{2^{t}}}\leq O(1)$ for $t=(\log\log \kappa(\dataset))$. Thus, we have that $(\vec{B},I)=(\vec{B}^{[t]},I^{[t]})$ is a $\big(O({k^2(\log \log\kappa(\dataset))^2}/{
    \delta}),k,\delta\big)$ for the uniform distribution over the rows of $\dataset$. The run time is immediate from the run time of $\improvenorm$ (\Cref{thm:improve_norm_single_step}).
\end{proof}

\begin{algorithm}[ht]
    \caption{$\mathsf{ImproveNorm}(\dataset,\bin,\iin,\kappain,k, \delta,\gamma)$}
    \label{alg:improve_norm_full}
    \DontPrintSemicolon
        \KwIn{Dataset $\dataset\in \R^{N\times d}$, invertible matrix $\bin\in \R^{d\times d}$, set of indices $\iin\subseteq[d]$, $\kappain > 0$, sparsity parameter $k\in\mathbb{N}$, failure probability over support parameter $\delta\in (0,1)$, failure probability of algorithm parameter $\gamma\in (0,1)$}
        \KwOut{A pair $(\bout,\iout)$ consisting of a matrix $\bout\in\R^{d\times d}$ and $\iout\subseteq[d]$}
        \BlankLine
        Let $C\ge 1$ be a sufficiently large universal constant;\;
        Set $I\gets\iin$ and $\vec{B}\gets \bin$;\;
        \For{$\ell\gets 1$ \KwTo $3k/\delta$}{
            \label{lineno:loop_outer}
        \For{$m\gets 1$ \KwTo $(Ck/\delta) \log(d/\gamma)$}{
        Sample a random subset $S'$ of $[d]$ of size $k-1$;\;
        \label{lineno:random_set_sample}
        Set $J\gets \emptyset$, 
        $\btemp\gets\vec{B}$, and
        $\itemp\gets I\cup S'$;\;
        \label{lineno:itemp}
        \For{$j \in [d]\setminus \itemp$}{
            \label{lineno:for_condition_itemp}
            Let ${\cal W}_j = \{\vv\in \R^d: \supp(\vv)\subseteq S'\cup\{j\}, \|\dataset \vv\|_2\le \sqrt{N}, |((\bin)^{-\top}\vv)_j| \le \kappain\}$;\label{lineno:constraints}\;
            Let $\vw$ be the solution to the following program:
            \begin{equation*}
                \begin{aligned}
                    \max\; & |(\vec{B}^{-\top}\vv)_j| \\
                    \text{s.t.}\; & \vv \in {\cal W}_j
                    \end{aligned}
            \end{equation*}\;
            \label{lineno:convex_program}
            \If{$|(\vec{B}^{-\top}{\vw})_j|> \sqrt{\kappain}$}{
                \label{lineno:condition_include_j}
                Set $J\gets J\cup \{j\}$ and
                \label{lineno:update_basis0}
                $\btemp_j\gets {\vw}^{\top}$;\;
            \label{lineno:update_basis1}

            }
        }
        \If{$|J|\geq {\delta d}/({3k})$}{
            \label{lineno:update_is_large}
            Set $\vec{B}\gets \btemp$,
            $I\gets \itemp$ and \textbf{break} from inner loop;\;
        }
        \label{lineno:update_is_large_end}
     
        }
        \If{{Inner loop did not change $(\vec{B},I)$}}{
        \label{lineno:break_outer}
        \textbf{break} from outer loop;\;
        }
        }
        Let $(\bout,\iout)\coloneq(\vec{B},I)$;\;
\end{algorithm}
\begin{remark}
    The inverses in \Cref{alg:improve_norm_full} do not need to be explicitly computed. It follows from \Cref{lem:preconditioner_properties} that  $(\vec{B}^{-\top}\vw)_j=\frac{\vw_j}{\vec{B}_{jj}}$ for $j\notin I$ when $(\vec{B},I)$ is a good preconditioner.
\end{remark}
\subsection{Analysis of \Cref{alg:improve_norm_full}}
In this section, we prove \Cref{thm:improve_norm_single_step}.
    Before proving the theorem, we outline how the algorithm works. 
    \begin{enumerate}
        \item Starting with $(\vec{B},I)\gets(\bin, \iin)$, the algorithm repeatedly samples random sets $S'$ of size $k-1$ and tests whether there are many coordinates $j\notin \iin \cup S'$ for which one can find a feasible $\vv$, supported on $S'\cup \{j\}$, with $|(\vec{B}^{-\top}\vv)_{j}|$ being large.
        \item For any such $j$ where this coefficient exceeds $\sqrt{\kappain}$, the algorithm replaces the $j$-th row of $\vec{B}$ with $\vv^{\top}$. We will show that this update reduces the magnitude of the $j$-th coordinate in the new basis. 
        \item If the number of such indices $j$ exceeds $\delta d/(3k)$, the algorithm adds $S'$ to $I$ and accepts the update; otherwise, it resamples $S'$ and repeats the previous steps. If this fails to produce an update after many repetitions, we conclude that $(\vec{B},I)$ is already a good preconditioner.
        \item The process continues for $O(k/\delta)$ rounds, each improving an additional $\Omega(\delta d/k)$ coordinates, or terminates when no further improvement is possible.
        \item The final output $(\bout,\iout)$ satisfies $|\iout|\leq k^2/\delta$ and we will show that it is a good preconditioner with the desired parameters.
    \end{enumerate}
   We prove that the output $(\bout,\iout)$ is a $(\ell,k,\deltain+\delta,\sqrt{\kappain})$-good preconditioner for the uniform distribution over the rows of $\dataset$. For a set $S\subseteq [d]$, recall that  $\normset{S}$ denotes the set of vectors $\vw\in \R^{d}$ such that $\supp(\vw)\subseteq S$ and $\frac{1}{N}\cdot \twonorm{\dataset\vw}^2\leq 1$. We will repeatedly use this notation in the proof. We prove that the three properties required by \Cref{defn:good_change_of_basis} are satisfied. 
   The following structural structural claim about preconditioners satisfying
\Cref{defn:good_change_of_basis} wil be useful throughout the section.

\begin{claim}
    \label{lem:preconditioner_properties}
    Let $(\vec{B},I)$ be a $(\ell,k,\delta)$-good preconditioner for any distribution $D$. Then, the following hold:
    \begin{enumerate}
        \item $|\supp(\vec{B}_i)|\leq \ell+1$ for all $i\in [d]$.
        \item $(\vec{B}^{-1})_{ii}=1/(\vec{B}_{ii})$ for all $i\notin I$.
    \end{enumerate}
\end{claim}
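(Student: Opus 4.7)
The plan is to exploit the structural constraint in Property (2) of \Cref{defn:good_change_of_basis}, namely that $\supp((\vec{B}^{-1})_i) \subseteq I \cup \{i\}$ for every $i$, which forces $\vec{B}^{-1}$ to have a block-triangular form (with $I$-indexed rows/columns listed first). Specifically, for $j \in I$ the $j$-th row of $\vec{B}^{-1}$ is supported inside $I$, so the $I\times \overline{I}$ block of $\vec{B}^{-1}$ vanishes; and for $j \in \overline{I}$ the $j$-th row is supported inside $I \cup \{j\}$, so the $\overline{I}\times \overline{I}$ block of $\vec{B}^{-1}$ is diagonal. Call this diagonal $D$ and the $I\times I$ block $P$. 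Both $P$ and $D$ must be invertible since $\vec{B}^{-1}$ is.

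For Item 2, I would avoid inverting the block matrix altogether and just use the identity $\vec{B}\vec{B}^{-1}=\vec{I}_{(d)}$. Fix $i \notin I$. Expanding the $(i,i)$ entry gives $\sum_{j} \vec{B}_{ij}(\vec{B}^{-1})_{ji} = 1$. A term can contribute only if $(\vec{B}^{-1})_{ji} \ne 0$, i.e.\ if $i \in \supp((\vec{B}^{-1})_j) \subseteq I \cup \{j\}$. Since $i \notin I$, this forces $j=i$. Hence $\vec{B}_{ii}(\vec{B}^{-1})_{ii}=1$, which is the claim.

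For Item 1, I would invert the block form above using the standard block-triangular inversion formula to get
\[
\vec{B} \;=\; \begin{pmatrix} P^{-1} & 0 \\ -D^{-1} R P^{-1} & D^{-1} \end{pmatrix},
\]
where $R$ is the $\overline{I}\times I$ block of $\vec{B}^{-1}$. Reading off rows: a row indexed by $i \in I$ is supported on columns $I$, giving at most $|I| \le \ell$ nonzeros; a row indexed by $i \in \overline{I}$ is supported on columns $I$ (from $-D^{-1}RP^{-1}$) together possibly with column $i$ (from $D^{-1}$), giving at most $\ell+1$ nonzeros. This yields $|\supp(\vec{B}_i)|\le \ell+1$ uniformly in $i$.

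There is no real obstacle here; the statement is essentially an algebraic consequence of the sparsity pattern imposed by Property (2) of \Cref{defn:good_change_of_basis}, and the only mild subtlety is being careful with rows-vs-columns in the paper's notation (recall $\vec{X}_i$ denotes the $i$-th row, so $(\vec{B}^{-1})_i$ is a row of $\vec{B}^{-1}$). The block-inversion argument is clean, and the $\vec{B}\vec{B}^{-1}=\vec{I}_{(d)}$ argument for Item 2 is a one-line combinatorial observation using the support constraint.
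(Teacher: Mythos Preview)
Your proposal is correct and essentially matches the paper's proof: both identify the block lower-triangular structure of $\vec{B}^{-1}$ forced by Property~(2) (with diagonal $\overline{I}\times\overline{I}$ block) and invert it to read off Item~1. For Item~2 the paper simply reads the diagonal of $D^{-1}$ from the inverted block form, whereas you give an equivalent one-line argument directly from $(\vec{B}\vec{B}^{-1})_{ii}=1$; either works.
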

\begin{proof}
    Without loss of generality, assume $I=[\ell]$ (this can be achieved by permuting rows and columns). Thus, from \Cref{defn:good_change_of_basis} (2), we have that 
  \[
        \vec{B}^{-1}=\begin{pmatrix}
\vec{A} & \vec{0} \\
\vec{C} & \vec{D}
\end{pmatrix}
\]
    where $\vec{A}\in \R^{\ell\times \ell}$ is invertible and $\vec{D}$ is an invertible diagonal matrix. Now, on applying the inverse, we have that 
    \begin{equation*}
        \vec{B}=\begin{pmatrix}
\vec{A}^{-1} & \vec{0} \\
-\vec{D}^{-1}\vec{C}\vec{A}^{-1} & \vec{D}^{-1}
\end{pmatrix}
    \end{equation*}
    The conclusions of the lemma follow by inspecting the above matrix. 
\end{proof}
We are now ready to start proving the correctness of \Cref{alg:improve_norm_full}. First, we show that Property (1) is satisfied.
   \begin{lemma}[{Property 1}]
   \label{lem:prop1}
       For all $i\in [d]$, it holds that
       \[
       \frac{1}{N}\cdot \twonorm{\dataset^{\top}(\bout)_{j}}^2\leq 1
       \]
   \end{lemma}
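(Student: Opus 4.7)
The plan is a simple case analysis on how each row $(\bout)_j$ came to be. Inspecting \Cref{alg:improve_norm_full}, the matrix $\vec{B}$ is initialized to $\bin$ and the only place any row gets modified is on \Cref{lineno:update_basis1}, where $\btemp_j$ is set to $\vw^{\top}$ for a $\vw$ that solved the convex program on \Cref{lineno:convex_program}. Every such $\vw$ is feasible, i.e., $\vw \in \W_j$, and the constraint set $\W_j$ defined on \Cref{lineno:constraints} explicitly includes $\twonorm{\dataset\vw}\le \sqrt{N}$. Moreover, the updates are only promoted into $\vec{B}$ (via $\vec{B}\gets \btemp$ on \Cref{lineno:update_is_large_end}) at the end of an accepted inner iteration, so any row of $\bout$ that differs from the corresponding row of $\bin$ was overwritten by some such $\vw^{\top}$.

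So for each $j\in[d]$ one of two things holds. Either (i) $(\bout)_j = (\bin)_j$, in which case Property 1 of \Cref{defn:good_change_of_basis} applied to the $(\ell,k,\deltain,\kappain)$-good preconditioner $(\bin,\iin)$ — specialized to the uniform distribution over the rows of $\dataset$ — gives $\frac{1}{N}\twonorm{\dataset(\bin)_j^{\top}}^2 \le 1$; or (ii) $(\bout)_j = \vw^{\top}$ for some $\vw\in \W_j$, in which case the membership constraint directly yields $\frac{1}{N}\twonorm{\dataset\vw}^2 \le 1$. Either way, $\frac{1}{N}\twonorm{\dataset(\bout)_j^{\top}}^2\le 1$, which is exactly the claim.

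There is essentially no obstacle here: this lemma is just book-keeping that certifies the algorithm never produces a row that is ``too long'' with respect to $\dataset$. The only subtlety to check is that the convex program on \Cref{lineno:convex_program} is indeed constrained over $\W_j$ and not some larger set; this is immediate from the pseudocode. The real work of the proof of \Cref{thm:improve_norm_single_step} will lie in Properties 2 and 3 of \Cref{defn:good_change_of_basis}: verifying the sparsity structure of $\bout^{-1}$ (so that it is compatible with $\iout$) and, most importantly, the win--win argument that bounds $\|(\bout^{-\top}\vw)_{\overline{\iout}}\|_\infty$ by $\sqrt{\kappain}$ with probability at least $1-(\deltain+\delta)$ over the random support. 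Property 1, handled here, is the easy warm-up.
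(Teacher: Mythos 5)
Your proof is correct and follows the same approach as the paper's: the paper likewise observes that Property~1 holds throughout because the initial rows inherit the bound from $(\bin,\iin)$ being a good preconditioner (Property~1 of \Cref{defn:good_change_of_basis} applied to the uniform distribution over rows of $\dataset$), while any overwritten row is some $\vw\in\W_j$ and the constraint $\|\dataset\vv\|_2\le\sqrt{N}$ in \Cref{lineno:constraints} directly provides the bound. Your version is slightly more explicit in tracking when $\btemp$ is promoted to $\vec{B}$, but the argument is the same.
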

   \begin{proof}
  The second constraint in the definition of ${\cal W}_j$ in \Cref{lineno:constraints}, together with the fact that we started with a good preconditioner guarantees that \[\frac{1}{N}\cdot \twonorm{\dataset^{\top}(\btemp)_i}^2\leq 1\] for all $i\in [d]$ throughout the algorithm. Since $\btemp$ is the running variable that is ultimately output, property (1) of \Cref{defn:good_change_of_basis} holds for the output of the algorithm.
\end{proof}
Next, we show that Property (2) is satisfied.
 \begin{lemma}[Property 2]
 \label{lem:prop2}
     For all $i\in [d]$, it holds that 
     \[
     \supp(((\bout)^{-1})_i)\subseteq I\cup \{i\}.
     \]Furthermore, we have that $|\iout|\leq |\iin|+ O(k^2/\delta)$.
 \end{lemma}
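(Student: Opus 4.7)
The plan is to maintain, as an invariant through the iterations of \Cref{alg:improve_norm_full}, that the working pair $(\vec{B}, I)$ has the same block-triangular form that the proof of \Cref{lem:preconditioner_properties} extracted from a good preconditioner: after permuting rows and columns so that $I$ comes first,
\[
\vec{B} \;=\; \begin{pmatrix} M_{11} & \vec{0} \\ M_{21} & D \end{pmatrix},
\]
with $M_{11}$ an invertible $|I|\times|I|$ block and $D$ a nonsingular diagonal block on $[d]\setminus I$. From this invariant the block-inverse formula immediately gives $\supp((\vec{B}^{-1})_i) \subseteq I \cup \{i\}$ for every $i\in[d]$, which is exactly the support assertion of the lemma applied to $(\bout, \iout)$.

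To verify the invariant is preserved by an accepted update $(\vec{B}, I)\to(\btemp, \itemp)$ with $\itemp = I \cup S'$, I will check row by row that $\btemp$ has the required structure with respect to $\itemp$: rows $i \in I$ are untouched and retain support in $I \subseteq \itemp$; rows $i \in S'\setminus I$ are also untouched and by the inductive invariant have support in $I \cup \{i\} \subseteq \itemp$; updated rows $j \in J$ are overwritten by $\vw^\top$ with $\supp(\vw) \subseteq S' \cup \{j\} \subseteq \itemp \cup \{j\}$ by the definition of $\mathcal{W}_j$ on \Cref{lineno:constraints}; and the remaining rows $j \notin \itemp \cup J$ are untouched with support in $I \cup \{j\} \subseteq \itemp \cup \{j\}$. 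Invertibility of the new $M_{11}$ follows by applying the same block analysis one level down: the $I$-rows still form an invertible $I\times I$ sub-block inherited from the previous invariant, and the $(S'\setminus I)$-rows contribute a further diagonal sub-block whose entries $\vec{B}_{ii}$ are nonzero by induction. The new $D$ has diagonal entries that are either unchanged previous values (nonzero by induction) or updated values $\vw_j$ for $j \in J$; the latter are nonzero because the test on \Cref{lineno:condition_include_j} forces $|(\vec{B}^{-\top}\vw)_j| > \sqrt{\kappain} > 0$ and, for $j \notin I$, this quantity equals $\vw_j/\vec{B}_{jj}$ by the remark following \Cref{alg:improve_norm_full} (itself a direct consequence of \Cref{lem:preconditioner_properties}).

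The size bound $|\iout| \le |\iin| + O(k^2/\delta)$ is the easy part. Each outer iteration either leaves $I$ unchanged (via the break on \Cref{lineno:break_outer}) or enlarges $I$ by $|S'\setminus I| \le k-1$. Since the outer loop executes at most $3k/\delta$ times, the total growth is bounded by $(3k/\delta)(k-1) = O(k^2/\delta)$.

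The main obstacle I anticipate is stating the invariant crisply across the nested loops: the inner loop builds $\btemp, \itemp$ tentatively before committing, so the invariant must be phrased for the last \emph{committed} $(\vec{B}, I)$ and re-verified each time the test on \Cref{lineno:update_is_large} triggers acceptance of an update. Once that bookkeeping is pinned down, the block-inverse computation and the counting argument for $|\iout|$ are mechanical.
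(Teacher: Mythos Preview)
Your proposal is correct and follows essentially the same approach as the paper: both arguments maintain a block-lower-triangular invariant (yours on $\vec{B}$, the paper's on $(\btemp)^{-1}$) that immediately yields the row-support condition on the inverse, and both bound $|\iout|$ by counting at most $k-1$ new indices added to $I$ per outer iteration over $3k/\delta$ iterations. The only cosmetic difference is that you verify the structure of $\vec{B}$ after each committed update and invoke the block-inverse formula, whereas the paper tracks the support of $(\btemp)^{-1}$ row by row through each update in \Cref{lineno:update_basis1} and explicitly constructs the changed row of the inverse; your phrasing is arguably cleaner since it sidesteps that explicit construction.
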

 \begin{proof}
   We show that $\bout$ is invertible with inverse satisfying $\supp(((\bout)^{-1})_i)\subseteq \iout\cup \{i\}$ for all $i\in [d]$.  We will do so by showing that $\btemp$ is invertible throughout the run of the algorithm. We describe the construction of $(\btemp)^{-1}$ and show that the following invariant holds throughout the algorithm: 
   \begin{equation}
       \supp((\btemp)^{-1})_i)=\{i\}\cup \itemp\,,\;\text{ for all }\;i\in [d] \tag{$\inv$}\label{equation:invariant-for-property-2}
   \end{equation}
   Clearly, invariant \eqref{equation:invariant-for-property-2} holds in the beginning with $(\btemp)^{-1}=(\bin)^{-1}$ since $\btemp=\bin$ and $\itemp=\iin$.
   Note that $\btemp$ is updated in \Cref{lineno:update_basis1}. Assume that the invariant \eqref{equation:invariant-for-property-2} holds before \Cref{lineno:update_basis1}. 
   
   We now describe how the inverse $(\btemp)^{-1}$ changes when $\btemp$ is updated according to \Cref{lineno:update_basis1}. Let $j\in J$ be the row of $\btemp$ that is being updated. Observe that $j\notin \itemp$ by \Cref{lineno:for_condition_itemp}. For any $\vec{x}\in \R^{d}$, let $\ztemp(\vec{x})\coloneq \btemp\vec{x}$ (before the update). 

   We claim that after modifying row $j$ of $(\btemp)$, it suffices to update only the corresponding row $j$ of $(\btemp)^{-1}$. 
To see this, observe that before the update we have
\[
((\btemp)^{-1})_{ij} = 0 \quad \text{for all } i \neq j,
\]
as guaranteed by the invariant~\eqref{equation:invariant-for-property-2} (since $j \notin \itemp$). 
That is, the $j$-th column of $(\btemp)^{-1}$ has all zeros except possibly at its $j$-th entry.
Now consider the effect of changing row $j$ of $(\btemp)$ on the product $(\btemp)^{-1} (\btemp)$. 
For any vector $\vec{x} \in \mathbb{R}^d$,
\[
((\btemp)^{-1} (\btemp) \vec{x})_i = \sum_{t} ((\btemp)^{-1})_{it} ((\btemp) \vec{x})_t.
\]
Since $((\btemp)^{-1})_{ij} = 0$ for all $i \neq j$, modifying the $j$-th row of $(\btemp)$ can only affect the term corresponding to $i = j$. 
Hence, $((\btemp)^{-1} (\btemp) \vec{x})_i$ remains unchanged for all $i \neq j$, meaning that the action of $(\btemp)^{-1}$ on $(\btemp)$ (and thus its inverse relationship) is preserved in all but the $j$-th coordinate. 
Therefore, to restore the invariant after the update, it is sufficient to change only the $j$-th row of $(\btemp)^{-1}$; all other rows remain valid.
   
   We now describe the change to row $j$ of the inverse. We change the $j$-th row of $(\btemp)^{-1}$ such that $(\btemp)^{-1}\btemp \vec{x}=\vec{x}$ for all $\vec{x}\in \R^{d}$. Let $Y= (\vw\cdot \vec{x}) =\btemp_j\cdot \vec{x}$ (after the update). Let $S'=(i_1,\ldots, i_{k-1})$ be the set chosen in \Cref{lineno:random_set_sample}. We have that 
   \begin{align*}
    \vw_j\vec{x}_j&={Y}-\sum_{t\in[k-1]}\vw_{i_t}\vec{x}_{i_t}\\
    &={Y}-\sum_{p\in \itemp}\alpha_p\ztemp(\vec{x})_p\\
   \end{align*}
   for appropriate choice of $\alpha_k$. This is because $\{i_t\}_{t\in[k-1]}\subseteq \itemp$ (\Cref{lineno:itemp}) and $\supp(((\btemp)^{-1})_i)\subseteq \{i\}\cup\itemp$ for all $i$ (before the update). 
   Thus, we have that $\vec{x}_j=\frac{1}{\vw_j}\cdot {Y}-\sum_{p\in \itemp} \beta_p\ztemp(\vec{x})_p$ for appropriate $\beta_p$. Since we update $\btemp$ as $\btemp_j\gets \vw^{\top}$, we have that $\ztemp_j$ is updated to $Y$. Thus, we can update $(\btemp)^{-1}$ such that $((\btemp)^{-1})_{jj}\gets\frac{1}{\vw_j}$, $((\btemp)^{-1})_{jp}\gets\beta_{p}$ for $p\in \itemp$ and $0$ otherwise. 
   
   Note that $(\alpha)_{p\in [d]}$ and $(\beta)_{p\in[d]}$ in the previous step were not dependent on $\vec{x}$. They were only dependent on the vector $\vec{w}$ and $\btemp$ before the update. Thus, we have constructed an inverse $(\btemp)^{-1}$ such that $(\btemp)^{-1}\btemp \vec{x}=\vec{x}$ for all $\vec{x}\in \R^{d}$. We also have that $\supp(((\btemp)^{-1})_{i})\subseteq \itemp \cup \{i\}$ for all $i\in [d]$, by construction. 
   
   Thus, we have that $(\btemp)^{-1},\btemp$ and $\itemp$ satisfy the invariant \eqref{equation:invariant-for-property-2} after the update in \Cref{lineno:update_basis1}. Since $\btemp$ is used to update $\vec{B}$, we have that the output of the algorithm also satisfies the invariant \eqref{equation:invariant-for-property-2}. 
   
   Finally, we  argue that $|\iout|\leq \ell+O(k^2/\delta)$. To see this, observe that each run of the outer loop (starting on \Cref{lineno:loop_outer}) adds at most $k$ variables to $\iin$ (\Cref{lineno:itemp}) and the outer loop runs $O(k/\delta)$ times.
    \end{proof}
    Finally, we show that $(\bout,\iout)$ satisfy property (3) of \Cref{defn:good_change_of_basis}. Formally, we show the following lemma.
    \begin{lemma}[Property 3]
    \label{lem:prop3}
        With probability at least $1-\delta$ over $S\sim \binom{[d]}{k}$, for all $\vw\in \normset{S}$, it holds that
        \[
        \norm{((\bout)^{-\top}\vw)_{\overline{\iout}}}_{\infty}\leq \sqrt{\kappain}.
        \]
    \end{lemma}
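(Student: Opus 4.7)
My plan is to bound, over the random choice of $S \sim \binom{[d]}{k}$, the probability that there exists some coordinate $j \in S \setminus \iout$ and some $\vw \in \normset{S}$ with $|((\bout)^{-\top}\vw)_j| > \sqrt{\kappain}$. Since \Cref{lem:prop1,lem:prop2} have already established Properties (1) and (2) for $(\bout, \iout)$, \Cref{lem:preconditioner_properties} applies and gives $((\bout)^{-\top}\vw)_j = \vw_j/\bout_{jj}$ for every $j \notin \iout$, reducing the quantity of interest to $|\vw_j|/|\bout_{jj}|$. I will further restrict attention to $\vw$ that are \emph{good} with respect to $\bin$---satisfying $|((\bin)^{-\top}\vw)_j| \le \kappain$ for all $j \notin \iin$---which, by the hypothesis that $(\bin, \iin)$ is $(\ell, k, \deltain, \kappain)$-good, holds universally over $\vw \in \normset{S}$ with probability at least $1 - \deltain$ over $S$; a union bound at the end absorbs the $\deltain$ term.

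Next I partition the coordinates outside $\iout$ into \emph{updated} ones---those $j$ for which some outer iteration $t_j$ executed $\btemp_j \gets (\vw^{(j)})^\top$ for a witness $\vw^{(j)}$ returned by the convex program on Line~\ref{lineno:convex_program}---and \emph{untouched} ones, which still satisfy $\bout_{jj} = (\bin)_{jj}$. For an updated $j$, the condition on Line~\ref{lineno:condition_include_j} gives $|((\vec{B}^{(t_j)})^{-\top}\vw^{(j)})_j| > \sqrt{\kappain}$; combined with the inductive claim below that each coordinate is updated at most once, so that $\vec{B}^{(t_j)}_{jj}=(\bin)_{jj}$, this yields $|\vw^{(j)}_j| > \sqrt{\kappain}\,|(\bin)_{jj}|$. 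Conversely, $\vw^{(j)} \in {\cal W}_j$ forces $|\vw^{(j)}_j| \le \kappain\,|(\bin)_{jj}|$, and the same bound $|\vw_j| \le \kappain\,|(\bin)_{jj}|$ holds for any good $\vw \in \normset{S}$. Hence
\[
|((\bout)^{-\top}\vw)_j|=\frac{|\vw_j|}{|\vw^{(j)}_j|}\le \frac{\kappain\,|(\bin)_{jj}|}{\sqrt{\kappain}\,|(\bin)_{jj}|}=\sqrt{\kappain},
\]
so updated coordinates are never bad. The same inequality applied to any $\vv \in {\cal W}_j$ at a later iteration shows the branching condition cannot re-trigger at $j$, which establishes the inductive claim and hence the pairwise disjointness of the $J^{(t)}$'s (and their disjointness from the grown $\iout$). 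Since each successful outer iteration contributes $|J^{(t)}| \ge \delta d/(3k)$ and there are at most $3k/\delta$ of them, the outer loop cannot complete all iterations and must terminate via an inner-loop failure.

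Therefore every bad $j \in S \setminus \iout$ is untouched, and for such $j$, setting $S' := S\setminus\{j\}$, the vector $\vw$ itself is a feasible point of the convex program on Line~\ref{lineno:convex_program} at $(j, S')$ with objective value exceeding $\sqrt{\kappain}$, so $j \in J(S')$ at the final state. I then bound $p := \pr_{S'\sim\binom{[d]}{k-1}}[|J(S')|\ge \delta d/(3k)]$ at the final state: since termination occurs via an inner-loop failure, the $M = (Ck/\delta)\log(d/\gamma)$ independent $S'$-samples at the stopping iteration all produce $|J| < \delta d/(3k)$, an event of conditional probability $(1 - p)^M$. For $p \ge \delta/(3k)$ and a sufficiently large universal $C$, this quantity is at most $\gamma\delta/(3k)$, so a union bound over the $\le 3k/\delta$ potential stopping iterations gives $p \le \delta/(3k)$ with probability $\ge 1 - \gamma$ over the algorithm's randomness.

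Finally, a standard coupling plus Markov concludes the argument. Since $|J(S')|\le d$, $\E_{S'}[|J(S')|] \le p\cdot d + \delta d/(3k) \le 2\delta d/(3k)$. The joint distribution of $(S, j)$ with $S\sim\binom{[d]}{k}$ and $j$ uniform in $S$ coincides with that of $(S'\cup\{j\}, j)$ where $S'\sim\binom{[d]}{k-1}$ and $j$ is uniform in $[d]\setminus S'$, giving
\[
\E_{S}\!\left[\bigl|\{\text{bad untouched } j \in S\setminus \iout\}\bigr|\right] \le \frac{k}{d-k+1}\,\E_{S'}[|J(S')|] \le \frac{2\delta d}{3(d-k+1)} \le \delta,
\]
and Markov bounds the probability of any such $j$ existing by $\delta$. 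Intersecting with the $(1-\deltain)$-probable good event for $\bin$ completes the bound with total failure probability $\deltain + \delta$. The main obstacle I anticipate is the amplification step in the third paragraph: the algorithm's stopping state depends on all prior randomness, so the constant $C$ in the inner-loop count must be chosen large enough to absorb a union bound over the $\le 3k/\delta$ potential stopping points. Once that is handled, everything else follows routinely from the structural properties established for good preconditioners.
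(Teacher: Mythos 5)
Your proof is correct and follows essentially the same path as the paper's: the update-at-most-once observation (our Claim~\ref{claim:update_good}), the identification $\bout_{jj}=(\bin)_{jj}$ for untouched $j$, the coupling $(S,j)\leftrightarrow(S'\cup\{j\},j)$, and the amplification/union bound over outer iterations all appear in the paper's argument, with the paper packaging the counting step as Claim~\ref{claim:inf_norm_probalistic} and reasoning by contrapositive where you instead bound $\E_{S'}[|J(S')|]$ directly and apply Markov --- a somewhat more streamlined presentation of the same averaging argument. One minor imprecision worth fixing: the sentence ``the outer loop cannot complete all iterations and must terminate via an inner-loop failure'' does not follow from the arithmetic you give, since $3k/\delta$ successful iterations updating $\delta d/(3k)$ coordinates each is numerically consistent with covering exactly $[d]$; the paper instead splits into two termination cases (all coordinates updated, or inner-loop failure), and in the first case the conclusion is immediate because every $j\notin\iout$ is updated and hence non-bad by your second paragraph. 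You could either add that case split, or supply the stronger observation that the coordinates in the first successful iteration's $S'$ are permanently frozen and never updated, so for $k\ge 2$ strictly fewer than $d$ coordinates are available and the loop genuinely cannot run all $3k/\delta$ successful rounds.
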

     Before proving \Cref{lem:prop3}, we prove some intermediate claims and lemmas.
    The first is an expression for $(\vec{B}^{-\top}\vw)_{j}$ when $j\notin I$. This will be repeatedly used in the proof.
    \begin{claim}
        \label{claim:coeff_j}
            For any pair $(\vec{B},I)$ such that $\supp((\vec{B}^{-1})_i)\subseteq I\cup \{i\}$ for all $i\in [d]$ and $\vw\in \R^{d}$, it holds that $(\vec{B}^{-\top}\vw)_j=\frac{\vw_j}{\vec{B}_{jj}}$ for all $j\notin I$.
        \end{claim}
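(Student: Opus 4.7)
The plan is to compute $(\vec{B}^{-\top}\vw)_j$ directly from the definition and exploit the row-support hypothesis to collapse the sum to a single term, and then identify the remaining diagonal entry via \Cref{lem:preconditioner_properties}.

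More concretely, first I would expand
\[
(\vec{B}^{-\top}\vw)_j \;=\; \sum_{i\in[d]} (\vec{B}^{-\top})_{ji}\,\vw_i \;=\; \sum_{i\in[d]} (\vec{B}^{-1})_{ij}\,\vw_i.
\]
By hypothesis, the $i$-th row of $\vec{B}^{-1}$ is supported on $I\cup\{i\}$, so $(\vec{B}^{-1})_{ij}=0$ whenever $j\notin I\cup\{i\}$. Since $j\notin I$ by assumption, this forces $i=j$, and hence every term of the above sum with $i\neq j$ vanishes. Therefore
\[
(\vec{B}^{-\top}\vw)_j \;=\; (\vec{B}^{-1})_{jj}\,\vw_j.
\]

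It remains to identify $(\vec{B}^{-1})_{jj}$ with $1/\vec{B}_{jj}$. This is precisely item (2) of \Cref{lem:preconditioner_properties}. Although \Cref{lem:preconditioner_properties} is phrased for $(\ell,k,\delta)$-good preconditioners, an inspection of its proof shows that only the row-support condition (property (2) of \Cref{defn:good_change_of_basis}) and the invertibility of $\vec{B}$ are used. Both of these are available to us here: the support assumption is literally the hypothesis of the claim, and invertibility is implicit in writing $\vec{B}^{-1}$. Concretely, reordering coordinates so that $I=[\ell]$, the row-support hypothesis forces $\vec{B}^{-1}$ to have block form
\[
\vec{B}^{-1} \;=\; \begin{pmatrix}\vec{A} & \vec{0}\\[2pt] \vec{C} & \vec{D}\end{pmatrix},
\]
with $\vec{D}$ diagonal (since for any $i\notin I$, the only entries of row $i$ that may be nonzero outside $I$ occur at column $i$). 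Inverting this block form, as in the proof of \Cref{lem:preconditioner_properties}, yields $\vec{B}_{jj}=1/(\vec{B}^{-1})_{jj}$ for every $j\notin I$, and combining with the display above gives $(\vec{B}^{-\top}\vw)_j=\vw_j/\vec{B}_{jj}$.

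There is no real obstacle: the claim is essentially a bookkeeping statement whose substance is already contained in \Cref{lem:preconditioner_properties}. The only mild subtlety is to translate a row-support hypothesis on $\vec{B}^{-1}$ into a column-support conclusion after fixing $j\notin I$, which is precisely what allows the sum to collapse to one term.
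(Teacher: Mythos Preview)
Your proof is correct and follows essentially the same approach as the paper: expand $(\vec{B}^{-\top}\vw)_j$ as $\sum_i (\vec{B}^{-1})_{ij}\vw_i$, use the row-support hypothesis to collapse the sum to $(\vec{B}^{-1})_{jj}\vw_j$, and then invoke \Cref{lem:preconditioner_properties} to identify $(\vec{B}^{-1})_{jj}=1/\vec{B}_{jj}$. If anything, you are more careful than the paper both in the collapsing step (the paper's one-line justification cites only the $j$-th row support, whereas you correctly use the support condition for every row $i$) and in noting that \Cref{lem:preconditioner_properties} only requires the structural property (2), not the full good-preconditioner definition.
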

        \begin{proof}
        Since $\supp((\vec{B}^{-1})_j)\subseteq I\cup\{j\}$, we have that $(\vec{B}^{-\top}\vw)_j=(\vec{B}^{-1})_{jj}\cdot \vw_{j}$ for $j\notin I$. The claim follows by applying \Cref{lem:preconditioner_properties}.
        \end{proof}
   Next, we show that updating row $\vec{B}_j$ improves the quality of the preconditioner if we only care about the $j$-th coordinate.
   \begin{claim}
   \label{claim:update_good}
   For any $j\in [d]$ for which the row $\vec{B}_j$ is updated during the course of the algorithm, the following holds after the update: for all $\widetilde{\vw}\in \normset{[d]}$ with $|((\bin)^{-\top}\widetilde{\vw})_j|\leq \kappain$, it holds that $|(\vec{B}^{-\top}\widetilde{\vw})_{j}|\leq \sqrt{\kappain}$. 
    \end{claim}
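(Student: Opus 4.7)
The plan is to reduce the claim to a direct arithmetic inequality between $|\widetilde{\vw}_j|$, $|\vw_j|$, and $|(\bin)_{jj}|$ by invoking \Cref{claim:coeff_j} twice---once for $\bin$ and once for the post-update $\vec{B}$---and then to combine the update-triggering condition with the hypothesis on $\widetilde{\vw}$.

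First I would verify that we may apply \Cref{claim:coeff_j} at both ends. The index $j$ is processed inside the loop at \Cref{lineno:for_condition_itemp}, so $j\notin\itemp = I\cup S'$; since $\iin\subseteq I\subseteq\itemp$, we have $j\notin\iin$, and the update to $I$ can only add elements of $S'$, so $j$ remains outside $I$ after the update as well. The pair $(\bin,\iin)$ is a good preconditioner by assumption, and by \Cref{lem:prop2} the post-update pair still has the support structure required by \Cref{defn:good_change_of_basis}(2). Hence \Cref{claim:coeff_j} gives
\[
|((\bin)^{-\top}\widetilde{\vw})_j| \;=\; \frac{|\widetilde{\vw}_j|}{|(\bin)_{jj}|}\,,\qquad |(\vec{B}^{-\top}\widetilde{\vw})_j| \;=\; \frac{|\widetilde{\vw}_j|}{|\vw_j|}\,,
\]
where we used that the update at \Cref{lineno:update_basis1} sets $\vec{B}_{jj}=\vw_j$. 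The hypothesis $|((\bin)^{-\top}\widetilde{\vw})_j|\le\kappain$ therefore rewrites as $|\widetilde{\vw}_j|\le\kappain\,|(\bin)_{jj}|$, and the target inequality becomes $|\widetilde{\vw}_j|\le\sqrt{\kappain}\,|\vw_j|$.

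Next I would exploit the triggering condition from \Cref{lineno:condition_include_j}: at the moment of the update, the pre-update $\vec{B}$ satisfies $|(\vec{B}^{-\top}\vw)_j|>\sqrt{\kappain}$, which via \Cref{claim:coeff_j} gives $|\vw_j|>\sqrt{\kappain}\,|\vec{B}^{\mathrm{pre}}_{jj}|$. The key sub-claim is that this is the \emph{first} (and hence only) modification of row $j$ during the current call to $\improvenorm$, so that $\vec{B}^{\mathrm{pre}}_{jj}=(\bin)_{jj}$. Granting this, we get $|\vw_j|>\sqrt{\kappain}\,|(\bin)_{jj}|$, and chaining with the reformulated hypothesis yields
\[
|\widetilde{\vw}_j| \;\le\; \kappain\,|(\bin)_{jj}| \;<\; \frac{\kappain}{\sqrt{\kappain}}\,|\vw_j| \;=\; \sqrt{\kappain}\,|\vw_j|\,,
\]
which is exactly the conclusion of the claim.

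The only substantive step is justifying the ``first-update'' sub-claim, and I expect this to be the main (though short) obstacle. I would argue it as follows: suppose for contradiction that some earlier iteration had already replaced $\vec{B}_j$ by $(\vw')^\top$; by the same reasoning applied at that earlier update, $|\vw'_j|>\sqrt{\kappain}\,|(\bin)_{jj}|$. In any later iteration, the feasibility constraint in \Cref{lineno:constraints} forces $|((\bin)^{-\top}\vv)_j|\le\kappain$, i.e.\ $|\vv_j|\le\kappain\,|(\bin)_{jj}|$, so every $\vv\in{\cal W}_j$ satisfies $|(\vec{B}^{-\top}\vv)_j|=|\vv_j|/|\vw'_j|<\kappain\,|(\bin)_{jj}|/(\sqrt{\kappain}\,|(\bin)_{jj}|)=\sqrt{\kappain}$. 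Thus the strict triggering condition $>\sqrt{\kappain}$ can never again be met, so row $j$ is updated at most once across the algorithm and the identification $\vec{B}^{\mathrm{pre}}_{jj}=(\bin)_{jj}$ is justified. The remainder of the proof is pure arithmetic.
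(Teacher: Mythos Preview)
Your proof is correct and follows essentially the same route as the paper: both reduce the inequality to arithmetic on the diagonal entries via \Cref{claim:coeff_j}, using the triggering condition $|(\vec{B}^{-\top}\vw)_j|>\sqrt{\kappain}$ together with the feasibility constraint $|((\bin)^{-\top}\vw)_j|\le\kappain$ to sandwich $|\widetilde{\vw}_j|/|\vw_j|$. The one organizational difference is that you fold the ``row $j$ is updated at most once'' argument into the proof of the claim itself, whereas the paper establishes the claim only for the first update and then invokes it in the proof of \Cref{lem:prop3} to deduce uniqueness; both orderings are valid.
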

    \begin{proof}
      Note that the only place row $\vec{B}_j$ is possibly updated is in \Cref{lineno:update_basis1}. 
      Say we update $\vec{B}_{j}\gets \vw^{\top}$ for some sparse $\vw\in \normset{[d]}$ with $|((\bin)^{-\top}\vw)_{j}|\leq \kappain$ and $|((\btemp)^{-\top}\vw)_{j}|>\sqrt{\kappain}$. Before updating $\vec{B}_j$ for the first time, we have $(\vec{B}^{-1})_{jj}=((\btemp)^{-1})_{jj}=((\bin)^{-1})_{jj}$. From the premise about $\vw$ and \Cref{claim:coeff_j}, we have that \begin{equation}
      \label{eqn:premise_update}|((\bin)^{-1})_{jj}|\geq \frac{\sqrt{\kappain}}{|\vw_{j}|}.
      \end{equation} After the update, we have $(\vec{B}^{-1})_{jj}=\frac{1}{\vec{B}_{jj}}=\frac{1}{\vw_j}$. Now consider any vector $\widetilde{\vw}\in \R^{d}$ which satisfies the premise of the claim. We have that \begin{align*}
    |(\vec{B}^{-\top}\widetilde{\vw})_{j}|&=\frac{|\widetilde{\vw}_j|}{|\vw_j|}\leq \frac{|((\bin)^{-1})_{jj}\widetilde{\vw}_j|}{|((\bin)^{-1})_{jj}\vw_{j}|}\\
    &\leq \frac{\kappain}{\sqrt{\kappain}}\leq \sqrt{\kappain}.
      \end{align*}
    The first equality follows from \Cref{claim:coeff_j}. The penultimate inequality follows from \Cref{eqn:premise_update} and the fact that $|((\bin)^{-1})_{jj}\widetilde{\vw}_{j}|=|((\bin)^{-\top}\widetilde{\vw})_{j}|\leq \kappain$. This completes the proof of the claim.
    \end{proof}
    Our final intermediate lemma will be crucial in showing that whenever $(\vec{B},I)$ fails to satisfy property (3) of \Cref{defn:good_change_of_basis}, we can find a large set of indices to improve. This claim can be seen as a natural generalization of \Cref{lem:find_basis_approx_low_rank} from \Cref{sec:good_precond_overview}.
    \begin{lemma}  
        \label{claim:inf_norm_probalistic}
        Let $\vec{B}\in \R^{d\times d}$ and $\iin \subseteq I\subseteq [d]$ be such that $\supp((\vec{B}^{-1})_i)\subseteq \{i\} \cup I$ for all $i\in [d]$. Assume that with probability at least $(\deltain+\delta)$ over $S\sim \binom{[d]}{k}$, there exists $\vw\in \normset{S}$ such that  $\norm{(\vec{B}^{-\top}\vw)_{\overline{I}}}_{\infty}>\sqrt{\kappain}$. Then, with probability at least $\frac{\delta}{2k}$ over $S'\sim \binom{[d]}{k-1}$, it holds that 
        \begin{equation}
        \label{eqn:good_event}
            \Pr_{j\sim [d]\setminus S'}\Bigr[\exists \vw\in \normset{S'\cup \{j\}}, j\in \overline{I}, |((\bin)^{-\top}{\vw})_{i}|\leq \kappain \text{ and } |(\vec{B}^{-\top}{\vw})_{j}|>\sqrt{\kappain}\Bigr]\geq \delta/(2k)
        \end{equation}
    \end{lemma}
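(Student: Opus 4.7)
The plan is to combine the bad-event hypothesis with the goodness of the input preconditioner $(\bin,\iin)$, switch the sampling from $S\sim\binom{[d]}{k}$ to a pair $(S',j)$ with $S=S'\cup\{j\}$, and close with an averaging argument (a reverse Markov step).

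First, I would observe that under the sparsity hypothesis $\supp((\vec{B}^{-1})_i)\subseteq\{i\}\cup I$, any ``bad'' coordinate must automatically lie \emph{inside} $S$. Indeed, for any $\vw\in\normset{S}$ and $j\notin I$, the only nonzero term in $\sum_i(\vec{B}^{-1})_{ij}\vw_i$ is the one with $i=j$, yielding
\[
(\vec{B}^{-\top}\vw)_j \;=\; (\vec{B}^{-1})_{jj}\,\vw_j.
\]
Hence $|(\vec{B}^{-\top}\vw)_j|>\sqrt{\kappain}>0$ forces $\vw_j\neq 0$, and therefore $j\in\supp(\vw)\subseteq S$.

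Second, I combine the hypothesis with the goodness of $(\bin,\iin)$ via a union bound. Let $\mathcal{E}_1(S)$ be the event that some $\vw\in\normset{S}$ and some $j\in\overline{I}$ satisfy $|(\vec{B}^{-\top}\vw)_j|>\sqrt{\kappain}$ (by hypothesis, $\Pr_S[\mathcal{E}_1]\geq\deltain+\delta$), and let $\mathcal{E}_2(S)$ be the event that \emph{every} $\vw\in\normset{S}$ has $\|((\bin)^{-\top}\vw)_{\overline{\iin}}\|_\infty\leq\kappain$. Since $(\bin,\iin)$ is an $(\ell,k,\deltain,\kappain)$-good preconditioner (the standing assumption of \Cref{thm:improve_norm_single_step}), we have $\Pr_S[\mathcal{E}_2]\geq 1-\deltain$, so $\Pr_S[\mathcal{E}_1\cap\mathcal{E}_2]\geq\delta$. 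On this intersection, pick a witness $(\vw(S),j^\star(S))$. Step one guarantees $j^\star(S)\in S$; since $j^\star(S)\notin I\supseteq\iin$, event $\mathcal{E}_2$ additionally gives $|((\bin)^{-\top}\vw(S))_{j^\star(S)}|\leq\kappain$, so the inner event of \eqref{eqn:good_event} is realized at the coordinate $j=j^\star(S)\in S$.

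Finally, sampling $S\sim\binom{[d]}{k}$ followed by $j$ uniform in $S$ has the same law as sampling $S'\sim\binom{[d]}{k-1}$ followed by $j$ uniform in $[d]\setminus S'$ and setting $S=S'\cup\{j\}$. Writing $\mathcal{A}(S',j)$ for the inner event of \eqref{eqn:good_event}, the previous step yields $\Pr_{S',j}[\mathcal{A}(S',j)]\geq\delta/k$; the $1/k$ factor comes from requiring the uniform $j\in S$ to equal $j^\star(S)$. Setting $p(S')\coloneqq\Pr_j[\mathcal{A}(S',j)]$, a reverse-Markov argument---if $\Pr_{S'}[p(S')\geq\delta/(2k)]<\delta/(2k)$, then $\E_{S'}[p(S')]<\delta/(2k)+\delta/(2k)=\delta/k$, a contradiction---gives $\Pr_{S'}[p(S')\geq\delta/(2k)]\geq\delta/(2k)$, which is exactly \eqref{eqn:good_event}. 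The only nontrivial ingredient is the sparsity observation of the first step; everything else is careful bookkeeping, and I do not foresee any serious obstacle.
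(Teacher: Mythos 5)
Your proposal is correct and takes essentially the same route as the paper's proof: a union bound over the bad event and the goodness of $(\bin,\iin)$ to get probability $\delta$, the observation via $\supp((\vec{B}^{-1})_i)\subseteq\{i\}\cup I$ that the witnessing coordinate must lie in $S\cap\overline{I}$, the change of sampling from $(S,j)$ with $j\sim S$ to $(S',j)$ with $S=S'\cup\{j\}$, and a reverse-Markov averaging step. The paper invokes its Claim~\ref{claim:coeff_j} for the sparsity observation that you derive inline, and leaves the averaging step implicit, but the argument is the same.
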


    \begin{proof}
    Recall that $\normset{S}$ was the set defined as $\{\vw\in \R^{d}\mid \twonorm{\dataset\vw}\leq \sqrt{N} \text{ and } \supp(\vw)\subseteq S\}$.
        From the premise of lemma and the fact that $(\bin,\iin)$ is an $(\ell,k,\deltain,\kappain)$-good preconditioner, we have  that with probability at least $\delta$ over $S\sim \binom{[d]}{k}$ , there exists some $\vw\in \normset{S}$ such that (1) $\|{((\bin)^{-\top}\vw)_{\overline{\iin}}\|}_{\infty}\leq \kappain$ and (2) $\|{(\vec{B}^{-\top}\vw)_{\overline{I}}\|}_{\infty}>\sqrt{\kappain}$. This follows from a union bound: the first condition holds for all $\vw\in \normset{S}$ with probability at least $1-\deltain$ for random $S$ and the second happens for some $\vw\in \normset{S}$ with probability at least $\deltain+\delta$. 
        
        Since $\supp((\vec{B}^{-1})_i)\subseteq I\cup \{i\}$ for all $i\in [d]$, we have from \Cref{claim:coeff_j} that the non-zero indices of $\vec{B}^{-\top}\vw$ are contained in $S$ . Thus, we have that with probability at least $\delta/k$ over $S\sim \binom{[d]}{k}$ and $j\sim S$,  there exists $\vw\in \normset{S}$ such that $j\in \overline{I}$, $|((\bin)^{-\top}\vw)_j|\leq \kappain$ and $|(\vec{B}^{-\top}\vw)_j|>\sqrt{\kappain}$. 
        This is because a random index from $S$ witnesses the large infinity norm with probability at least $\delta/|S|\geq \delta/k$. 
        We now give an alternate way to sample $(S,j$). First sample $S'\sim \binom{[d]}{k-1}$ and then sample $j\sim [d]\setminus S'$ and form $(S,i)\coloneq (S'\cup \{j\},j)$. Clearly these two sampling techniques result in the same distribution. We will now work with the second one. By a standard averaging argument, we have that with probability at least $\delta/(2k)$ over $S'\sim \binom{[d]}{k-1}$, it holds that \begin{equation*}
            \Pr_{j\sim [d]\setminus S'}\Bigr[\exists \vw\in \normset{S'\cup \{j\}}, j\in \overline{I}, |((\bin)^{-\top}{\vw})_{j}|\leq \kappain \text{ and } |(\vec{B}^{-\top}{\vw})_{j}|>\sqrt{\kappain}\Bigr]\geq \delta/(2k).
        \end{equation*}  This completes the proof. 
    \end{proof}

    We are now ready to complete the proof of \Cref{lem:prop3}.
    \begin{proof}[Proof of \Cref{lem:prop3}]
    From \Cref{claim:update_good}, we have that every row $\vec{B}_j$ can be updated at most once during the run of the algorithm (\Cref{lineno:condition_include_j} is violated after the update). Thus, all the indices that are updated in distinct iterations of the outer loop of the algorithm are distinct. This means at most $3k/\delta$ iterations of the outer loop are possible as each iteration updates $\delta d/(3k)$ distinct indices and there are a total of $d$ indices. This justifies the choice of the number of iterations in the outer loop of \Cref{alg:improve_norm_full}.

    We now argue that the output of the algorithm $(\bout,\iout)$ is a good preconditioner with desired parameters. Note that there are two ways the algorithm can terminate: (1) all $d$ indices where updated, or (2) some iteration of the inner loop had no changes (\Cref{lineno:break_outer}). We show that both of these imply with high probability that the output of the algorithm is a good preconditioner. To do this, we will use \Cref{claim:inf_norm_probalistic}.

    The pair $(\vec{B},I)$ (from the algorithm) satisfies the assumptions of the \Cref{claim:inf_norm_probalistic} at the start of the inner loop. We now argue that \Cref{claim:inf_norm_probalistic} implies that we will find a large set $J$ to update in \Cref{lineno:update_is_large} with probability at least $\delta/2k$ over the random set sampled in \Cref{lineno:random_set_sample}. The proof is almost immediate from \Cref{claim:inf_norm_probalistic} and the fact that $j$ is sampled uniformly from $[d]\setminus S'$. We have that each $j\in [d]\setminus S'$ has probability exactly $1/(d-k+1)$ of being picked.  Thus, for the event from \Cref{eqn:good_event} of \Cref{claim:inf_norm_probalistic} to hold with probability at least $\delta/(2k)$, there must be at least $(\delta (d-k+1))/(2k)\geq \delta d/(3k)$ indices $j\in [d]\setminus S\cap \overline{I}$ that satisfy the event from \Cref{eqn:good_event}. Thus, the size of $J$ in \Cref{lineno:update_is_large} is at least $(\delta d)/(3k)$ with probability at least $\delta/2k$ over the choice of $S'\sim \binom{[d]}{k-1}$. 
    
    Repeating the inner loop $(10k/\delta)\log (d/\gamma)$ times guarantees that a large set will be found with probability at least $1-\gamma/d$ in each iteration of the inner while loop where the assumption holds. Thus, a union bound over at most $d$ iterations of the outer while loop makes the final error probability of the algorithm $\gamma$. 
    Recall again the two ways the algorithm could terminate: (1) all $d$ indices where updated, or (2) some iteration of the inner loop had no changes. 
    
    In the case of (1), the final pair $(\vec{B},I)$ cannot satisfy the premise of \Cref{claim:inf_norm_probalistic} as there are no indices left to update (recall that every index is updated at most one). Since we constructed $(\vec{B},I)$ to satisfy the condition about the support of the rows of $(\vec{B}^{-1})$, the only way they can fail the premise of \Cref{claim:inf_norm_probalistic} is if the probability of sampling $S\sim \binom{[d]}{k}$ for which there exists $\vw\in \normset{S}$ with $\norm{((\vec{B}^{-\top})\vw)_{\overline{I}}}_{\infty}>\sqrt{\kappain}$ is at most $(\deltain+\delta)$. This implies that $(\vec{B},I)$ satisfies the conclusion of \Cref{lem:prop3}.

    In the case of (2), we have that with probability at most $1-\gamma$, the premise of \Cref{claim:inf_norm_probalistic} is not satisfied. Thus, repeating the argument from the last paragraph, we again have that $(\vec{B},I)$ satisfies the conclusion of \Cref{lem:prop3}.
    \end{proof}

    We are ready to complete the proof of \Cref{thm:improve_norm_single_step}.

    \begin{proof}[Proof of \Cref{thm:improve_norm_single_step}]
\label{proof:improve_norm_single_step}
Combining \Cref{lem:prop1,lem:prop2,lem:prop3}, we have that $(\bout,\iout)$ is a $(\ell+O(k^2/\delta),k,\deltain+\delta,\sqrt{\kappain})$-good preconditioner.
    
    We finally analyze the runtime of the algorithm. The most computationally expensive step of the algorithm is the program in \Cref{lineno:convex_program}. We claim that this program can be solved in time $\poly(N,d,\log (\kappain))$. Although the program is not convex as written, it can be solved using one. Observe that the constraints are symmetric with respect to $\vw$, that is they are satisfied by both $\vw$ and $-\vw$. Consider the convex program obtained from the program in \Cref{lineno:convex_program} by replacing the objective with $\min (\vec{B}^{-\top}\vw)_j$. Clearly, the absolute value of the solution of this convex program  will give the solution to the program from \Cref{lineno:convex_program}. Also, each entry of $\vec{B}$ is at most $\kappain$ (from \Cref{lineno:update_basis1}) and hence can be represented with $O(\log \kappain)$ bits. This convex program has a linear objective and quadratic constraints and hence can be solved in time $\poly(d,N,\log \kappain)$. Thus, the total run time is $\poly(N,d,1/\delta,\log (1/\gamma))$ where we also account for the iterations of the two loops. 
\end{proof}

\section{Phase 2: Solving SLR after finding a good preconditioner}
\label{sec:low_error_after_precond}
In this section, we prove \Cref{thm:good_basis_implies_low_error}. To do this, we first  prove a general theorem that states that running regression with an $\ell_1$ constraint only on the set $\bar{I}$ has sample complexity scaling with the size of $I$ and the $\ell_1$ norm outside $I$.
\begin{theorem}
\label{thm:partial_lasso}
    Let $\vec{X}\in \R^{N\times d}$ be a matrix with $\max_{i\in [d]}\norm{\vec{X}^{(i)}}_2\leq \sqrt{N}$. Let $I\subseteq [d]$ be a subset of indices. Let $\vw^{*}\in \R^{d}$ such that $\norm{\vw^{*}_{\bar{I}}}_1\leq B$,  and $\frac{1}{N}\norm{\vec{X}\vw^*}_2^2\leq 1$. Let $\labels=\vec{X}\vw^{*}+\noise$ where $\noise$ is a mean zero $\sigma$-sub-gaussian random vector that is independent of $\dataset$. Then the solution $\widehat{\vw}$ to the program
\begin{equation}
\begin{aligned}
\min_{\vw \in \mathbb{R}^d} \; & \frac{1}{N}\,\norm{\labels - \vec{X}\vw}_{2}^{2} \\
\text{s.t.}\; & \norm{\vw_{\bar I}}_1 \le B, \\
              & \frac{1}{N}\norm{\vec{X}\vw}_2^2 \le 1.
\end{aligned}
\end{equation} satisfies the equation 
\begin{equation}
    \frac{1}{N}\norm{\vec{X}\widehat{\vw}-\vec{X}\vw^*}_2^2\leq \frac{(B+1)\sigma}{\sqrt{N}}\cdot O\big(\sqrt{|I|}+\sqrt{\log d}+\sqrt{\log (1/\gamma)}\big) 
\end{equation} with probability $1-\gamma$ over the noise.
\end{theorem}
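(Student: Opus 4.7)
The plan is to adapt the classical ``slow rate'' analysis of the Lasso to the hybrid constraint where the $\ell_1$ bound only applies outside the set $I$. Since $\wopt$ is feasible (by the assumptions $\|\wopt_{\overline{I}}\|_1\le B$ and $\|\vec{X}\wopt\|_2^2/N\le 1$), optimality of $\hyp$ yields $\|\labels-\vec{X}\hyp\|_2^2\le\|\labels-\vec{X}\wopt\|_2^2$, which expands, after substituting $\labels=\vec{X}\wopt+\noise$, into the basic inequality
\[
\|\vec{X}\vec{u}\|_2^2 \;\le\; 2\,\langle\noise,\vec{X}\vec{u}\rangle,\qquad \vec{u}:=\hyp-\wopt.
\]
Feasibility of both $\hyp$ and $\wopt$ yields $\|\vec{u}_{\overline{I}}\|_1\le 2B$ and $\|\vec{X}\vec{u}\|_2\le 2\sqrt{N}$. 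The key move is to decompose $\vec{X}\vec{u}=\vec{X}\vec{u}_I+\vec{X}\vec{u}_{\overline{I}}$ and bound the two resulting inner products with different tail inequalities.

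For the $\overline{I}$ piece I use H\"older's inequality: $\langle\noise,\vec{X}\vec{u}_{\overline{I}}\rangle\le\|\vec{X}^\top\noise\|_\infty\cdot\|\vec{u}_{\overline{I}}\|_1\le 2B\,\|\vec{X}^\top\noise\|_\infty$. Since $\noise$ is $\sigma$-sub-gaussian and independent of $\vec{X}$, each scalar $\langle\vec{X}^{(i)},\noise\rangle$ is $(\sigma\sqrt{N})$-sub-gaussian (using $\|\vec{X}^{(i)}\|_2\le\sqrt{N}$), so a union bound over $d$ columns gives $\|\vec{X}^\top\noise\|_\infty=O(\sigma\sqrt{N\log(d/\gamma)})$ with probability $1-\gamma/2$. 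For the $I$ piece, let $P_I$ be the orthogonal projection onto $\colspan{\vec{X}_I}$, which has dimension at most $|I|$. Because $\vec{X}\vec{u}_I$ lies in this subspace, Cauchy--Schwarz yields $\langle\noise,\vec{X}\vec{u}_I\rangle=\langle P_I\noise,\vec{X}\vec{u}_I\rangle\le\|P_I\noise\|_2\cdot\|\vec{X}\vec{u}_I\|_2$. A standard $\epsilon$-net argument over the unit sphere of this $|I|$-dimensional subspace, together with sub-gaussianity along every fixed direction, gives $\|P_I\noise\|_2=O(\sigma(\sqrt{|I|}+\sqrt{\log(1/\gamma)}))$ with probability $1-\gamma/2$. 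The triangle inequality $\|\vec{X}\vec{u}_I\|_2\le\|\vec{X}\vec{u}\|_2+\|\vec{X}\vec{u}_{\overline{I}}\|_2$ together with the column-norm hypothesis $\|\vec{X}\vec{u}_{\overline{I}}\|_2\le\sum_{i\notin I}|u_i|\,\|\vec{X}^{(i)}\|_2\le 2B\sqrt{N}$ yields $\|\vec{X}\vec{u}_I\|_2\le\|\vec{X}\vec{u}\|_2+2B\sqrt{N}$.

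Substituting both bounds back into the basic inequality produces a self-referential quadratic in $T:=\|\vec{X}\vec{u}\|_2$ of the shape $T^2\le C_1(T+C_2)+C_3$, with $C_1=O(\|P_I\noise\|_2)$, $C_2=O(B\sqrt{N})$, and $C_3=O(B\,\|\vec{X}^\top\noise\|_\infty)$. Solving this quadratic for $T$ (by completing the square, which gives $T\le 2C_1+2\sqrt{C_1 C_2}+2\sqrt{C_3}$) and dividing by $N$ gives the claimed prediction-error bound. The main care-point, which I expect to be the trickiest bookkeeping, is showing that the leftover $C_1^2/N=O(\sigma^2(|I|+\log(1/\gamma))/N)$ term collapses into the stated $(B+1)\sigma(\sqrt{|I|}+\sqrt{\log(1/\gamma)})/\sqrt{N}$ rate; in the regime where $\sigma\sqrt{|I|/N}$ dominates $B+1$, I would instead invoke the trivial feasibility bound $T\le 2\sqrt{N}$, which is already $\le 2(B+1)\cdot\sigma\sqrt{|I|/N}$, to close the gap. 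Every other step is a direct combination of H\"older's inequality and standard sub-gaussian tail bounds, so no new technology is required beyond this hybrid ``known-support plus bounded-$\ell_1$'' decomposition.
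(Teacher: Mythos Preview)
Your argument is correct and follows essentially the same route as the paper: the basic inequality from optimality, the split $\vec{X}\vec{u}=\vec{X}\vec{u}_I+\vec{X}\vec{u}_{\overline I}$, H\"older plus a union bound on the $\overline I$ part, and Cauchy--Schwarz with the projected-noise bound $\|P_I\noise\|_2=O(\sigma(\sqrt{|I|}+\sqrt{\log(1/\gamma)}))$ on the $I$ part. The only unnecessary complication is your self-referential quadratic in $T=\|\vec X\vec u\|_2$: since you already observed $T\le 2\sqrt N$ from feasibility of both $\wopt$ and $\hyp$, you can plug this directly into your triangle-inequality bound to get $\|\vec X\vec u_I\|_2\le 2\sqrt N+2B\sqrt N=2(B+1)\sqrt N$, which is exactly what the paper does and which removes the $C_1^2/N$ bookkeeping and the attendant case split entirely.
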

\begin{proof}
    We have that $\vw^{*}$ is feasible from the assumptions of the claim. From the definition of $\widehat{\vw}$, we have that 
    \[
    \frac{1}{N}\twonorm{\labels-\vec{X}\widehat{\vw}}^2\leq \frac{1}{N}\twonorm{\labels-\vec{X}\vw^*}^2.
    \]
    On rearranging terms we obtain that 
    \begin{equation}
    \label{eqn:lasso_gen_0}
    \frac{1}{N}\twonorm{\vec{X}\widehat{\vw}-\vec{X}\vw^*}^2\leq \frac{2}{N}\noise^{\top}\vec{X}(\widehat{\vw}-\vw^*).
    \end{equation}
    For any set $S\subseteq [d]$, let $\vec{X}_S$ denote the matrix formed by the columns of $X$ indexed by entries in $S$. To bound the right hand side of the above equation, it will be useful to bound $\twonorm{\vec{X}_I\vw_I}$ for any feasible $\vw$. Consider any feasible $\vw$. Note that $\vec{X}\vw=(\vec{X}_I\vw_I)+(\vec{X}_{\bar{I}}\vw_{\bar{I}})$. We have that $\norm{\vec{X}_{\bar{I}}\vw_{\bar{I}}}_2\leq \sum_{i\in \bar{I}}|\vw_i|\norm{\vec{X}^{(i)}}_2\leq B\max_{i\in [d]}\norm{\vec{X}^{(i)}}_2\leq B\sqrt{N}$. By triangle inequality, we have that $\twonorm{\vec{X}_I\vw_I}\leq \twonorm{\vec{X}\vw}+\twonorm{\vec{X}_{\bar{I}}\vw_{\bar{I}}}\leq (B+1)\sqrt{N}$. 
    We are now ready to bound right hand side of \Cref{eqn:lasso_gen_0}. We have that 
    \begin{align}
    \label{eqn:lasso_gen_1}
        \noise^{\top} \vec{X}(\widehat{\vw}-\vw^*)&\leq |\noise^{\top} \vec{X}_I(\widehat{\vw}_I-\vw^*_I)|+|\noise^{\top}\vec{X}_{\bar{I}}(\widehat{\vw}_{\bar{I}}-\vw^*_{\bar{I}})|\nonumber\\
        &\leq \sup_{\twonorm{\vec{X}_I\vw_I}\leq 2(B+1)\sqrt{N}}|\noise^{\top}\vec{X}_I\vw_I|+\sup_{\norm{\vec{u}}_1\leq 2B}|\noise^{\top}\vec{X}_{\bar{I}}\vec{u}|
    \end{align}
    The last inequality follows the fact that $\twonorm{\vec{X}_I\vw_I}\leq (B+1)\sqrt{N}$ and $\norm{\vw_{\bar{I}}}_1\leq B$ for any feasible $\vw$. We now bound the two terms on the right hand side separately. First, we have that  with probability at least $1-\gamma$ over $\noise$, it holds that
    \begin{align*}
        \sup_{\twonorm{\vec{X}_I\vec{w}_I}\leq 2(B+1)\sqrt{N}}|\noise^{\top}\vec{X}_I\vw_I|&\leq 2(B+1)\sqrt{N}\sup_{\substack{\vec{u}\in \mathrm{span}(X_I)\\\twonorm{\vec{u}}=1}}|\noise^{\top}\vec{u}|\\
        &\leq 2(B+1)\sqrt{N}\cdot \twonorm{\vec{P} \noise}\\
        &\leq 2(B+1)\sqrt{N}\cdot O\big(\sigma\sqrt{I}+\sigma\sqrt{\log (1/\gamma)}\big)
    \end{align*}
    where the $\vec{P}$ is the orthogonal projection matrix onto $\colspan{\dataset_{I}}$. The last inequality follows from the facts that (1) $\vec{P}\noise$ is a mean zero sub-gaussian random vector supported on an $|I|$ dimensional subspace and (2) concentration of norm of a sub-gaussian vector around its mean.

For the second term in \Cref{eqn:lasso_gen_1}, we have that with probability at least $1-\gamma$ over $\noise$
\begin{align*}
    \sup_{\norm{\vec{u}}_1\leq 2B}|\noise^{\top}\vec{X}_{\bar{I}}\vec{u}|&\leq 2\norm{ \vec{X}_{\bar{I}}^{\top}\noise}_{\infty}B\\
    &\leq 2B\cdot O\big(\sigma\sqrt{N\log d}+\sigma\sqrt{N\log(1/\gamma)}\big)
\end{align*}

Combining the previous two displays with \Cref{eqn:lasso_gen_0,eqn:lasso_gen_1}, we have that 
\begin{equation}
    \frac{1}{N}\twonorm{\dataset\wopt-\vec{X}\widehat{\vw}}^2\leq \frac{(B+1)\sigma}{\sqrt{N}}\cdot O\big(\sqrt{|I|}+\sqrt{\log d}+\sqrt{\log (1/\gamma)}\big)
\end{equation}
\end{proof}

We are now ready to prove \Cref{thm:good_basis_implies_low_error}.

\begin{proof}[Proof of \Cref{thm:good_basis_implies_low_error}]
    \label{proof:good_basis_implies_low_error}
    The algorithm proceeds as follows: (1) construct new matrix $\vec{Z}=\dataset\vec{B}^{\top}$, (2) solve the regression task with $\vec{Z}$ as the features to obtain a hypothesis $\widehat{\vu}$, (3) output hypothesis $\widehat{\vw}=\vec{B}^{\top}\widehat{\vu}$.

    Consider $\wopt$ from the theorem statement. Define $\uopt\colon \vec{B}^{-\top}\wopt$. Clearly, we have that $\vec{Z}\uopt=\dataset\wopt$. Now, from the \Cref{defn:good_change_of_basis} and the fact that $\frac{1}{N}\twonorm{\dataset \wopt}^2\leq 1$, we have that with probability at least $1-\delta$ over the support of $\wopt$, it holds that $\norm{\uopt_{\overline{I}}}_{\infty}\leq O(1)$. Also, from \Cref{claim:coeff_j}, we have that $\norm{\uopt_{\overline{I}}}_1\leq O(k)$. Hereafter, we will only analyze $\wopt$ for which the previous property holds (it holds with probability at least $1-\delta$ over the support). 

    We now use the algorithm from \Cref{thm:partial_lasso} to implement step (2) of our algorithm. We have that $\norm{\uopt_{\overline{I}}}_1\leq O(k)$. From property (1) of \Cref{defn:good_change_of_basis}, we have that $\max_{i\in [d]}\twonorm{\vec{Z}^{(i)}}\leq \sqrt{N}$. We have that ${\vec{Z}}$ and $\vu^*$ satisfy the premise of \Cref{thm:partial_lasso}. Thus, on running the convex program in \Cref{thm:partial_lasso}, we obtain a vector $\widehat{\vu}$ such that 
    \[
    \frac{1}{N}\twonorm{{\vec{Z}}\widehat{\vu}-{\vec{Z}}\vu^*}^2\leq \frac{k\sigma}{\sqrt{N}}\cdot O(\sqrt{\ell}+\sqrt{\log d})
    \]
    where we used the fact that $|I|=\ell$. Let $\widehat{\vw}=\vec{B}^{\top}\widehat{\vu}$. Thus, we have that 
    \begin{align*}
    \frac{1}{N}\twonorm{\dataset\widehat{\vw}-\dataset\wopt}^2&=    \frac{1}{N}\twonorm{\vec{Z}\vec{B}^{-\top}\widehat{\vw}-\vec{Z}\vec{B}^{-\top}\wopt}^2\\   
    &=\frac{1}{N}\twonorm{\vec{Z}\widehat{\vu}-\vec{Z}{\vu^*}}^2\\
    &\leq \frac{k\sigma}{\sqrt{N}}\cdot O(\sqrt{\ell}+\sqrt{\log d})
    \end{align*}
This concludes the proof of \Cref{thm:partial_lasso}.
\end{proof}
\section{The Gaussian Design Setting}

In this section, we prove \Cref{thm:low_population_error} which shows how to go from low training error to low error over the population when the input dataset $\dataset$ consists of $N$ independent rows drawn from the distribution $\Gauss(\vec{0},\vsigma)$ for some unknown matrix $\vsigma$. 

\subsection{Finding a Good Preconditioner} 

The first step of the proof is to find a good preconditioner with respect to $\Gauss(\vec{0},\vsigma)$, where $\vsigma$ is a matrix whose condition number $\kappa(\Sigma)$ (or an upper bound thereof) is known, but the matrix is otherwise unknown. In particular, we prove the following theorem.

\begin{theorem}
\label{thm:find_good_basis_distributional}
    Let $\vsigma\in \R^{d\times d}$ be a positive semi-definite matrix with $\max_{i\in [d]}\vsigma_{ii}\leq 1$ for all $i\in [d]$. There is an algorithm that draws $N= O(k\log d)$ independent samples from $\Gauss(\vec{0},\vsigma)$, runs in time $\poly(N,d,1/\delta,\log \kappa(\vsigma))$ and with probability at least $0.9$ outputs an invertible matrix $\vec{B}\in \R^{d\times d}$ and set $I\subseteq [d]$ such that $(\vec{B},I)$ is a $\big(O({k^2(\log \log\kappa(\vsigma))^2}/{
    \delta}),k,\delta\big)$-good preconditioner for $\Gauss(\vec{0},\vsigma)$. 
\end{theorem}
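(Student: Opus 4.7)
The plan is to reduce to the fixed-design setting of \Cref{thm:find_good_basis} by running the same algorithm on an empirical dataset $\dataset$ drawn from $\Gauss(\vec{0},\vsigma)$, and then transferring the empirical guarantees to the Gaussian population using the $k$-sparse Restricted Isometry Property (RIP) of Gaussian designs. The reason $N=O(k\log d)$ samples suffice — despite the output set $I$ potentially having size $\Omega(k^2/\delta)$ — is that the fixed-design algorithm only ever interacts with the data through $k$-sparse vectors, both in its intermediate convex programs and, crucially, in every row of its output matrix $\vec{B}$.

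First, draw $N=O(k\log d)$ i.i.d.\ samples from $\Gauss(\vec{0},\vsigma)$ and form $\dataset$. With probability at least $0.99$, standard $\chi^2$-concentration together with $\max_i \vsigma_{ii}\leq 1$ gives $\max_{i\in[d]}\twonorm{\dataset^{(i)}}\leq \sqrt{N}$ (after at most a universal constant rescaling of $\dataset$), and standard Gaussian $k$-sparse RIP gives
\begin{equation*}
\tfrac{9}{10}\twonorm{\vsigma^{1/2}\vv}^2 \;\leq\; \tfrac{1}{N}\twonorm{\dataset\vv}^2 \;\leq\; \tfrac{11}{10}\twonorm{\vsigma^{1/2}\vv}^2 \quad\text{for all }k\text{-sparse }\vv\in \R^d.
\end{equation*}
In particular $\kappa(\dataset)=\Theta(\kappa(\vsigma))$, so $\log\kappa(\dataset)=O(\log\kappa(\vsigma))$. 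Now apply \Cref{thm:find_good_basis} to $\dataset$; with probability $\geq 0.99$ it outputs $(\vec{B},I)$ that is an $\bigl(O(k^2(\log\log\kappa(\vsigma))^2/\delta),\,k,\,\delta\bigr)$-good preconditioner for the uniform distribution over rows of $\dataset$, in time $\poly(N,d,1/\delta,\log\kappa(\dataset))=\poly(d,1/\delta,\log\kappa(\vsigma))$.

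It remains to argue that $(\vec{B},I)$ is also a good preconditioner for $\Gauss(\vec{0},\vsigma)$. The key structural observation is that \emph{every row of the output $\vec{B}$ is $k$-sparse}. Indeed, inspecting \Cref{alg:improve_norm_full}, every update sets $\vec{B}_j\leftarrow \vw^{\top}$ for some $\vw$ supported on $S'\cup\{j\}$ with $|S'|=k-1$, while unupdated rows remain equal to the standard basis vector $e_j$; this $k$-sparsity persists through the $O(\log\log\kappa(\vsigma))$ outer calls of $\improvenorm$ used in the proof of \Cref{thm:find_good_basis}, since rows currently in $I$ are never modified by any subsequent update. Granting this, the three properties of \Cref{defn:good_change_of_basis} transfer from the empirical to the population distribution. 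For Property~1, $\E_{x\sim\Gauss(\vec{0},\vsigma)}[(\vec{B}x)_i^2]=\twonorm{\vsigma^{1/2}\vec{B}_i^{\top}}^2$ with $\vec{B}_i^{\top}$ being $k$-sparse, so RIP bounds this by $\tfrac{10}{9}\cdot \tfrac{1}{N}\twonorm{\dataset\vec{B}_i^{\top}}^2=O(1)$. Property~2 is purely structural and distribution-independent. For Property~3, any $\vw$ with $\supp(\vw)\subseteq S$ and $\E[(\vw\cdot x)^2]\leq 1$ is $k$-sparse, so RIP gives $\tfrac{1}{N}\twonorm{\dataset\vw}^2\leq \tfrac{11}{10}$; hence $\vw/\sqrt{11/10}$ is feasible for the empirical version, and the empirical Property~3 yields $\|(\vec{B}^{-\top}\vw)_{\overline{I}}\|_\infty=O(1)$. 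A final global rescaling of $\vec{B}$ by a universal constant restores the $\leq 1$ bound in Property~1 while keeping the remaining parameters $O(1)$, giving the desired $(\ell,k,\delta)$-good preconditioner for $\Gauss(\vec{0},\vsigma)$ with $\ell=O(k^2(\log\log\kappa(\vsigma))^2/\delta)$.

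The main obstacle is pinning down the $k$-sparsity of every row of $\vec{B}$; this is implicit in \Cref{thm:find_good_basis} but is not emphasized there, and without it one would need $(\ell+1)$-sparse RIP to transfer Property~1, costing $\Omega(\ell\log d)$ Gaussian samples and violating the $O(k\log d)$ budget of the theorem. The argument requires chasing row supports through both the inner loop of \Cref{alg:improve_norm_full} (each update replaces $\vec{B}_j$ by a $k$-sparse vector on the currently-sampled $S'\cup\{j\}$) and the outer composition of $\improvenorm$ calls in the proof of \Cref{thm:find_good_basis} (rows in the current $I$ are never touched by later calls, and any new update again writes a $k$-sparse row). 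The remaining bookkeeping — constants absorbed by the global rescaling of $\vec{B}$, and $\log\kappa(\dataset)=O(\log\kappa(\vsigma))$ for the runtime — is then routine.
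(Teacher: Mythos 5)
Your proposal is correct and follows essentially the same route as the paper: draw $N=O(k\log d)$ Gaussian samples, invoke matrix concentration for $k$-sparse vectors (a union bound over $\binom{d}{k}$ principal submatrices via Loewner/RIP concentration), run \Cref{thm:find_good_basis} on the empirical matrix, and transfer the three properties of \Cref{defn:good_change_of_basis} back to $\Gauss(\vec{0},\vsigma)$ using the key observation that every row of the output $\vec{B}$ is $k$-sparse — which the paper records as \Cref{claim:sparse_rows_preconditioner} and you derive by chasing supports through \Cref{alg:improve_norm_full}. The only cosmetic differences are that the paper runs the fixed-design algorithm on $\tfrac{1}{2}\dataset$ and outputs $(\tfrac{1}{2}\vec{B},I)$ rather than rescaling at the end, and your remark about intermediate convex programs only touching $k$-sparse vectors is unnecessary (correctness of \Cref{thm:find_good_basis} on $\dataset$ is taken as a black box; only the row sparsity of $\vec{B}$ is needed for the transfer).
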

In order to prove \Cref{thm:find_good_basis_distributional}, we use the algorithm from \Cref{thm:find_good_basis} to do this, combined with the following standard matrix concentration result.

\begin{theorem}[Loewner Concentration, see e.g. \cite{vershynin2018high}]\label{theorem:loewner-concentration}
    Let $\vsigma\in \R^{d\times d}$ be a symmetric positive semi-definite matrix and let $\dataset\in\R^{N\times d}$ be a matrix whose rows are i.i.d. draws from $\Gauss(\vec{0},\vsigma)$ with $N \ge C \frac{d+\log(1/\gamma)}{\eps^2}$ for some sufficiently large universal constant $C\ge 1$. Then, with probability at least $1-\gamma$, the following is true:
    \[
        (1-\eps) \|\vsigma^{1/2} \vw\|_2^2 \le \frac{1}{N}\cdot \|\dataset \vw\|_2^2 \le (1+\eps)  \|\vsigma^{1/2} \vw\|_2^2 \,, \text{ uniformly over all }\vw\in\R^d.
    \]
\end{theorem}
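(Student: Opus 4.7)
The target statement is the standard Loewner concentration bound for Gaussian covariance estimation, which we want to obtain via a straightforward $\eps$-net argument. My plan is as follows.

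First, I would reduce to the isotropic case. Let $r = \mathrm{rank}(\vsigma)$, and let $\vsigma = \vec{U}\vec{D}\vec{U}^\top$ be the eigendecomposition with $\vec{U}\in\R^{d\times r}$ orthonormal and $\vec{D}\in\R^{r\times r}$ positive diagonal. On the kernel of $\vsigma$ both sides vanish almost surely (since $\vec{x}_i\in\mathrm{colspan}(\vsigma)$ w.p.\ 1), so the inequality is trivial there. On $\mathrm{colspan}(\vsigma)$, writing $\vec{x}_i = \vec{U}\vec{D}^{1/2}\vec{z}_i$ with $\vec{z}_i\sim\Gauss(\vec{0},\vec{I}_r)$ i.i.d., the claim becomes: for $\vec{Z}\in\R^{N\times r}$ with i.i.d.\ standard Gaussian rows,
\[
    \Big\|\tfrac{1}{N}\vec{Z}^\top\vec{Z} - \vec{I}_r\Big\|_{\mathrm{op}} \le \eps
\]
with probability at least $1-\gamma$, provided $N\ge C(r+\log(1/\gamma))/\eps^2$. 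Indeed, since $\vsigma^{1/2}\vw = \vec{U}\vec{D}^{1/2}(\vec{U}^\top\vw)$ and $\dataset\vw = \vec{Z}\vec{D}^{1/2}\vec{U}^\top\vw$, setting $\vu = \vec{D}^{1/2}\vec{U}^\top\vw\in\R^r$ the two quantities in the theorem equal $\|\vu\|_2^2$ and $\frac{1}{N}\|\vec{Z}\vu\|_2^2$, and $\vu$ ranges over all of $\R^r$ as $\vw$ ranges over $\R^d$.

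Second, I would handle the isotropic statement with a standard $\eps$-net plus chi-squared concentration. Fix a $(1/4)$-net $\mathcal{N}\subset\Sphere^{r-1}$ of cardinality at most $9^r$ (volume argument). For any fixed unit $\vu\in\Sphere^{r-1}$, $\|\vec{Z}\vu\|_2^2$ is $\chi^2_N$ distributed, so by the standard Laurent--Massart tail bound,
\[
    \Pr\!\left[\Big|\tfrac{1}{N}\|\vec{Z}\vu\|_2^2 - 1\Big| \ge \tfrac{\eps}{2}\right] \le 2\exp(-c N\eps^2)
\]
for a universal constant $c>0$. A union bound over $\mathcal{N}$ yields $\max_{\vu\in\mathcal{N}}|\frac{1}{N}\|\vec{Z}\vu\|_2^2 - 1|\le \eps/2$ with probability at least $1 - 2\cdot 9^r\exp(-cN\eps^2)$, and choosing $C$ large enough in $N\ge C(r+\log(1/\gamma))/\eps^2\ge C(d+\log(1/\gamma))/\eps^2$ bounds this failure probability by $\gamma$.

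Third, I would pass from the net to the full sphere using the standard net-approximation lemma. Let $\alpha = \|\frac{1}{N}\vec{Z}^\top\vec{Z} - \vec{I}_r\|_{\mathrm{op}}$, and let $\vu^\star\in\Sphere^{r-1}$ attain this operator norm in the quadratic form sense, i.e., $|\langle(\tfrac{1}{N}\vec{Z}^\top\vec{Z}-\vec{I})\vu^\star,\vu^\star\rangle| = \alpha$. Choose $\vu\in\mathcal{N}$ with $\|\vu^\star-\vu\|_2\le 1/4$; expanding the quadratic form and using that $\vec{M}:=\tfrac{1}{N}\vec{Z}^\top\vec{Z}-\vec{I}$ is symmetric gives
\[
    \alpha \le |\langle\vec{M}\vu,\vu\rangle| + 2\|\vec{M}\|_{\mathrm{op}}\|\vu^\star-\vu\|_2 + \|\vec{M}\|_{\mathrm{op}}\|\vu^\star-\vu\|_2^2 \le \tfrac{\eps}{2} + \tfrac{9}{16}\alpha,
\]
so $\alpha \le 2\eps$; rescaling the constants (running the net argument with target $\eps/4$ in place of $\eps/2$) yields $\alpha\le\eps$ as claimed. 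None of the steps is a real obstacle — the only bookkeeping to be careful about is the reduction in the singular case and the constant absorbed when passing from the net to the sphere, both of which are classical.
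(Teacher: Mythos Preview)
The paper does not prove this statement; it is simply cited as a standard result from \cite{vershynin2018high} and used as a black box. Your proposal supplies the textbook proof (reduction to the isotropic case, $\eps$-net on the sphere, chi-squared tail plus union bound, then the symmetric-matrix net lemma), which is essentially the argument in Vershynin's book and is correct.

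One minor slip: the inequality chain ``$N\ge C(r+\log(1/\gamma))/\eps^2\ge C(d+\log(1/\gamma))/\eps^2$'' is written in the wrong direction, since $r\le d$. The intended logic is that the hypothesis $N\ge C(d+\log(1/\gamma))/\eps^2$ implies the sufficient condition $N\ge C(r+\log(1/\gamma))/\eps^2$; just swap the two expressions.
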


We are now ready to prove \Cref{thm:find_good_basis_distributional}.

\begin{proof}[Proof of \Cref{thm:find_good_basis_distributional}]
Let $\dataset\in \R^{N\times d}$ be a matrix where each row is an independent sample from $\Gauss(\vec{0},\vsigma)$. From our choice of $N$ and \Cref{theorem:loewner-concentration}, we have that with probability at least $0.99$, for all $\vec{\vw\in \R^{d}}$ with $|\supp(\vec{\vw})|\leq k$, it holds that 
\begin{equation}
\label{eqn:mult_conc_sparse}
\frac{1}{2}\twonorm{\vsigma^{\frac{1}{2}}\vw}^2\leq \frac{1}{N}\cdot \twonorm{\dataset\vw}^2\leq 2 \twonorm{\vsigma^{\frac{1}{2}}\vw}^2.
\end{equation}
Here, \Cref{theorem:loewner-concentration} is applied to the principle submatrix of $\vsigma$ corresponding to the set $\supp(\vw)$. Henceforth, we will assume that \Cref{eqn:mult_conc_sparse} holds for $\dataset$. Since $\max_{i\in [d]}\vsigma_{ii}\leq 1$ and from \Cref{eqn:mult_conc_sparse}, the premise of \Cref{thm:find_good_basis} applies to the matrix $\frac{1}{2}\cdot \dataset$. Also, from \Cref{defn:k_sparse_cd} and \Cref{eqn:mult_conc_sparse}, we have that $\kappa(\dataset)\leq 2 \cdot \kappa(\vsigma)$. Thus, on running the algorithm from \Cref{thm:find_good_basis} on input $\frac{1}{2}\cdot \dataset$, we obtain a pair $(\vec{B},I)$ such that the following hold:
\begin{enumerate}
    \item \label{item:norm} $\max_{i\in [d]}\frac{1}{N}\twonorm{\dataset^{\top}\vec{B}_{i}}^2\leq 1$.
   
    \item  \label{item:set}$|I|\leq \ell$ and $\supp((\vec{B}^{-1})_i)\subseteq I\cup \{i\}$ for all $i\in [d]$.
   
    \item \label{item:low_norm}With probability $1-\delta$ over $S\sim \binom{[d]}{k}$, for all $\vec{w}\in \R^{d}$ with $\supp(\vw)\subseteq S$ and $\frac{1}{N}\twonorm{\dataset\vec{w}}^2\leq 1$, it holds that \[\norm{(\vec{B}^{-\top}\vw)_{\overline{I}}}_{\infty}\leq C\]  
    
\end{enumerate}
where $C$ is a universal constant and $\ell=O(k^2(\log\log \kappa(\vsigma))^2/\delta)$. We now use one additional observation to show that $(\frac{1}{2}\cdot \vec{B},I)$ is an $(\ell,k,\delta)$-good preconditioner for $\Gauss(\vec{0},\vsigma)$. 
\begin{claim}
    \label{claim:sparse_rows_preconditioner}
    The rows of the matrix $\vec{B}$ output by the algorithm from \Cref{thm:find_good_basis} are $k$ sparse. 
\end{claim}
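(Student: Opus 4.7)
The plan is a straightforward induction on the updates performed by the algorithm. Recall from the proof of \Cref{thm:find_good_basis} that the final preconditioner $\vec{B}$ is obtained by repeatedly calling $\improvenorm$, with the initial input $\vec{B}^{[0]} = \vec{I}_{(d)}$. Each row of $\vec{I}_{(d)}$ is $1$-sparse, so the claim holds at initialization. It thus suffices to show that each invocation of $\improvenorm$ preserves the property that every row of the running matrix $\vec{B}$ is $k$-sparse.

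Inside $\improvenorm$ (\Cref{alg:improve_norm_full}), the only place where $\vec{B}$ is modified is via the assignment $\btemp_j \gets \vw^{\top}$ in \Cref{lineno:update_basis1}, where $\vw$ is the optimizer of the program from \Cref{lineno:convex_program}. By the construction of the feasible set $\mathcal{W}_j$ in \Cref{lineno:constraints}, every feasible $\vv$ (in particular, the optimizer $\vw$) satisfies $\supp(\vv) \subseteq S' \cup \{j\}$, where $S'$ is the set of size $k-1$ sampled in \Cref{lineno:random_set_sample}. Therefore, after any such update, the $j$-th row of $\btemp$ has support of size at most $|S'| + 1 = k$. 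All other rows of $\btemp$ are unchanged by this update, so by the inductive hypothesis they remain $k$-sparse. Thus each iteration of the inner loop preserves $k$-sparsity of the rows of $\btemp$, and since $\vec{B}$ is only ever set to the current value of $\btemp$ (in the branch following \Cref{lineno:update_is_large}), $\vec{B}$ itself has only $k$-sparse rows at the end of each invocation of $\improvenorm$.

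Since the property is preserved by every invocation of $\improvenorm$ and holds at initialization, it follows by induction on the number of iterations in the proof of \Cref{thm:find_good_basis} that the final output $\vec{B}$ has $k$-sparse rows. There is no serious obstacle here; the only thing to verify is that we correctly identify every modification of $\vec{B}$, which is straightforward by inspection of \Cref{alg:improve_norm_full}.
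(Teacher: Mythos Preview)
Your proof is correct and takes essentially the same approach as the paper: both argue by inspecting \Cref{alg:improve_norm_full} and observing that the only row updates occur in \Cref{lineno:update_basis1}, where the new row $\vw^\top$ has support contained in $S'\cup\{j\}$ of size at most $k$. The paper's proof is simply terser, stating this in one sentence without spelling out the induction over invocations of $\improvenorm$.
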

\begin{proof}
    The proof is almost immediate from inspection of the algorithm from \Cref{thm:find_good_basis} (\Cref{alg:improve_norm_full}). The only place where the rows of $\vec{B}$ are set are in \Cref{lineno:update_basis1} and they are set to be equal to sparse vectors $\vw$ by definition. 
\end{proof}

Thus, from \Cref{claim:sparse_rows_preconditioner}, \Cref{eqn:mult_conc_sparse} and \Cref{item:norm}, we have that \[\E_{\vec{x}\sim \Gauss(\vec{0},\vsigma)}[(\vec{B}_i\cdot \vec{x})^{2}]=\twonorm{\vsigma^{\frac{1}{2}}\vec{B}_i}^2\leq \frac{2}{N}\twonorm{\dataset^{\top}\vec{B}_i}^2\leq 2\] for all $i\in [d]$. This proves property (1) of \Cref{defn:good_change_of_basis}. Property (2) of \Cref{defn:good_change_of_basis} follows immediately from \Cref{item:set}. Finally, we prove property (3) from \Cref{defn:good_change_of_basis}. From \Cref{eqn:mult_conc_sparse}, we have that \Cref{item:low_norm} is true even with the constraint $\frac{1}{N}\twonorm{\dataset\vw}^2\leq 1$ replaced with the constraint $\twonorm{\vsigma^{\frac{1}{2}}\vw}^2\leq \frac{1}{2}$. Thus, with probability $1-\delta$ over $S\sim \binom{[d]}{k}$, for all $\vw\in \R^{d}$ with $\supp(\vw)\subseteq S$ and $\twonorm{\vsigma^{\frac{1}{2}}\vw}^2\leq 1$, it holds that \[\norm{(\vec{B}^{-\top}\vw)_{\overline{I}}}_{\infty}\leq 2C.\] Thus, we have that $(\frac{1}{2}\cdot \vec{B},I)$ is a $O(k^2((\log\log \kappa(\vsigma))^2)/\delta,k,\delta)$-good preconditioner for $\Gauss(\vec{0},\vsigma)$. The run-time of the algorithm is immediate given the run-time of \Cref{thm:improve_norm_single_step}.
\end{proof}

\subsection{Generalization Bound} 

The second tool we use for the proof of \Cref{thm:low_population_error} is the following generalization result for sparse linear regression when both the ground truth $\vw^*$ and the candidate output $\vw$ have bounded $\ell_1$ norm outside some fixed set coordinates of bounded size. This structural property is ensured by running the regression algorithm of \Cref{thm:partial_lasso} after obtaining a good preconditioner through \Cref{thm:find_good_basis_distributional}.

\begin{theorem}\label{theorem:generalization}
    Let $\vsigma\in \R^{d\times d}$ be a covariance matrix such that $\vsigma_{ii} \le v$ for all $i\in[d]$ and let $I\subseteq[d]$ with $|I|\le \ell$ and $\vw^*\in\R^d$ such that $\|\vw^*_{\bar I}\|_1 \le B$ and $\|\vsigma^{1/2}\vw^*\|_2 \le r$. Then, with probability at least $1-\gamma$ over the random choice of a matrix $\dataset\in \R^{N\times d}$ whose rows are i.i.d. draws from $\Gauss(\vec{0},\vsigma)$, where $N \ge C\frac{(r + (v+1) B)^2}{\eps^2} (\ell+\log(d/\gamma))$ for some sufficiently large constant $C\ge 1$, the following is true:
    \[
        \|\vsigma^{1/2} (\vw-\vw^*)\|_2^2 \le (1+\eps)\frac{1}{{N}}\|\dataset (\vw-\vw^*)\|_2^2 + \eps\,,
    \]
    uniformly over all $\vw\in \R^d$ such that $\|\vw_{\bar I}\|_1 \le B$ and $\frac{1}{\sqrt{N}}\|\dataset \vw\|_2 \le r$.
\end{theorem}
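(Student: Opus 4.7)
Let $\vec{z} \coloneqq \vw - \vw^*$. By triangle inequality $\|\vec{z}_{\bar I}\|_1 \le 2B$, and by the hypotheses together with Loewner concentration on the column norms of $\dataset$ (which gives $\frac{1}{N}\|\dataset^{(j)}\|_2^2 \le 2v$ uniformly in $j$ with high probability), we have $\frac{1}{\sqrt{N}}\|\dataset \vec{z}_{\bar I}\|_2 \le 2B\sqrt{2v}$ and therefore $\frac{1}{\sqrt{N}}\|\dataset \vec{z}_I\|_2 \le 2r + 2B\sqrt{2v}$. I will write
\[
\|\vsigma^{1/2}\vec{z}\|_2^2 - \tfrac{1}{N}\|\dataset\vec{z}\|_2^2 \;=\; \vec{z}^\top(\vsigma - \widehat{\vsigma})\vec{z}, \qquad \widehat{\vsigma} \coloneqq \tfrac{1}{N}\dataset^\top \dataset,
\]
and split the quadratic form according to the partition $[d] = I \cup \bar I$ into three pieces: the $II$-block, the $\bar I\bar I$-block, and twice the cross block. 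The plan is to control each of these three pieces using a different concentration tool, and then to combine them so that all error terms can be absorbed into $\eps \tfrac{1}{N}\|\dataset\vec{z}\|_2^2 + \eps$.

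\textbf{The three pieces.} For the $II$-block, apply Loewner concentration (the earlier Theorem) to the principal $\ell\times\ell$ submatrix $\vsigma_{II}$: with $N \gtrsim \ell/\eps^2$ samples, $|\vec{z}_I^\top(\vsigma-\widehat{\vsigma})_{II}\vec{z}_I| \le \eps \cdot \tfrac{1}{N}\|\dataset \vec{z}_I\|_2^2$. For the $\bar I\bar I$-block I use entrywise concentration: since each entry $\widehat{\vsigma}_{ij}-\vsigma_{ij}$ is a sub-exponential with scale $O(v)$, a union bound over the $d^2$ pairs gives $\max_{i,j}|\widehat{\vsigma}_{ij} - \vsigma_{ij}| \le O(v)\sqrt{\log(d/\gamma)/N}$, hence
\[
|\vec{z}_{\bar I}^\top(\vsigma - \widehat{\vsigma})_{\bar I\bar I}\vec{z}_{\bar I}| \;\le\; \|\vec{z}_{\bar I}\|_1^2 \cdot \max_{i,j}|\widehat{\vsigma}_{ij}-\vsigma_{ij}| \;\le\; O(vB^2)\sqrt{\log(d/\gamma)/N}.
\]
For the cross block I use the same entrywise bound combined with Cauchy--Schwarz on the $I$-side: writing $\vec{z}_I^\top(\vsigma - \widehat{\vsigma})_{I,\bar I}\vec{z}_{\bar I} \le \|\vec{z}_I\|_1 \cdot \|\vec{z}_{\bar I}\|_1 \cdot \max_{ij}|\Delta_{ij}|$ and $\|\vec{z}_I\|_1 \le \sqrt{\ell}\|\vec{z}_I\|_2$, and then replacing $\|\vec{z}_I\|_2$ by the empirical norm bound above via another application of Loewner concentration to $\vsigma_{II}^{-1/2}$ (or directly using the already-proved bound $\tfrac{1}{\sqrt{N}}\|\dataset\vec{z}_I\|_2 \lesssim r + B\sqrt{v}$ combined with the lower Loewner inequality on $\vsigma_{II}^{1/2}$, which is the same event).

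\textbf{Combining.} Collecting the three estimates, the total discrepancy is at most
\[
\eps \cdot \tfrac{1}{N}\|\dataset\vec{z}_I\|_2^2 \;+\; O\!\bigl((r + (v+1)B)^2\bigr)\sqrt{\tfrac{\ell + \log(d/\gamma)}{N}}.
\]
Since $\tfrac{1}{N}\|\dataset\vec{z}_I\|_2^2 \le 2 \cdot \tfrac{1}{N}\|\dataset\vec{z}\|_2^2 + 2\cdot \tfrac{1}{N}\|\dataset\vec{z}_{\bar I}\|_2^2 \le 2 \tfrac{1}{N}\|\dataset\vec{z}\|_2^2 + O(vB^2)$, the first term fits inside $\eps \cdot \tfrac{1}{N}\|\dataset\vec{z}\|_2^2 + \eps$ after rescaling $\eps$, and the second term is at most $\eps$ once $N \ge C(r + (v+1)B)^2(\ell + \log(d/\gamma))/\eps^2$ for sufficiently large $C$. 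All probabilistic events used (Loewner concentration on $\vsigma_{II}$, column-norm bounds, entrywise concentration of $\widehat{\vsigma}-\vsigma$) hold simultaneously with probability $1-\gamma$ by a union bound, which yields the uniform statement.

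\textbf{Main obstacle.} The trickiest point is the cross term, where we need a direction in the $\ell_2$ geometry (the $I$-block) to interact with an $\ell_1$-bounded direction (the $\bar I$-block) in a way that only costs $\sqrt{\ell\log d/N}$ rather than $\sqrt{d/N}$. The key observation that makes this work is that the empirical norm $\tfrac{1}{\sqrt{N}}\|\dataset\vec{z}_I\|_2$ is automatically bounded (in terms of $r$, $B$, $v$) because of the column-norm control, so we never need a separate assumption on $\|\vec{z}_I\|_2$; this is what lets the final bound depend on $(r + (v+1)B)^2$ and not on any unbounded quantity tied to the unknown conditioning of $\vsigma_{II}$.
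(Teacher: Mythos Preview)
Your direct decomposition of $\vec z^\top(\vsigma-\widehat\vsigma)\vec z$ into $II$, $\bar I\bar I$, and cross blocks is a different route from the paper's (which invokes a black-box ``optimistic rate'' generalization theorem of Zhou et al.\ after bounding $(\vw-\vw^*)\cdot\x$ pointwise with high probability). The $II$ block (Loewner on $\vsigma_{II}$) and the $\bar I\bar I$ block (entrywise concentration plus $\|\vec z_{\bar I}\|_1^2$) are fine. The cross term, however, has a genuine gap.

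You bound $|\vec z_I^\top\Delta_{I,\bar I}\vec z_{\bar I}| \le \|\vec z_I\|_1\|\vec z_{\bar I}\|_1\max_{ij}|\Delta_{ij}|$, then $\|\vec z_I\|_1\le\sqrt\ell\,\|\vec z_I\|_2$, and then claim $\|\vec z_I\|_2$ is controlled ``via the lower Loewner inequality on $\vsigma_{II}^{1/2}$.'' But Loewner concentration only gives $\tfrac1N\|\dataset_I\vec z_I\|_2^2\approx\|\vsigma_{II}^{1/2}\vec z_I\|_2^2$; neither side controls $\|\vec z_I\|_2$ without the smallest eigenvalue of $\vsigma_{II}$, and nothing in the theorem bounds that. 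Concretely: take $I=\{1,2\}$, $\vsigma_{11}=\vsigma_{22}=1$, $\vsigma_{12}=1-\eta$, and $\vec z_I=(M,-M)$ with $M=1/\sqrt{2\eta}$. Then $\|\vsigma_{II}^{1/2}\vec z_I\|_2=1$ and $\tfrac1{\sqrt N}\|\dataset_I\vec z_I\|_2\approx 1$, but $\|\vec z_I\|_1=2M\to\infty$ as $\eta\to 0$, so your cross-term bound blows up while $N$ is fixed. Your ``main obstacle'' paragraph asserts exactly the wrong thing: you \emph{do} implicitly need a bound on $\|\vec z_I\|_2$, and you don't have one.

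The route you chose can be repaired, but not with entrywise bounds on $\Delta$. Write $|\vec z_I^\top\Delta_{I,\bar I}\vec z_{\bar I}|\le\|\vec z_{\bar I}\|_1\max_{j\in\bar I}|\vec z_I^\top\Delta_{I,j}|$ and note that $\vec z_I^\top\Delta_{I,j}$ is the deviation of an empirical average of products $(\vec z_I\cdot\x_I)\x_j$, which is sub-exponential at scale $O(\sqrt v\,\|\vsigma_{II}^{1/2}\vec z_I\|_2)$, not $O(\sqrt v\,\|\vec z_I\|_2)$. A covering argument over the $\ell$-dimensional ellipsoid $\{\vec z_I:\|\vsigma_{II}^{1/2}\vec z_I\|_2\le 1\}$ then gives a uniform bound that scales with $\|\vsigma_{II}^{1/2}\vec z_I\|_2\lesssim r+vB$ (which you did correctly establish) times $\sqrt{(\ell+\log(d/\gamma))/N}$. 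With this change the rest of your combination step goes through.
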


To prove the above theorem, we use \Cref{theorem:loewner-concentration}, which we introduced earlier, as well as the following versatile generalization result from \cite{zhou2024optimistic}, applied to our setting.

\begin{theorem}[\cite{zhou2024optimistic}]\label{theorem:optimistic}
    Let $\vsigma \in \R^{d\times d}$ be symmetric positive semi-definite, $F: \R^d \to [0,\infty)$, $\gamma\in (0,1)$, $\vw^*\in\R^d$ and $C,C'\ge 1$ sufficiently large universal constants. Suppose $N\ge C\log(1/\gamma)$ and
    \[
        \pr_{\x\sim \Gauss(\vec{0},\vsigma)}[\forall \vw\in\R^d: (\vw-\vw^*)\cdot \x \le F(\vw)] \ge 1-\gamma\,.
    \]
    Then, with probability at least $1-2\gamma$ over the random choice of a matrix $\dataset\in \R^{N\times d}$ whose rows are i.i.d. draws from $\Gauss(\vec{0},\vsigma)$, the following is true uniformly over all $\vw\in \R^d$:
    \[
        \|\vsigma^{1/2} (\vw-\vw^*)\|_2^2 \le \Bigr( 1 + C'\sqrt{{\log(1/\gamma)}/{N}}\Bigr)  \Bigr( \frac{1}{\sqrt{N}}\|\dataset (\vw-\vw^*)\|_2 + \frac{1}{\sqrt{N}} F(\vw) \Bigr)^2
    \]
\end{theorem}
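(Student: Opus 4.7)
The plan is to prove Theorem~\ref{theorem:optimistic} via a localization/peeling argument over the centered Gaussian process $Z_\vw(\x) \coloneq (\vw - \vw^*)\cdot \x$ indexed by $\vw\in\R^d$, whose standard deviation is $\rho(\vw) \coloneq \|\vsigma^{1/2}(\vw - \vw^*)\|_2$. The single-sample envelope hypothesis $\pr_{\x}[\sup_\vw\{Z_\vw(\x) - F(\vw)\} \le 0] \ge 1 - \gamma$ will first be converted into a Gaussian-width bound on dyadic shells of the parameter space, then lifted to an empirical-vs-population comparison valid for $N$ i.i.d.\ samples via Gordon's Gaussian min–max inequality (CGMT).

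First I would apply Borell–Tsirelson concentration to the convex, Lipschitz (in the intrinsic metric $d(\vw,\vw')=\|\vsigma^{1/2}(\vw-\vw')\|_2$) functional $\x \mapsto \sup_{\vw \in \mathcal{W}}\{Z_\vw(\x) - F(\vw)\}$ restricted to a shell $\mathcal{W}_{j,k} \coloneq \{\vw : \rho(\vw) \in [2^j, 2^{j+1}),\ F(\vw) \in [2^k, 2^{k+1})\}$. Since the hypothesis gives that this supremum is $\le 0$ with probability $\ge 1 - \gamma$, Borell upgrades this to the moment bound $\E[\sup_{\vw\in\mathcal{W}_{j,k}} Z_\vw(\x)] \le \sup_{\vw\in\mathcal{W}_{j,k}} F(\vw) + O(\sqrt{\log(1/\gamma)}) \cdot \sup_{\vw\in\mathcal{W}_{j,k}}\rho(\vw)$, which is (up to constants) the Gaussian width of the set $\{\vsigma^{1/2}(\vw-\vw^*): \vw\in\mathcal{W}_{j,k}\}$. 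Next, I would reparametrize $\vw$ through the unit vector $\vu = \vsigma^{1/2}(\vw-\vw^*)/\rho(\vw)$, so that $\dataset(\vw-\vw^*) = \rho(\vw)\,\vec{G}\vu$ for a matrix $\vec{G}$ with i.i.d.\ standard Gaussian rows, and apply Gordon's Gaussian min–max theorem to the optimization
\[
\inf_{\vw\in\mathcal{W}_{j,k}}\bigl(\|\dataset(\vw-\vw^*)\|_2 - \rho(\vw)\sqrt{N} + F(\vw)\bigr).
\]
Gordon's auxiliary problem decouples into a $\chi_N$ scalar (concentrating around $\sqrt{N}$ with deviation $O(\sqrt{\log(1/\gamma)})$ whenever $N \ge C\log(1/\gamma)$) and a Gaussian-width term bounded by the preceding Borell step. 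Combining the two contributions yields that within the shell $\mathcal{W}_{j,k}$, with probability at least $1 - \gamma\cdot(|j|+|k|+1)^{-2}$,
\[
\rho(\vw)\sqrt{N} \le \bigl(1 + C'\sqrt{\log(1/\gamma)/N}\bigr)\bigl(\|\dataset(\vw-\vw^*)\|_2 + F(\vw)\bigr)\quad\text{for all }\vw\in\mathcal{W}_{j,k}.
\]
A union bound over the (countable, with summable failure weights) dyadic shells then yields the uniform inequality over all $\vw\in\R^d$ with failure probability at most $2\gamma$; squaring and absorbing lower-order cross terms into the multiplicative slack produces exactly the stated conclusion.

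The main obstacle is the passage from the single-sample envelope to an $N$-sample uniform two-sided control, with the correct multiplicative slack $1 + C'\sqrt{\log(1/\gamma)/N}$ on unbounded parameter sets. The hypothesis provides only a probabilistic \emph{upper} envelope on $Z_\vw$, and one must convert it into a \emph{lower} bound on the empirical norm $\|\dataset(\vw-\vw^*)\|_2$ uniformly in $\vw$; this is what forces the combined use of Gaussian-process convexity (for Borell–TIS to extract the Gaussian width) and Gordon's inequality (for the lift to $N$ samples via an auxiliary decoupled problem). Calibrating the shell-wise failure probabilities so that the peeling union bound costs only an extra constant factor, rather than $\log$ factors that would ruin the $\sqrt{\log(1/\gamma)/N}$ rate, is the delicate piece. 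A secondary technical point is handling degenerate shells: when $\rho(\vw)$ is extremely small the unit-vector reparametrization breaks down (handled by a direct fixed-$\vw$ chi-square argument), and when $F(\vw)$ dominates the desired RHS the inequality is vacuously true, so those shells contribute nothing after trimming.
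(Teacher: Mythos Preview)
The paper does not prove this theorem: it is quoted verbatim as a result from \cite{zhou2024optimistic} and used as a black box in the proof of \Cref{theorem:generalization}. There is therefore no ``paper's own proof'' to compare your proposal against.

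That said, your outline is a plausible route to such an optimistic-rate bound, and is in the spirit of how results of this type are typically proved. A couple of points to tighten. First, in the Borell--TIS step, the hypothesis $\pr[X\le 0]\ge 1-\gamma$ with $\gamma<1/2$ already forces the \emph{median} of $X=\sup_{\vw\in\mathcal W_{j,k}}\{Z_\vw-F(\vw)\}$ to be $\le 0$, so the mean is at most $C\cdot\sup_{\vw\in\mathcal W_{j,k}}\rho(\vw)$ with no $\sqrt{\log(1/\gamma)}$ factor; the log factor should enter only later, when you instantiate Gordon's tail with shell-dependent failure levels. Second, Gordon's lower bound on $\min_{\vu\in T}\|\vec G\vu\|_2$ uses the Gaussian width of the \emph{normalized} set $T=\{\vsigma^{1/2}(\vw-\vw^*)/\rho(\vw)\}$, whereas Borell controls the width of the unnormalized set; on a shell $\rho(\vw)\in[2^j,2^{j+1})$ these differ by at most a factor of two, but one has to be careful because $\E[\sup_\vw Z_\vw/\rho(\vw)]$ is not bounded by $\E[\sup_\vw Z_\vw]/\inf_\vw\rho(\vw)$ in general (the numerator can be negative). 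The clean fix is to bound the width of $T\cup(-T)$, using the two-sided version of the envelope hypothesis that holds automatically by symmetry of $\x$ under $\Gauss(\vec 0,\vsigma)$. With these adjustments the peeling and union bound go through as you describe.
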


We are now ready to prove \Cref{theorem:generalization}.

\begin{proof}[Proof of \Cref{theorem:generalization}]
    We will apply \Cref{theorem:optimistic}. To this end, we would like to bound the inner product $(\vw-\vw^*)\cdot \x$ by some function $F(\vw)$, with high probability over $\x\sim \Gauss(\vec{0},\vsigma)$. We have $(\vw-\vw^*)\cdot \x = (\vw_I-\vw^*_I)\cdot \x + (\vw_{\bar I}-\vw^*_{\bar I})\cdot \x$. Let $\z \sim \Gauss(0,\mathbf{I})$ so that $\x = \vsigma^{1/2}\z$. For the first term, we have the following:
    \begin{align*}
        (\vw_I-\vw^*_I)\cdot \x = \z \cdot (\vsigma^{1/2} (\vw_I-\vw^*_I)) \le \|\vsigma^{1/2}(\vw_I-\vw^*_I)\|_2 \sup_{\vu\in W_I: \|\vu \|_2 = 1} |\z \cdot \vu|\,,
    \end{align*}
    where $W_I$ is the row space of the matrix $\vsigma^{1/2}\mathbf{P}$, where $\mathbf{P}$ is the orthogonal projection matrix onto the subspace corresponding to the coordinates in $I$ (i.e., $\mathbf{P}$ is a diagonal matrix with ones in $i$-th positions for $i\in I$ and zero otherwise). Observe that the dimension of $W_I$ is at most $|I| \le \ell$. Therefore, with probability at least $1-\delta/4$, we have $\sup_{\vu\in W_I: \|\vu \|_2 = 1} |\z \cdot \vu| \le 2\sqrt{\ell}+ 4\sqrt{ \log(4/\gamma)}$, due to Chi-squared concentration.

    Moreover, we have $\|\vsigma^{1/2}(\vw_I- \vw^*_I)\|_2 \le \|\vsigma^{1/2}\vw_I\|_2+\|\vsigma^{1/2} \vw^*_I\|_2$ and also, due to \Cref{theorem:loewner-concentration} applied to the principal submatrix of $\vsigma$ corresponding to $I$, and since $N \ge C(\ell + \log(4/\gamma))$ we have the following with probability at least $1-\gamma/4$:
    \begin{align*}
        \|\vsigma^{1/2} \vw_I\|_2 &\le \frac{2}{\sqrt{N}}\|\dataset \vw_I\|_2 \le \frac{2}{\sqrt{N}}\|\dataset \vw\|_2 + \frac{2}{\sqrt{N}}\|\dataset_{\overline{I}} \vw_{\bar I}\|_2 \\
        &\le 2\, r +  \frac{2}{\sqrt{N}}\Bigr\| \sum_{i\in\bar I} \vw_i \dataset^{(i)} \Bigr\|_2 \\
        &\le 2\, r +  \frac{2}{\sqrt{N}} \sum_{i\in\bar I} |\vw_i|\cdot  \|\dataset^{(i)} \|_2 \\
        &\le 2\, r +  \frac{2}{\sqrt{N}} \max_{i\in\bar I} \|\dataset^{(i)} \|_2 \|\vw_{\bar I}\|_1 \\
        &\le 2\, r +  4\, v\, B\,,
    \end{align*}
    where the last inequality follows from \Cref{theorem:loewner-concentration} (applied to the principal submatrix $\vsigma_{ii}\le v$), as well as the bound on $\|\vw_{\bar I}\|_1$. We obtain the same bound for $\|\vsigma^{1/2}\vw^*_I\|_2$ analogously. Hence, overall, we have the following with probability at least $1-\gamma/2$:
    \begin{equation*}
        (\vw_I-\vw^*_I)\cdot \x \le (4\, r + 4 v \, B + 4 v \|\vw_{\bar I}\|_1) \cdot (2\sqrt{\ell}+ 4\sqrt{ \log(4/\gamma)})
    \end{equation*}
    It remains to bound the second term $(\vw_{\bar I}-\vw^*_{\bar I})\cdot \x$. The following holds with probability at least $1-\gamma/2$ over $\x\sim \Gauss(\vec{0},\vsigma)$:
    \begin{align*}
        (\vw_{\bar I}-\vw^*_{\bar I})\cdot \x \le \|\x\|_\infty (B + \|\vw_{\bar I}\|) \le (B + \|\vw_{\bar I}\|) \cdot \sqrt{2\log (2d/ \gamma)}
    \end{align*}

    Therefore, we may now apply \Cref{theorem:optimistic} with $F(\vw) = O(r + v B + v\|\vw_{\bar I}\|_1) \cdot (\sqrt{\ell}+\sqrt{\log(d/\gamma)})$, which implies the desired result for the choice $N = C\frac{(r + v B)^2}{\eps^2} (\ell+\log(d/\gamma))$.
\end{proof}

\subsection{Putting Everything Together} 

In order to complete the proof of \Cref{thm:low_population_error}, we combine \Cref{thm:find_good_basis_distributional} with \Cref{thm:partial_lasso} and \Cref{theorem:generalization} to bound the population prediction error. 
\begin{proof}[Proof of \Cref{thm:low_population_error}]
    First draw $N_1=O(k\log d)$ independent samples from $\Gauss(\vec{0},\vsigma)$ and learn a $(\ell,k,\delta)$-good preconditioner for $\Gauss(\vec{0},\vsigma)$ with $\ell=O(k^2(\log \log \kappa(\vsigma))^2/\delta)$ using the algorithm from \Cref{thm:find_good_basis_distributional}.

    Define $\uopt\coloneq \vec{B}^{-\top}\wopt$. Applying property (3) from \Cref{defn:good_change_of_basis}, we have that with probability at least $1-\delta$ over the support of $\wopt$, it holds that $\|{\uopt_{\overline{I}}}\|_{\infty}\leq O(1)$. Also, from \Cref{claim:coeff_j}, we have that $\|{\uopt_{\overline{I}}}\|_1\leq O(k)$. Hereafter, we will assume that $\wopt$ satisfies this property (as we only need success probability at least $1-\delta$ over the support of $\wopt$). 

    Let $\widetilde{\vsigma}$ be the matrix $\vec{B}\vsigma \vec{B}^{\top}$. Clearly, for random variable $\vec{x}\sim \Gauss(\vec{0},\vsigma)$, the random variable $\vec{z}=\vec{B}\vec{x}$ is distributed as $\Gauss(0,\widetilde{\vsigma})$. From the properties of a good preconditioner, we have that $\max_{i\in [d]}\widetilde{\vsigma}_{ii}\leq 1$. Recall that the algorithm has sample access to $\slr(\vsigma,\wopt,\sigma)$. By pre-multiplying the covariates by $\vec{B}$, we can get sample access to $\slr(\widetilde{\vsigma},\uopt,\sigma)$. Draw $N=\Omega(k^4(\log \log \kappa(\vsigma))^2\log (d)/(\epsilon^2\delta))$ samples from $\slr(\widetilde{\vsigma},\uopt,\sigma)$ and represent them as a matrix $\vec{Z}\in \R^{N\times d}$ and vector $\labels\in \R^{N}$ where the columns of $\vec{Z}$ correspond to the independent samples from $\Gauss(0,\widetilde{\vsigma})$ and the entries $\labels$ are the labels of the $N$ samples of $\slr(\widetilde{\vsigma},\uopt,\sigma)$. Using \Cref{theorem:loewner-concentration} and from our choice of $N$, we have that $\max_{i\in [d]}\twonorm{\vec{Z}^{(i)}}\leq 2\sqrt{N}$ for all $i\in [d]$ and $\twonorm{\vec{Z}\uopt}\leq 2\sqrt{N}$. Thus, after rescaling by a factor of $2$, we can apply \Cref{thm:partial_lasso} with $B=O(k)$ and $|I|=(k^2/\delta)\cdot (\log \log \kappa(\vsigma))^2$. Thus, we have that as long as $N\geq \Omega(\sigma^2 k^4(\log \log \kappa(\vsigma))^2\log (d)/(\epsilon^2\delta))$, the program from \Cref{thm:partial_lasso} outputs a vector $\widehat{\vu}$ with $\norm{\widehat{\vu}_{\overline{I}}}_{1}\leq O(k)$ such that \[
    \frac{1}{N}\twonorm{\vec{Z}(\uopt-\widehat{\vu})}^2\leq \epsilon.
    \]

    Now, we want to argue that this generalizes to the population. Using \Cref{theorem:generalization} and the fact that (1) $N\geq\Omega(k^4(\log \log \kappa(\vsigma))^2\log (d)/(\epsilon^2\delta))$ and (2) $\max(\norm{\widehat{\vu}_{\overline{I}}}_{1},\norm{{\uopt}_{\overline{I}}}_{1})\leq O(k)$, we have that 
    \[
    \twonorm{\widetilde{\vsigma}^{\frac{1}{2}}(\uopt-\widehat{\vu})}^2\leq 3\epsilon.
    \]
    Now, taking $\widehat{\vw}\coloneq \vec{B}^{\top}\widehat{\vu}$ and substituting $\widetilde{\vsigma}=\vec{B}\vsigma\vec{B}^{\top}$, $\wopt=\vec{B}^{\top}\uopt$, we have that 
    \[
    \twonorm{\vsigma^{\frac{1}{2}}(\wopt-\widehat{\vw})}^2\leq 3\epsilon.
    \]This completes the proof. 
\end{proof}
\section{Lower Bounds for Good Preconditioners}
In this section, we construct a covariance matrix~$\vsigma$ such that any $(\ell, k, \delta)$-good preconditioner for $\Gauss(\vec{0}, \vsigma)$ must satisfy $\ell \ge \Omega(k^2 / \delta)$. This demonstrates that our algorithms in \Cref{thm:find_good_basis,thm:find_good_basis_distributional} are nearly tight, up to a $(\log \log \kappa(\vec{X}))^2$ factor. This lower bound is incomparable to, and not implied by, the one in~\cite{kkmr_lb}, which rules out the existence of preconditioners satisfying stronger properties. Our result is fine-grained, isolating the dependence on $k$ and~$\delta$; note that by \Cref{thm:find_good_basis}, a good preconditioner exists for every design matrix.
\begin{theorem}
\label{thm:good_precond_lower_bound}
    For any integers $\ell,d\geq 1$, $\delta\in (0,1)$ and $k\leq \sqrt{\delta d}$, there exists a positive definite matrix $\vsigma\in \R^{d\times d}$ with $\max_{i\in [d]}\vsigma_{ii}\leq 1$ such that for any constant $c>0$, an $(\ell,k,\delta, (\kappa(\vsigma))^{1-c})$-good preconditioner for $\Gauss(\vec{0},\vsigma)$ must have $\ell\geq \Omega(k^2/\delta)$. 
\end{theorem}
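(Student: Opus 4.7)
The plan is to construct a block-structured covariance $\vsigma$ that forces any good preconditioner to place an index of $I$ inside essentially every block in which a random support is likely to produce an in-block ``collision''.

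Set $m = C \delta d / k^2$ for a sufficiently large constant $C$ (so $m \ge 1$ by the assumption $k \le \sqrt{\delta d}$) and partition $[d]$ into $d/m$ blocks $V_1,\ldots,V_{d/m}$ of size $m$. Fix a constant $\eta = \eta(c) \in (0,1)$ small enough that $2\eta^{c} < 1$ (for instance, $\eta = 2^{-2/c}$). Let $\vsigma$ be the block-diagonal matrix whose $t$-th block equals $\eta \vec{I}_m + (1-\eta)\vec{J}_m$; a direct spectral calculation yields $\vsigma_{ii}=1$, positive definiteness, and $\kappa(\vsigma) = 1/\sqrt{\eta}$, attained by any $2$-sparse difference vector within a block.

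Now fix any candidate $(\ell, k, \delta, \kappa(\vsigma)^{1-c})$-good preconditioner $(\vec{B},I)$ and call a block $V_t$ \emph{uncovered} if $V_t \cap I = \emptyset$. Suppose $S \sim \binom{[d]}{k}$ contains two indices $i_1, i_2$ of some uncovered block $V_t$. Then $\vw = (e_{i_1}-e_{i_2})/\sqrt{2\eta}$ is supported on $S$ and satisfies $\E[(\vw \cdot \vec{x})^2]=1$. By \Cref{claim:coeff_j} together with \Cref{lem:preconditioner_properties}, $(\vec{B}^{-\top}\vw)_{i_1} = \vw_{i_1}/\vec{B}_{i_1 i_1}$. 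Moreover, since $\vec{B}_{i_1}$ is supported on $I\cup\{i_1\}$, Property~1 of \Cref{defn:good_change_of_basis} gives $|\vec{B}_{i_1 i_1}| \le 1/\sqrt{\var(x_{i_1}\mid x_I)}$, and the independence of $x_{i_1}$ from $x_I$ (a consequence of $V_t$ being uncovered) forces $\var(x_{i_1}\mid x_I)=1$, hence $|\vec{B}_{i_1 i_1}| \le 1$. Consequently $|(\vec{B}^{-\top}\vw)_{i_1}| \ge 1/\sqrt{2\eta}$, which strictly exceeds $\kappa(\vsigma)^{1-c} = \eta^{-(1-c)/2}$ precisely because $2\eta^c < 1$. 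So whenever $S$ collides inside an uncovered block, the preconditioner \emph{cannot} be good on $S$.

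It remains to show this ``bad event'' has probability strictly greater than $\delta$ whenever $\ell \le k^2/(2C\delta)$. Each element of $I$ lies in a single block, so the number of uncovered blocks is at least $d/m - \ell \ge d/(2m)$. Let $X$ count the pairs $\{i,j\} \subseteq S$ lying inside the same uncovered block. A direct calculation gives $\E[X] \ge (d/(2m))\binom{m}{2} \cdot k(k-1)/(d(d-1)) = \Omega(C\delta)$. A birthday-style second moment calculation, whose dominant term comes from disjoint pairs (pairs sharing one index contribute an extra $\Theta(1/k)$ factor and are negligible since $m \ll d$), gives $\E[X^2] = O(\E[X] + \E[X]^2)$; Paley--Zygmund then yields $\pr[X \ge 1] \ge 2\delta$ once $C$ is chosen large enough, contradicting goodness of the preconditioner on the event that $X\geq 1$. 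We conclude $\ell = \Omega(k^2/\delta)$.

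The main subtlety is the last step: the constants in the second-moment estimate must be tracked carefully enough to strictly beat $\delta$ (rather than, say, $\delta/2$), and one must confirm that the ``shared-index'' contribution to $\E[X^2]$ is indeed dominated by the disjoint contribution under $m \ll d$. Everything else is a direct application of the structural properties of good preconditioners already established in \Cref{lem:preconditioner_properties} and \Cref{claim:coeff_j}, together with the elementary spectral analysis of the block-diagonal $\vsigma$.
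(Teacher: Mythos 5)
Your proof is correct and uses the same block-diagonal covariance construction as the paper, but it departs from the paper's argument in two genuinely different places. First, the contradiction runs in the opposite (contrapositive) direction: you use Property~1 of \Cref{defn:good_change_of_basis} together with independence of blocks to bound $|\vec{B}_{i_1 i_1}| \le 1$ on every uncovered coordinate, and then observe that the two-sparse difference vector $\vw$ makes $|(\vec{B}^{-\top}\vw)_{i_1}|$ too large for Property~3; the paper instead assumes Property~3 holds for such a $\vw$, deduces that $|\vec{B}_{jj}|$ is large, and contradicts Property~1 via the resulting blow-up of $\E[(\vec{B}_j\cdot \x)^2]$. These are logically equivalent, and your phrasing (conditioning the variance lower bound on $x_I$) is arguably cleaner. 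Second, for the probabilistic step you replace the paper's birthday-paradox estimate (which works with sampling with replacement and pays a $k^2/(2d)$ total-variation tax to transfer back to $\binom{[d]}{k}$) with a Paley--Zygmund argument on the pair-count $X$ computed directly under sampling without replacement. This neatly sidesteps the with/without-replacement bookkeeping. The trade-off is that your version asks for a somewhat careful second-moment computation; the sketch you give is right in spirit, but as written the claim ``pairs sharing one index contribute an extra $\Theta(1/k)$ factor and are negligible since $m\ll d$'' conflates two different comparisons (the shared-index contribution is $\Theta(1/k)$ relative to the disjoint contribution, and $\Theta(m k/d)=\Theta(C\delta/k)$ relative to $\E[X]$); writing this out in full is routine, but the regime where $\delta$ is a constant bounded away from $0$ needs a slightly different wrap-up than Paley--Zygmund giving $\Omega(\E[X])$, since there $\E[X]=\Theta(C)$ is a large constant and one should instead argue $\Pr[X\ge 1]$ is close to $1$ via Chebyshev. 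Neither issue affects the overall structure; both proofs are valid and each has a minor technical convenience the other lacks.
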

\begin{proof}
    Let $\alpha$ be a parameter to be chosen later (we will choose it so that $1/\alpha$ is an integer). Let $Z_1,\ldots Z_t$ for $t=1/\alpha$ be $t$ independent Gaussians sampled from $\Gauss(0,1)$. For $\epsilon>0$, define the variable $X_{i,j}=\sqrt{1-\epsilon}\cdot Z_{i}+\sqrt{\epsilon} \cdot Y_{i,j}$ for $i\in t$, $j\in \alpha d$, where $\{Y_{i,j}\}_{i\in [t],j\in [\alpha d]}$ are independent standard Gaussians. Consider the $d$ dimensional distribution defined by the random variables $\{X_{i,j}\}_{i\in [t],j\in [\alpha d]}$. Clearly, this distribution is equivalent to $\Gauss(\vec{0},\vsigma)$ where $\vsigma$ is a block diagonal matrix of the form 
\[
\vsigma =
\begin{pmatrix}
(1-\epsilon)\vec{J}_{(\alpha d)}+\epsilon \vec{I}_{(\alpha d)} & 0 & \cdots & 0\\
0 & (1-\epsilon)\vec{J}_{(\alpha d)}+\epsilon \vec{I}_{(\alpha d)} & \cdots & 0\\
\vdots & \vdots & \ddots & \vdots\\
0 & 0 & \cdots & (1-\epsilon)\vec{J}_{(\alpha d)}+\epsilon \vec{I}_{(\alpha d)}
\end{pmatrix}.
\]

We will show that there is no $(\ell,k,\delta, (\kappa(\vsigma))^{1-c})$-good preconditioner for $\Gauss(\vec{0},\vsigma)$ with $\ell< 1/(2\alpha)$. We will show this by contradiction. Suppose $(\vec{B},I)$ is an $(1/(2\alpha),k,\delta, (\kappa(\vsigma))^{1-c})$-good preconditioner for $\Gauss(\vec{0},\vsigma)$. We will show that there are rows $j$ for which $\E_{\vec{x}\sim \Gauss(\vec{0},\vsigma)}[(\vec{B}_j \cdot \vec{x})^2]\geq \Omega(1/\epsilon)$, which will contradict property (1) of \Cref{defn:good_change_of_basis} (as $\epsilon$ can be arbitrarily small). 

It is easy to see that $\kappa(\vsigma)=\Theta(\sqrt{1/\epsilon})$. We will henceforth index into $[d]$ using tuples of the form $(i,j)$ where $i\in [1/\alpha]$ and $j\in [\alpha d]$. We say that $X_{i,j}$ and $X_{i,k}$ belong to the same block (they have the same first coordinate). Let $T\coloneq \{i\mid i\in [1/\alpha], \forall j\in [\alpha d], (i,j)\notin I\}$ be the set of blocks where no indices are chosen in $I$. Clearly, we have $|T|\geq 1/(2\alpha)$. Without loss of generality, let $T=[1/(2\alpha)]$. 

We will use a birthday paradox argument. It will be convenient for us to assume that the random set $S$ in property (3) of \Cref{defn:good_change_of_basis} is sampled with replacement. Note that the TV distance between sampling $S$ with replacement and sampling $S$ without replacement is at most $k^2/(2n)$. This is because the probability of collision given $k$ indices chosen at random from $[d]$ with replacements is at most $(\binom{k}{2}/d)\leq k^2/(2n)$. Furthermore, on conditioning on the event of no collision, the distribution is equivalent to sampling without replacement. By our choice of $k$, we can assume that we are sampling with replacement by paying a TV distance penalty of $\delta/2$. 

From the birthday paradox (see, for example, Section~5.1 \cite{mitzenmacher2017probability}), we have that the probability of $k$ random indices drawn with replacement from $[d]$ having at least two indices in the same block  is at least $1-e^{-C k^2/t}$ where $t=1/\alpha$ and $C$ is some large universal constant. Choosing $t= C'k^2/\delta$ for constant $C'$ implies that with probability at least $6\delta$, there is some block where there are two indices chosen. This implies that with probability $3\delta$, there is a collision in the blocks indexed by $T$. Now, by the TV distance argument from the previous paragraph, with probability at least $2\delta$ over random $S\sim \binom{[d]}{ k}$, there is a block in $T$ where two indices are chosen. 

From the previous paragraph and property (3) of \Cref{defn:good_change_of_basis}, we have that there exists at least one set $S$ with $|S|=k$ such that $(i,j)\in S$ and $(i,k)\in S$ for $i\in T$ and for all $\vw$ with $\supp(\vw)=S$, it holds that 
\begin{equation}
    \label{eqn:lowerbound_prop3}
\E_{\vec{x}\sim \Gauss(\vec{0},\vsigma)}[(\vw\cdot \vec{x})^2]\leq 1\implies \norm{\vec{B}^{-\top}\vec{w}_{\overline{I}}}_{\infty}\leq (\sqrt{1/\epsilon})^{1-c}
\end{equation}

Consider $\vw$ formed as follows: $\vw_{(i,j)}=\sqrt{\frac{1}{2\epsilon}}$, $\vw_{(i,k)}=-\sqrt{\frac{1}{2\epsilon}}$ and $0$ everywhere else. From the construction of $\vsigma$, we have that $\E_{\vec{x}\sim \Gauss(\vec{0},\vsigma)}[(\vw\cdot \vec{x})^2]=\E[(Y_{i,j})^2/2]+\E[(Y_{i,k})^2/2]=1$. 

Recall from \Cref{claim:coeff_j} that $((\vec{B}^{-\top})\vw)_{i}=\frac{\vw_{i}}{\vec{B}_{i,i}}$ for all $i\notin I$. Since $(i,j)$ and $(i,k)$ are not in $I$ (from definition of $T$), \Cref{claim:coeff_j} and \Cref{eqn:lowerbound_prop3} implies that $|\vec{B}_{(i,j),(i,j)}|\geq \Omega((1/\epsilon)^{c/2})$ and $|\vec{B}_{(i,k),(i,k)}|\geq \Omega((1/\epsilon)^{c/2})$. Recall from \Cref{lem:preconditioner_properties} that $\supp(\vec{B}_{(i,j)})\subseteq I\cup \{(i,j)\}$. Thus, since $(i,j)\notin I$ and the fact that distinct blocks have independent origin centered Gaussians, we have that 
\[
\E_{\vec{x}\sim \Gauss(\vec{0},\vsigma)}[((\vec{B}_{(i,j)})\cdot \vec{x})^2]\geq |\vec{B}_{(i,j),(i,j)}|^2\geq \Omega((1/\epsilon)^c).
\]

This contradicts property (1) of \Cref{defn:good_change_of_basis} which requires the variance to be constant (as $\epsilon$ can be made arbitrarily small). Thus, we must have $\ell\geq 1/(2\alpha)\geq \Omega(k^2/\delta)$.
\end{proof}
\section{Conclusions}
We showed that sparse linear regression is tractable even for worst-case design matrices when the support of the sparse regression vector is chosen uniformly at random. Our techniques readily extend to any efficiently samplable support distribution with very high min-entropy. In particular, if the support distribution has min-entropy $(k-1)\log d+h$, then almost the same analysis yields sample complexity
 \[
 \poly(\log d,\log \log \kappa(\dataset),1/\delta,1/\epsilon)\cdot (d/2^{h}).
 \]
 This is non-trivial for any $h=\omega(1)$. Note that $h=\Omega(\log d)$ in the uniform distribution case. A natural open question is to determine whether similar results can be established under much weaker assumptions on the support distribution.

Another open question is to improve the dependence on $\epsilon$ in our sample complexity. Due to the use of the ``slow rate'' analysis of the Lasso, our current sample complexity scales as $(1 / \epsilon^2)$. A natural goal is to improve this dependence to $(1 / \epsilon)$.

\section*{Acknowledgements}
We thank Adam Klivans and Vasilis Kontonis for useful discussions during early stages of this project. We thank David Zuckerman for asking a question about the tightness of our result.

\newpage

\bibliographystyle{alpha}
\bibliography{refs}
\end{document}